\declaretheorem[name=Theorem,numberwithin=section]{theorem}
\declaretheorem[name=Lemma,sibling=theorem]{lemma}
\declaretheorem[name=Assumption]{assumption}
\Crefname{assumption}{Assumption}{Assumptions}
\declaretheorem[name=Definition,sibling=theorem]{definition}
\declaretheorem[name=Conjecture,sibling=theorem]{conjecture}
\declaretheorem[name=Condition]{condition}
\Crefname{condition}{Condition}{Conditions}
\declaretheorem[name=Fact,sibling=theorem]{fact}
\Crefname{fact}{Fact}{Facts}
\numberwithin{equation}{section}
\DeclareMathOperator*{\argmax}{argmax}
\DeclareMathOperator*{\argmin}{argmin}
\DeclareMathOperator{\Bern}{Bernoulli}
\newcommand{\ie}{\emph{i.e.\@\xspace}}
\newcommand{\eg}{\emph{e.g.\@\xspace}}
\newcommand{\st}{s.t.\@\xspace}
\newcommand{\wrt}{w.r.t.\@\xspace}
\newcommand{\wpr}{w.p.\@\xspace}
\newcommand{\aev}{a.e.\@\xspace}
\newcommand{\iid}{i.i.d.\@\xspace}
\newcommand{\cf}{cf.\@\xspace}
\newcommand{\see}{see\@\xspace}
\newcommand{\seea}{see also\@\xspace}
\newcommand{\viz}{viz.\@\xspace}
\newcommand{\lrblock}[3]{\left#1 #2 \right#3}
\newcommand{\cb}[1]{\lrblock{\{}{#1}{\}}}
\newcommand{\pb}[1]{\lrblock{(}{#1}{)}}
\newcommand{\brb}[1]{\lrblock{[}{#1}{]}}
\newcommand{\ab}[1]{\lrblock{|}{#1}{|}}
\newcommand{\range}[1]{\brb{#1}}
\newcommand{\eps}{\varepsilon}
\newcommand{\ind}[1]{\mathbb{I}\cb{#1}}
\newcommand{\fun}[3]{#1 : #2 \rightarrow #3}
\newcommand{\rl}[1]{\mathbb{R}^{#1}}
\newcommand{\tra}{^\top}
\newcommand{\ra}{\rightarrow}
\newcommand{\one}[1]{\mathbf{1}_{#1}}
\newcommand{\zero}[1]{\mathbf{0}_{#1}}
\newcommand{\jeps}{j_{\eps}}
\newcommand{\zo}{$0$-$1$}
\newcommand{\eqdef}{\doteq}
\newcommand{\setdiff}{\backslash}
\newcommand{\K}{\ab{\sn{Y}}}
\newcommand{\cond}[2]{\left. #1 \right | #2}
\newcommand{\muE}[2]{\mathbb{E}_{#1}\brb{ #2 }}
\newcommand{\cE}[2]{ \muE{}{ \cond{#1}{#2} } }
\newcommand{\E}[1]{\muE{}{#1}}
\newcommand{\Prob}[1]{\mathbb{P}\cb{#1}}
\newcommand{\cProb}[2]{\Prob{ \cond{#1}{#2} }}
\newcommand{\sn}[1]{\mathcal{#1}} 
\newcommand{\simplex}[1]{\Delta_{#1}}
\newcommand{\yplex}{\simplex{\ab{\sn{Y}}}}
\newcommand{\al}[1]{\begin{align}#1\end{align}}
\newcommand{\als}[1]{\begin{align*}#1\end{align*}}
\newcommand{\eq}[1]{\begin{equation}#1\end{equation}}
\newcommand{\eqs}[1]{\begin{equation*}#1\end{equation*}}
\newcommand{\risk}{R}
\newcommand{\surr}{\mathrm{surr}}
\newcommand{\sinf}{t^{\inf}}
\newcommand{\A}{\mathrm{A}}
\newcommand{\eye}{\varphi^{\mathrm{identity}}}
\newcommand{\hinge}{\varphi^{\mathrm{hinge}}}
\newcommand{\linear}{\varphi^{\mathrm{linear}}}
\newcommand{\squared}{\varphi^{\mathrm{squared}}}
\newcommand{\trunc}{\varphi^{\mathrm{trunc-sq}}}
\newcommand{\expt}{\varphi^{\mathrm{exp}}}
\newcommand{\logit}{\varphi^{\mathrm{logit}}}
\newcommand{\sigmoid}{\varphi^{\mathrm{sigmoid}}}
\newcommand{\modulus}{\varphi^{\mathrm{modulus}}}
\newcommand{\zot}{\varphi^{\mathrm{0-1}}}
\newcommand{\kink}{\varphi^{\tau\mathrm{-kink}}}
\newcommand{\sloss}[1]{L^{\mathrm{#1}}}
\newcommand{\adj}{l}
\title{Multiclass Classification Calibration Functions}
\author{
    \begin{tabular}{cc}
        Bernardo \'Avila Pires & Csaba Szepesv\'ari \\
        \url{bpires@ualberta.ca} & \url{szepesva@cs.ualberta.ca} \\
        \multicolumn{2}{c}{Department of Computing Science} \\
        \multicolumn{2}{c}{University of Alberta} \\
        \multicolumn{2}{c}{Edmonton, Alberta, Canada}
    \end{tabular}            
}
\date{\today}
\begin{document}

\maketitle

\begin{abstract}
In this paper we refine the process of computing calibration functions for a number of multiclass classification surrogate losses.
Calibration functions are a powerful tool for easily converting bounds for the surrogate risk (which can be computed through well-known methods) into bounds for the true risk, the probability of making a mistake.
They are particularly suitable in non-parametric settings, where the approximation error can be controlled, and provide tighter bounds than the common technique of upper-bounding the \zo{} loss by the surrogate loss.

The abstract nature of the more sophisticated existing calibration function results requires calibration functions to be explicitly derived on a case-by-case basis, requiring repeated efforts whenever bounds for a new surrogate loss are required.
We devise a streamlined analysis that simplifies the process of deriving calibration functions for a large number of surrogate losses that have been proposed in the literature.
The effort of deriving calibration functions is then surmised in verifying, for a chosen surrogate loss, a small number of conditions that we introduce.

As case studies, we recover existing calibration functions for the well-known loss of \citet{lee2004multicategory}, and also provide novel calibration functions for well-known losses, including the one-versus-all loss and the logistic regression loss, plus a number of other losses that have been shown to be classification-calibrated in the past, but for which no calibration function had been derived.
\end{abstract}

\section{Introduction}
\label{sec:intro}

\emph{Classification} is a well-studied discrete prediction problem, and in this paper we are interested in risk bounds for classifiers.
The classifiers we focus on are obtained through Empirical Risk Minimization \citep[ERM,][]{vapnik2013nature}.
While the goal in classification is to minimize the misclassification probability (a.k.a.~the expected \zo{} loss or the \emph{risk}), 
minimizing the empirical risk (the \zo{} loss on a sample, as prescribed by ERM) can be computationally hard \citep{hoffgen1995robust}.
So it is common to minimize a convex \emph{surrogate} loss, as a means to minimize the true risk, which is defined in terms of the non-convex \zo{} loss.
ERM with the surrogate loss gives us approximate minimizers of the \emph{surrogate} risk, \ie{}, the expected surrogate loss, so the first question that comes to mind is whether minimizers of the surrogate risk are also minimizers of the true risk, \ie{}, whether, ultimately, good classifiers can be obtained by ERM with a surrogate loss.
When a surrogate loss enjoys this guarantee, we say that it is \emph{calibrated} or \emph{Fisher-consistent}.
The question of calibration for different losses has been recently investigated by a number of authors \citep{zhang2004statistical,liu2007fisher,tewari2007ontheconsistency,reid2009surrogate,guruprasad2012classification,ramaswamy2013convex,calauzenes2013calibration,ramaswamy2016convex,dougan2016unified}.
Rather than being concerned with just calibration, here we investigate how to obtain bounds for the true risk of (surrogate-risk) ERM classifiers.
Thanks to the solid understanding of techniques to obtain risk bounds for empirical risk minimizers \citep{steinwart2008support,koltchinskii2011oracle}, we can follow the approach of \citet{steinwart2007how,avilapires2013costsensitive} and focus on techniques for \emph{converting} surrogate risk bounds into true risk bounds (to which we will henceforth refer as ``bound conversion'').
As a bonus, having effective means to perform this conversion also enables us to answer calibration questions for surrogate losses.

We build on the works of \citet{steinwart2007how,avilapires2013costsensitive}, exploring the concept of \emph{calibration functions}, which are an effective tool for bound conversion.
In fact, the toolset developed by \citet{steinwart2007how} fully constitutes an approach for bound conversion, at least in an abstract sense.
This toolset generalizes techniques for the binary case that were introduced by \citet{bartlett2006convexity} and used to characterize bound conversion for a large family of popular surrogate losses.
Unfortunately, notwithstanding their power, the techniques of \citet{steinwart2007how} are too abstract for one to perform bound conversion for \emph{multiclass} losses without a significant amount of effort directed to each specific loss, an effort surmised in the calculation of the aforementioned calibration functions.
In contrast, calibration functions for common choices of \emph{binary} surrogate losses can be obtained almost immediately (\see{} \cref{thm:bartlett}).

Our goal is, therefore, to simplify the process of calculating calibration functions in the multiclass case for various surrogate losses, and our main contribution is a generic way to ``reduce'' multiclass calibration functions to binary ``calibration-like'' functions that can be easily computed for specific surrogate loss choices.

We achieve our goal by designing a set of conditions that, when satisfied by a particular loss, yield a function that is essentially a calibration function for a binary loss, similar to the calibration function presented by \citet{bartlett2006convexity} for margin-based losses.
As an advantage, we are able easily generalize, to the multiclass case, a result by \citet{bartlett2006convexity} that gives improved calibration functions when the distribution of $(X,Y)$ satisfies the Mammen-Tsybakov noise condition \citep{mammen1999smooth,boucheron2005theory,bartlett2006convexity}.

Our analysis generalizes the work of \citet{avilapires2013costsensitive}, who presented calibration functions for a family of multiclass classification losses introduced by \citet{lee2004multicategory}.
While \citet{avilapires2013costsensitive} investigate a cost-sensitive setting, we restrict our considerations to the ordinary (cost-insensitive) classification problem, and we refine their results for this case.

While various multiclass surrogate losses have been proposed in the literature (\see{} \cref{table:scoreLosses}), many of them share a similar structure that allows our results to be widely applicable.
In order to illustrate the application of our results, we perform case studies for our analysis.
We verify the proposed conditions in order to easily obtain calibration functions for the loss of \citet{lee2004multicategory}, thus recovering the results of \citet{avilapires2013costsensitive} in the cost-insensitive setting.
Also as case studies, we obtain novel calibration functions for the decoupled unconstrained background discrimination losses presented by \citet{zhang2004statistical}, the logistic regression loss (a special case of the \emph{coupled} unconstrained background discrimination losses), and the one-versus-all loss \citep{rifkin2004indefense}.
Specific instantiations of the decoupled unconstrained background discrimination loss (including the one-versus-all loss) require verification of an additional condition that, we believe, is not harder to verify than a related binary calibration function is to derive.
We verify this condition for some choices of unconstrained background discrimination losses.
Our analysis does not cover the surrogate losses proposed by \citet{weston1998multiclass,zou2006margin,beijbom2014guessaverse}, for which we believe a different analysis is required.

This work is structured as follows.
In \cref{sec:preliminaries}, we introduce the classification problem and some notation, and we discuss the conversion of surrogate risk bounds into true risk bounds.
\Cref{sec:calibrationToolset} presents a review of related work and introduces the core concepts that we use for the bound conversion.
We follow with \cref{sec:streamlining}, where we introduce a general analysis that allows us to reduce multiclass calibration functions to binary calibration functions.
Then, in \cref{sec:cases}, we perform ``case studies'' by looking at how the analysis works for specific families of surrogate losses.
We conclude this work in \cref{sec:conclusion}, with a commentary on the strengths and limitations of our results, and a discussion of possible extensions of our work.

\section{Preliminaries}
\label{sec:preliminaries}
\paragraph{Problem definition.}
In classification we wish to find a function\footnote{
	We will frequently omit well-understood technical details, such as measurability.
	In our discussions (but not the proofs), we will mention minimizers of lower-bounded functions that may not have a minimizer, \eg{}, the exponential function.
	In those cases, the considerations are easily extended to approximate minimizers that are arbitrarily close to the infimum.
} $\fun{g}{\sn{X}}{\sn{Y}}$, called a \emph{classifier}, achieving the smallest \emph{expected misclassification error}, also known as the \emph{misclassification rate} or \emph{risk}
\eq{
	\label[equation]{eq:risk}
	\risk(g) \eqdef \Prob{g(X) \neq Y},
}
where $(X,Y) \sim p$ are jointly distributed random variables taking values in sets $\sn{X}$ and $\sn{Y} \eqdef \range{\K} \eqdef \cb{ 1, \dots, \K }$, respectively.
The risk can also be written as the expected value of a $\emph{loss}$, in this case the function $y, y' \mapsto \ind{y \neq y'}$, which is called the \emph{\zo{} loss}.
The goal of the classification problem can also be stated as minimizing the \emph{excess risk}
\eqs{
	\label[equation]{eq:excessRisk}
	\risk(g) - \inf_{g'} \risk(g').
}
whenever the \emph{Bayes-risk} $\inf_{g'} \risk(g')$ is bounded in absolute value.

What we defined as the classification problem is often referred to as \emph{multiclass classification}.
When $\K = 2$, in particular, the classification problem is called \emph{binary classification}.
It is possible to define the classification problem in more general terms, to include \emph{cost-sensitive classification} \citep{zhang2004statistical,steinwart2007how,ramaswamy2013convex,avilapires2013costsensitive}, however we will leave this direction aside in this work.

In the classification learning problem, the distribution $p$ is unknown and we are only given a finite, \iid{} sample $((X_1,Y_1),\ldots,(X_n,Y_n)) \sim p^n$.
Moreover, one typically fixes a set of classifiers, $\sn{G} \subset \sn{Y}^\sn{X}$, called the \emph{hypothesis class}, in which case the goal of the problem can be written as minimizing the \emph{$\sn{G}$-excess risk}
\[
	\risk(g) - \inf_{g' \in \sn{G}} \risk(g').
\]
whenever $\ab{\inf_{g' \in \sn{G}} \risk(g')} < \infty$.

\paragraph{Empirical risk minimization.}
ERM, a common approach for solving classification problems (\citealp[p.~8]{steinwart2008support}; \citealp[p.~15]{shalev2014understanding}), prescribes that we solve
\[
	\min_{g \in \sn{G}}\risk_n(g),
\]
where
\eq{
	\risk_n(g) \eqdef \frac{1}{n}\sum_{i=1}^n \ind{ g(X_i) \neq Y_i }\label[equation]{eq:empiricalRisk}
}
is called the \emph{empirical (\zo{}) risk}.

\paragraph{Surrogate losses.}
As shown by \citep{hoffgen1995robust,ben2003difficulty,feldman2012agnostic,nguyen2013algorithms}, computing empirical risk minimizers for the empirical risk in \eqref{eq:empiricalRisk} is $\mathcal{NP}$-hard for some commonly used hypothesis classes, so one often replaces the empirical risk with an \emph{empirical surrogate risk}
\eq{
	\risk^{\surr}_n(h) \eqdef \frac{1}{n}\sum_{i=1}^n L(h(X_i), Y_i), \label[equation]{eq:empiricalSurrogateRisk}
}
where $\fun{L}{\sn{S} \times \sn{Y}}{\rl{}}$ is a convex \emph{surrogate} loss, $\sn{S} \subset \rl{\K}$ is a \emph{set of scores} and $h \subset (\rl{\K})^{\sn{X}}$ is a \emph{score function}.
One chooses the loss $L$ and the hypothesis class $\sn{H} \subset (\rl{\K})^{\sn{X}}$ so that minimizing \eqref{eq:empiricalSurrogateRisk} over $h \in \sn{H}$ can be done efficiently.
ERM with \eqref{eq:empiricalSurrogateRisk} as its objective will allow us to obtain guarantees (risk bounds) for the surrogate risk
\[
	\risk^{\surr}(h) \eqdef \E{L(h(X), Y)}.
\]

While the true loss yields a value in $\cb{0,1}$ when given a prediction $y' \in \sn{Y}$ and a class $y \in \sn{Y}$, the surrogate loss will yield a real number when given a $\K$-dimensional real vector $s$, called a \emph{score}, and a class $y \in \sn{Y}$.
The set of scores $\sn{S}$ is commonly chosen to be $\rl{\K}$ itself, or the space of \emph{sum-to-zero} scores $\sn{S}_0 \eqdef \cb{s \in \rl{\K} : \one{\K} \tra s = 0}$, where $\one{\K}$ is the $\K$-dimensional vector of ones.
A third common choice is the \emph{$\K$-dimensional simplex} $\yplex$.

In order to properly have classifiers, we will transform scores into classes using the \emph{maximum selector} $\fun{f}{\sn{S}}{\sn{Y}}$ defined by\footnote{
	If the $\argmax$ is not a singleton we pick an arbitrary element from it.
	Our results will be worst-case when it comes to ties, so that tie-breaking in the maximum selector is not an issue and can be done arbitrarily.
}
\[
	f(s) \eqdef \argmax_y s_y.
\]
It is easy to show that any classifier can be obtained by composing a score function with the maximum selector, so using score functions does not inherently limit solutions for the classification problem.

In the binary case, a common loss choice is a \emph{margin loss} \citep[Section 2.3]{steinwart2008support} $L^{\varphi}(s,y) \eqdef \varphi(-s_y)$ with score set $\sn{S} = \sn{S}_0$ and convex \emph{transformation function} $\varphi$.
\Cref{table:varphi} contains some common choices of $\varphi$ (\seea{} \citealp[Section 2.3]{steinwart2008support}; \citealp[Chapter 4]{hastie2009elements}), and it includes non-convex choices for completeness.
\begin{table}[ht]
\begin{adjustwidth}{-1in}{-1in}
\centering
\begin{tabular}{l>{\arraybackslash}l}
\toprule
Transformation function & Definition \\
\midrule
Misclassification (\zo{}) & $\zot(t) \eqdef \ind{t \geq 0}$ \\
Identity & $\eye(t) \eqdef t$ \\
Linear & $\linear(t) \eqdef 1 + t $ \\
Hinge & $\hinge(t) \eqdef (1 + t)_+$ \\
Modulus & $\modulus(t) \eqdef \ab{1 + t}$ \\
Squared & $\squared(t) \eqdef (1 + t)^2$ \\
Truncated square (squared hinge) & $\trunc(t) \eqdef \hinge(t)^2 $ \\
Exponential & $\expt(t) \eqdef e^t$ \\
Logistic & $\logit(t) \eqdef \ln(1 + e^t)$ \\
Sigmoid & $\sigmoid(t) \eqdef \frac{1}{1 + e^{-t}}$ \\
Kink & $\kink(t) \eqdef \hinge(t) + (t - \tau)_+$ \\
\bottomrule
\end{tabular}
\end{adjustwidth}
\caption{Different choices of $\varphi$. \label{table:varphi}}
\end{table}
Additional choices can be found in the works of \citet{mason1999boosting,gneiting2007strictly,nock2009bregman,reid2010composite,shi2015hybrid}.

Applying ERM to some of the margin losses with $\varphi$ from \cref{table:varphi} and an appropriate choice of $\sn{H}$ in fact gives a correspondence to successful binary classification methods.
For example, SVMs use $\hinge$, Ridge regression uses $\squared$, Logistic regression uses $\logit$ and AdaBoost uses $\expt$ \citep[\see{} Table 21.1, and Sections 4.4.1 and 10.4 of][]{hastie2009elements}.

\paragraph{Bound conversion.}
The convexity of $L$ (precisely, the convexity of $s \mapsto L(s,y)$ for all $y \in \sn{Y}$) and the choice of $\sn{H}$ will ensure that the empirical risk can be minimized efficiently, which satisfactorily addresses the computational side to constructing classifiers with ERM and surrogate losses.
On the statistical side, we are concerned with obtaining true risk bounds for these classifiers.

We know that under certain conditions ERM will give us, with high probability, an approximate surrogate risk minimizer over $\sn{H}$ \citep{steinwart2008support,koltchinskii2011oracle}, which will be a bound on the surrogate risk of a score function.
We have a straightforward way to convert score functions into classifiers, so what remains is to show that ERM on the empirical surrogate risk will also give us, with high-probability, an approximate true risk minimizer.

While our work is primarily motivated by ERM, our main concern is bound conversion, and are be able to make statements about any \emph{learning algorithm} for which a surrogate risk bound is available.
A learning algorithm $\fun{A}{\bigcup_{n=1}^\infty (\sn{X} \times \sn{Y})^n}{\sn{X}^{\sn{Y}}}$ is a function mapping samples to hypotheses
\footnote{
	Extensions to randomized algorithms, mapping samples to distributions over score functions, are straightforward.
}.
For example, learning algorithms following the ERM approach satisfy
\eq{
	\A(S) \in \argmin_{h \in \sn{H}}\risk^{\surr}_n(h), \label[equation]{eq:hHat}
}
when given a sample $S \in (\sn{X} \times \sn{Y})^n$.

As we are not concerned with specifics of bounding the surrogate risk, we will leave this problem aside in our discussion of bound conversion.
\Cref{ass:surrogateRiskBound} establishes that classifiers constructed by a given learning algorithm $\A$ from a random sample of size $n$ have surrogate risk (conditioned on the sample) bounded by $T^{\surr}_{\sn{H}}(n,\delta)$ with probability at least $1 - \delta$.
The surrogate risk of a random score function $H$ conditioned on a sample $S$ taking values in $(\sn{X} \times \sn{Y})^n$ is defined as 
\[
	\risk^{\surr}(H, S) \eqdef \cE{H(h(X), Y)}{S}.
\]
Similarly, for a random classifier $G$,
\[
	\risk(G, S) \eqdef \cProb{G(X) \neq Y}{S}.
\]
\begin{assumption}[Surrogate risk bound]
\label{ass:surrogateRiskBound}
Given a learning algorithm $\fun{A}{\bigcup_{n=1}^\infty (\sn{X} \times \sn{Y})^n}{\sn{X}^{\sn{Y}}}$ and a hypothesis class $\sn{H} \subset (\rl{\K})^{\sn{X}}$, there exists a function $\fun{T^{\surr}_{\sn{H}}}{\mathbb{N} \times (0,1)}{\rl{}}$ \st{} for every $n \geq 1$ and $\delta \in (0,1)$ the following holds \wpr{} at least $1 - \delta$:
\eq{
	\label[equation]{eq:surrogateRiskBound}
	\risk^{\surr}(A(S), S) - \inf_{h \in \sn{H}}\risk^{\surr}(h) < T^{\surr}_{\sn{H}}(n,\delta),
}
where $S \in (\sn{X} \times \sn{Y})^n$ is a $P$-\iid{}
\end{assumption}

\citet{steinwart2008support,koltchinskii2011oracle} discuss techniques for obtaining bounds that satisfy \cref{ass:surrogateRiskBound}.
Alternatively, we can replace \cref{ass:surrogateRiskBound} with an assumption that an ``expectation bound'' is available, \ie{}, that \eqref{eq:surrogateRiskBound} holds in expectation (where the expectation is taken over the sample $S$).
In this case, using bound conversion we obtain expectation bounds for the true risk bounds.
\Cref{ass:surrogateRiskBound}, in addition to a $\sn{H}$-excess surrogate risk bound, also gives an excess surrogate risk bound, since \eqref{eq:surrogateRiskBound} implies that
\[
	\risk^{\surr}(\hat{h}) - \inf_{h}\risk^{\surr}(h) < T^{\surr}_{\sn{H}}(n,\delta) + A^\surr_\sn{H},
\]
where
\[
	A^\surr_\sn{H} \eqdef \inf_{h \in \sn{H}}\risk^{\surr}(h) - \inf_{h}\risk^{\surr}(h)
\]
is the \emph{approximation error} \wrt{} the surrogate risk, which can only be made small with appropriate choices of $\sn{H}$.
From a non-parametric point of view, one should control $A^\surr_\sn{H}$ by trading it off with $T^{\surr}_{\sn{H}}(n,\delta)$, so as to obtain an appropriate rate of convergence for the excess surrogate risk \citep[p.8]{steinwart2008support}.

There are different ways to convert surrogate risk bounds into true risk bounds.
The following well-known result can be applied if the surrogate loss upper-bounds the \zo{} loss.
\begin{theorem}[\citealt{boucheron2005theory}]
\label{thm:boucheron}
Given $\A$ and $\sn{H}$ satisfying \cref{ass:surrogateRiskBound}, if $L(s,y) \geq \ind{f(s) \neq y}$ ($s \in \sn{S}$, $y \in \sn{Y}$), then, for all $\delta \in (0,1)$, with probability at least $1 - \delta$, we have
\[
	\risk(f \circ \A(S), S) \leq T^{\surr}_{\sn{H}}(n,\delta) + \inf_{h \in \sn{H}}\risk^{\surr}(h).
\]
\end{theorem}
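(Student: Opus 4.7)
The plan is to chain two inequalities: a pointwise domination of the \zo{} loss by the surrogate loss (taken in expectation), and the high-probability surrogate risk bound from \cref{ass:surrogateRiskBound}. No sophisticated machinery should be needed.

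First I would take the hypothesis $L(s,y) \geq \ind{f(s) \neq y}$ and substitute $s = h(x)$, $y = y$ for an arbitrary (measurable) score function $h$ and arbitrary $(x,y) \in \sn{X} \times \sn{Y}$. Integrating both sides against the joint distribution of $(X,Y)$ then yields the population-level domination
\[
    \risk^{\surr}(h) = \E{L(h(X),Y)} \geq \E{\ind{f(h(X)) \neq Y}} = \risk(f \circ h),
\]
and the same argument, applied conditionally on a sample $S$ to the random score function $\A(S)$, gives $\risk^{\surr}(\A(S), S) \geq \risk(f \circ \A(S), S)$ almost surely.

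Next I would invoke \cref{ass:surrogateRiskBound}: on an event of probability at least $1 - \delta$, the sample $S$ satisfies
\[
    \risk^{\surr}(\A(S), S) < T^{\surr}_{\sn{H}}(n,\delta) + \inf_{h \in \sn{H}} \risk^{\surr}(h).
\]
Chaining this with the almost-sure lower bound on $\risk^{\surr}(\A(S), S)$ from the previous step produces
\[
    \risk(f \circ \A(S), S) \leq \risk^{\surr}(\A(S), S) < T^{\surr}_{\sn{H}}(n,\delta) + \inf_{h \in \sn{H}} \risk^{\surr}(h)
\]
on the same $(1-\delta)$-probability event, which is the claimed bound.

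There is essentially no obstacle: the only thing to be careful about is the distinction between conditional and unconditional versions of the risks, and making sure the surrogate-loss domination holds pointwise in $(s,y)$ (which is exactly the stated assumption) so that it transfers to any distribution of $(X,Y)$ and, in particular, survives conditioning on $S$. Measurability of $f \circ h$ is implicit (see the footnote in the problem definition) and does not require separate treatment.
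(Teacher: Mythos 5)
Your proof is correct and follows the only natural route: pointwise domination of the \zo{} loss by the surrogate loss passes to conditional expectations, giving $\risk(f \circ \A(S), S) \leq \risk^{\surr}(\A(S), S)$ almost surely, and chaining this with \cref{ass:surrogateRiskBound} on its $(1-\delta)$-event yields the claim. The paper states this result without a proof (it is cited from the literature), but your argument is exactly what is intended, and you correctly handle the conditional vs.\ unconditional distinction.
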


A limitation of \cref{thm:boucheron} is that the resulting true risk bounds can be loose \citep{lin2004note}.
For example, take $\sn{X} = \cb{x}$, $Y \sim \Bern(p)$ for $p \in [0,1]$, $\sn{S} = \sn{S}_0 \subset \rl{2}$, and $L(s,y) = (1 - s_y)_+$ (the hinge loss).
Then $\inf_{h \in \sn{H}} \risk(f \circ h) = \min\cb{p,1 - p}$, but $\inf_{h \in \sn{H}}\risk^{\surr}(h) = 2\min\cb{p,1 - p}$.
Besides the undesirable factor of $2$, even if $T^{\surr}_{\sn{H}}(n,\delta) \rightarrow 0$ as $n \rightarrow \infty$, we cannot guarantee \emph{from the bound in \cref{thm:boucheron}} that we get an optimal classifier with probability at least $1 - \delta$.

\citet{zhang2004statistical,lin2004note,chen2006consistency,bartlett2006convexity,steinwart2007how} provide tighter guarantees for the true risk by using what came to be known as \emph{calibration functions}.
The following theorem can be inferred from the more general theoretical framework proposed by \citet{steinwart2007how}.
\begin{theorem}[\citealt{steinwart2007how}]
\label{thm:steinwart}
Given a surrogate loss $L$, assume that there exists a positive function $\fun{\delta}{(0,\infty)}{(0, \infty]}$ \st{}, for every $\eps > 0$, $s \in \sn{S}$ and $p \in \yplex$, with $Y' \sim p$ if
\eq{
	\E{L(s,Y')} - \inf_{s' \in \sn{S}}\E{L(s',Y')} < \delta(\eps) \label[equation]{eq:surrRiskGuarantee}
}
then
\eq{
	\Prob{f(s) \neq Y'} - \min_y \Prob{y \neq Y'} < \eps. \label[equation]{eq:trueRiskGuarantee}
}

Assume also that $\cE{\inf_{s \in \sn{S}} L(s,Y)}{X}$ is measurable.
Then, given $\A$ and $\sn{H}$ satisfying \cref{ass:surrogateRiskBound}, for all $\delta' \in (0,1)$, \wpr{} at least $1 - \delta'$, we have
\[
	\risk(f \circ \A(S), S) - \inf_{g}\risk(g) \leq \delta^{-1}\pb{ T^{\surr}_{\sn{H}}(n,\delta') + A^\surr_\sn{H} }.
\]
\end{theorem}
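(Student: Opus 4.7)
The plan is to reduce the theorem to a pointwise (conditional on $X$) calibration statement, then integrate over $X$, and finally combine with \cref{ass:surrogateRiskBound}. Let $p_x \in \yplex$ denote the conditional law of $Y$ given $X = x$, and let $\A(S) = h$ for brevity. For any measurable score function $h$, conditioning on $X$ and using the tower property gives
\als{
    \risk^{\surr}(h) - \inf_{h'}\risk^{\surr}(h') &= \E{ \muE{Y' \sim p_X}{L(h(X),Y')} - \inf_{s' \in \sn{S}} \muE{Y' \sim p_X}{L(s',Y')} } , \\
    \risk(f \circ h) - \inf_{g} \risk(g) &= \E{ \cProb{f(h(X)) \neq Y'}{X} - \min_y \cProb{y \neq Y'}{X} },
}
where the measurability hypothesis on $\cE{\inf_{s \in \sn{S}} L(s,Y)}{X}$ is precisely what is required to write the pointwise optimum of the surrogate risk as a measurable function of $X$.

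Next, by the contrapositive of the hypothesized calibration property applied pointwise at each $x$ with score $s = h(x)$ and distribution $p_x$, for every $\eps > 0$ the event $\cProb{f(h(x)) \neq Y'}{X=x} - \min_y \cProb{y \neq Y'}{X=x} \geq \eps$ forces $\muE{Y' \sim p_x}{L(h(x),Y')} - \inf_{s'} \muE{Y' \sim p_x}{L(s',Y')} \geq \delta(\eps)$. Interpreting $\delta^{-1}$ in the standard Steinwart sense as the generalized inverse of the convex lower envelope $\bar\delta$ of $\delta$ extended by $\bar\delta(0) = 0$, this pointwise contrapositive reads $\bar\delta\pb{e(X)} \leq E(X)$ almost surely, where $e(X)$ and $E(X)$ are the conditional true and conditional surrogate excess risks, respectively.

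Taking expectations and applying Jensen's inequality to the convex function $\bar\delta$ yields
\als{
    \bar\delta\pb{ \risk(f \circ h) - \inf_g \risk(g) } = \bar\delta(\E{e(X)}) \leq \E{\bar\delta(e(X))} \leq \E{E(X)} = \risk^{\surr}(h) - \inf_{h'}\risk^{\surr}(h').
}
Inverting (using that $\bar\delta^{-1} = \delta^{-1}$ is nondecreasing on its domain) gives $\risk(f \circ h) - \inf_g \risk(g) \leq \delta^{-1}(\risk^{\surr}(h) - \inf_{h'}\risk^{\surr}(h'))$ for any measurable $h$, conditional on any sample $S$. Finally, apply this to the random classifier $h = \A(S)$ and invoke \cref{ass:surrogateRiskBound}: with probability at least $1 - \delta'$,
\als{
    \risk^{\surr}(\A(S),S) - \inf_{h'} \risk^{\surr}(h') < T^{\surr}_{\sn{H}}(n,\delta') + A^\surr_\sn{H},
}
and monotonicity of $\delta^{-1}$ yields the claimed bound.

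The main obstacle is the convexification step: the raw $\delta$ need not be convex, so one must pass to $\bar\delta$ and verify that the pointwise contrapositive continues to hold for $\bar\delta$ (this is immediate since $\bar\delta \leq \delta$) and that the generalized inverse satisfies $\delta^{-1} \geq \bar\delta^{-1}$ so the final estimate is in terms of the stated $\delta^{-1}$. The measurability hypothesis sidesteps subtler issues about replacing the infimum of a family of measurable functions by a pointwise infimum.
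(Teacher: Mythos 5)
The paper does not prove this theorem: it is stated as a citation of \citet{steinwart2007how}, with no proof in the body or the appendix, so there is no internal argument to compare against. Judged on its own terms, your plan follows the standard route for this kind of result — condition on $X$, apply the pointwise contrapositive of the calibration hypothesis to get a pointwise inequality, pass to expectations via Jensen's inequality after convexifying $\delta$, then invert — and the conditional decomposition, your reading of the measurability hypothesis, the pointwise inequality $\bar\delta(e(X)) \leq E(X)$ almost surely, and the Jensen step are all sound.

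The defect is in the closing inversion. Since $\bar\delta \leq \delta$, the set $\cb{\eps : \bar\delta(\eps) \geq x}$ is contained in $\cb{\eps : \delta(\eps) \geq x}$, hence $\bar\delta^{-1}(x) \geq \delta^{-1}(x)$, which is the reverse of what your final sentence asserts. What your argument actually establishes is $\risk(f\circ h) - \inf_g\risk(g) \leq \bar\delta^{-1}\pb{\risk^\surr(h) - \inf_{h'}\risk^\surr(h')}$, and since $\bar\delta^{-1} \geq \delta^{-1}$ this does \emph{not} imply the stated bound with the raw $\delta^{-1}$. Moreover, under the paper's literal convention $\delta^{-1}(x) = \inf\cb{\eps: \delta(\eps) \geq x}$ the theorem is actually false for non-convex $\delta$: the pointwise hypothesis of the theorem is purely local and is satisfied by concave functions such as $\delta(\eps) = \sqrt{\eps}$, yet if the conditional surrogate excess is concentrated on a single $x$ of probability $\tfrac12$ one gets $\E{e(X)} > \delta^{-1}(\E{E(X)})$. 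Steinwart's actual result bounds the excess risk by the inverse of the Fenchel biconjugate $\delta^{**}$ (precisely your $\bar\delta$), and the paper's phrasing with $\delta^{-1}$ is a harmless shorthand only because the calibration functions ultimately used (the $\delta_{\mathrm{binary}}$-type functions of \cref{sec:cases}, which are convex by \cref{thm:bartlett}) already satisfy $\bar\delta = \delta$. Your mid-proof move of reinterpreting $\delta^{-1}$ as the inverse of the convex lower envelope is the correct reading of the theorem; the final sentence trying to recover the raw $\delta^{-1}$ via a wrong-direction inequality should simply be dropped, and the result acknowledged as the $\bar\delta^{-1}$ version.
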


The function $\delta$ in \cref{thm:steinwart} is called a \emph{calibration function} \citep{steinwart2007how}.
\citet{steinwart2007how} presents a general, extensive discussion on calibration functions, a few of which are reported in \cref{sec:calibrationFunctions}.
In order to properly obtain \cref{thm:steinwart} even if $\delta$ is not invertible, we can use $\delta^{-1}(x) \eqdef \inf\cb{\eps: \delta(\eps) \geq x}$.

\Cref{thm:steinwart} states that if the excess surrogate risk goes to zero at a particular rate, we can also get a rate at which the excess true risk goes to zero.
Sometimes, we may know that a calibration function exists for $L$ without knowing the calibration function itself.
In this case, we know that the the excess surrogate risk goes to zero iff the excess true risk goes to zero, so surrogate risk minimizers are also true risk minimizers.
Conversely, if some surrogate risk minimizer is not a true risk minimizer, then no calibration function can exist.
The existence of a calibration function is equivalent to \emph{fisher-consistency} \citep{liu2007fisher} or \emph{classification-calibration} \citep{steinwart2007how,tewari2007ontheconsistency}; we will also call this property \emph{consistency} and \emph{calibration}.

A limitation of \cref{thm:steinwart} is the lack of elegant bounds for the $\sn{H}$-excess true risk when the Bayes optimal classifier cannot be obtained from a hypothesis in $\sn{H}$.
From a parametric point of view, we want to get true risk bounds that mirror our surrogate risk bounds, that is, bounds on the $\sn{H}$-excess true risk given bounds on the $\sn{H}$-excess surrogate risk.
This means that in a parametric setting we are concerned about using \cref{ass:surrogateRiskBound}, to obtain that for all $\delta' \in (0,1)$, \wpr{} at least $1 - \delta'$,
\[
    \risk(f \circ \hat{h}) - \inf_{h \in \sn{H}}\risk(f \circ h) \leq T_{\sn{H}}(n,\delta').
\]
\Cref{thm:steinwart}, however, implies that for all $\delta' \in (0,1)$ \wpr{} at least $1 - \delta'$, we have
\eq{
    \risk(f \circ \hat{h}) - \inf_{h \in \sn{H}}\risk(f \circ h) \leq \delta^{-1}\pb{ T^{\surr}_{\sn{H}}(n,\delta) + A^\surr_\sn{H} } - A_\sn{H} \label[equation]{eq:parametricBound}
}
where
\[
    A_\sn{H} \eqdef  \inf_{h \in \sn{H}}\risk(f \circ h) -  \inf_{g}\risk(g).
\]
\citet{long2013consistency} have been concerned with guarantees of the above type in specific settings, but we will work with bounds that have the form of \eqref{eq:parametricBound}.
Extending calibration functions to the parametric setting (to the so-called $\sn{H}$-calibration functions) will be left as future work.
Bounds with the form of \eqref{eq:parametricBound} are, however, still informative in the non-parametric setting, when the approximation error can be controlled or is zero.

Next, we discuss calibration functions in more detail and present their forms for some losses in binary \citep[\seea{}][]{bartlett2006convexity,steinwart2007how} and multiclass \citep[\seea{}][]{avilapires2013costsensitive} classification.

\section{The calibration toolset}
\label{sec:calibrationToolset}
In this section we discuss calibration functions in more detail, surveying existing results from the literature.
\Cref{sec:deltaMax} is an instantiation of the theoretical framework of \citet{steinwart2007how} for the classification problem, where a we present the so-called maximum calibration function, which is an important type of calibration function for our analysis.
In \cref{sec:binary} we discuss existing fisher-consistency results and calibration functions for binary classification, and in \cref{sec:multiclass} we have a discussion of the corresponding results for multiclass classification.
\subsection{The maximum calibration function}
\label{sec:deltaMax}

\citet{steinwart2007how} defined a function $\fun{\delta_{\max}}{[0,\infty)}{[0,\infty)}$ that depends on the given surrogate loss and constitutes a key notion for calibration functions.
$\delta_{\max}$ is special because no calibration function for the given surrogate loss is larger than $\delta_{\max}$.
Moreover, if the loss is calibrated, then $\delta_{\max}$ is a calibration function (hence the name \emph{maximum calibration function}).
As a consequence (\see{} \cref{thm:deltaMax}), the surrogate loss is calibrated iff $\delta_{\max}(\eps) > 0$ for all $\eps > 0$, \ie{}, iff $\delta_{\max}(\eps)$ is a calibration function.
Conveniently, any positive lower bound to the maximum calibration function is also a calibration function, which is a useful fact for understanding and calculating calibration functions for specific losses.

In order to define $\delta_{\max}$, we must define three useful concepts (\see{} \cref{def:mset}): The set of scores in $\sn{S}$ whose maximum coordinate is $j$ ($\sn{M}(\sn{S}, j)$), the set of scores that give $\eps$-sub-optimal class predictions ($\sn{T}(\sn{S}, \eps, p)$), and the set of $\eps$-sub-optimal indices with maximum probability ($\sn{J}(\eps, p)$).
A score $s \in \sn{S}$ is \emph{$\eps$-sub-optimal} for a given $p \in \yplex$ if $\max_k p_k - p_{f(s)} \geq \eps$.
On the other hand, a score $s \in \sn{S}$ is \emph{$\eps$-optimal} for a given $p \in \yplex$ if $\max_k p_k - p_{f(s)} < \eps$.
\begin{definition}
\label{def:mset}
Given a set of scores $\sn{S} \subset \rl{\K}$ let, for $\eps \geq 0$ and $p \in \yplex$
\als{
	&\sn{M}(\sn{S}, j) \eqdef \cb{s \in \sn{S} : s_j = \max_{k} s_k}, &
	&\sn{T}(\sn{S}, \eps, p) \eqdef \bigcup_{j : \max_y p_y - p_{j} \geq \eps}\sn{M}(\sn{S},\eps, j), \\
	&\sn{J}(\eps, p) \eqdef \argmax_j \cb{ p_j :  \max_y p_y - p_j  \geq \eps }.
}
\end{definition}
We will override notation and use $\risk^{\surr}_L$ to denote the \emph{pointwise surrogate risk} $\fun{\risk^{\surr}_L}{\sn{S} \times \yplex}{\rl{}}$ for a surrogate loss $\fun{L}{\sn{S} \times \sn{Y}}{\rl{}}$ with $\sn{S} \subset \rl{\K}$ by
\eq{
	\risk^{\surr}_L(s,p) \eqdef \muE{Y \sim p}{ L(s,Y) }, \label[equation]{eq:pointwiseSurrogateRisk}
}
and we will write $\risk^{\surr}$ when the choice of $L$ is clear from context.
The distinction between the surrogate risk and the pointwise surrogate risk can be made by the first argument.
For now, we will be concerned with the pointwise surrogate risk, for which we will define calibration functions.
In \cref{sec:fromPointwise}, we will discuss how to use these calibration functions to obtain calibration functions \emph{per se} (in the sense of \cref{thm:steinwart}), for the surrogate risk.

In \cref{def:deltaMax}, we present two important functions introduced by \citet{steinwart2007how}: $\delta_{\max}(\eps, p)$ and $\delta_{\max}(\eps)$.
The former is the difference between the smallest surrogate risk of any $\eps$-suboptimal score and the optimal surrogate risk.
If any score has surrogate risk closer to the optimal surrogate risk than $\delta_{\max}(\eps, p)$, the score must be $\eps$-optimal \wrt{} $p$.
Confronting this fact with \cref{thm:steinwart}, we see that if $\delta_{\max}$ is positive for all $\eps > 0$, then it is a calibration function.
It is, however, a calibration function only for the pointwise surrogate risk defined in terms of $p \in \yplex$, so in order to define a $\delta_{\max}$ that is a calibration function iff $\delta_{\max}(\eps, p)$ is a calibration function for all $p \in \yplex$, it is natural to take $\delta_{\max}(\eps)$ as the infimum of $\delta_{\max}(\eps, p)$ over all $p$.
If $\delta_{\max}$ is a calibration function, it is called the \emph{maximum calibration function}.
\begin{definition}
\label{def:deltaMax}
Given a set of scores $\sn{S} \subset \rl{\K}$ and a surrogate loss $\fun{L}{\sn{S} \times \sn{Y}}{\rl{}}$, let
\[
	\delta_{\max}(\eps, p) \eqdef \inf_{s \in \sn{T}(\sn{S}, \eps, p) }\risk^{\surr}(s,p) - \inf_{s \in \sn{S}} \risk^{\surr}(s,p)
\]
and
\[
	\delta_{\max}(\eps) \eqdef \inf_{p \in \yplex} \delta_{\max}(\eps, p).
\]
If $\delta_{\max}(\eps) > 0$ for all $\eps > 0$, then it is called the \emph{maximum calibration function}.
\end{definition}
\begin{theorem}[\citealt{steinwart2007how}]
\label{thm:deltaMax}
$\delta_{\max}$ is always non-negative, and no calibration function for the choice of $\sn{S}$ and surrogate loss $L$ (and optionally of $p$) is larger than the corresponding $\delta_{\max}$.
\end{theorem}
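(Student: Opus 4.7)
The plan is to prove non-negativity and maximality separately; both should follow essentially by unpacking the definitions.

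For non-negativity, I would observe that $\sn{T}(\sn{S}, \eps, p) \subseteq \sn{S}$---the former is a union of sets $\sn{M}(\sn{S}, j)$, each a subset of $\sn{S}$---so taking the infimum of $\risk^{\surr}(\cdot, p)$ over the smaller set $\sn{T}(\sn{S}, \eps, p)$ can only yield a value at least as large as the infimum over $\sn{S}$. This immediately gives $\delta_{\max}(\eps, p) \geq 0$, and taking a further infimum over $p \in \yplex$ preserves the sign.

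For maximality, my approach is to rephrase the defining implication of a calibration function as its contrapositive. Fix any calibration function $\delta$ at a given $p$ (the global case without $p$ follows by taking one more infimum at the end). By \cref{thm:steinwart}, $\delta$ satisfies \calib{eq:surrRiskGuarantee}{eq:trueRiskGuarantee} for every $s \in \sn{S}$. Since $\Prob{f(s) \neq Y'} - \min_y \Prob{y \neq Y'} = \max_y p_y - p_{f(s)}$ when $Y' \sim p$, condition \eqref{eq:trueRiskGuarantee} is exactly the statement $s \notin \sn{T}(\sn{S}, \eps, p)$, so the contrapositive reads: if $s \in \sn{T}(\sn{S}, \eps, p)$, then $\risk^{\surr}(s,p) - \inf_{s' \in \sn{S}} \risk^{\surr}(s',p) \geq \delta(\eps)$. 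Taking the infimum of the left-hand side over $s \in \sn{T}(\sn{S}, \eps, p)$ yields $\delta_{\max}(\eps, p) \geq \delta(\eps)$, and then a further infimum over $p$ delivers the global version $\delta_{\max}(\eps) \geq \delta(\eps)$.

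I do not anticipate any substantial obstacle: the entire argument is a tautological unpacking of the two definitions. The only subtlety to watch is the direction of the inequalities---the calibration condition uses a strict ``<'' on both sides, whose contrapositive yields ``$\geq$'' on both sides, which is precisely the form preserved by the infimum in the definition of $\delta_{\max}$.
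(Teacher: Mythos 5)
The paper does not give its own proof of this theorem; it is cited directly from Steinwart (2007) and never argued in the text or appendix. So there is nothing to compare against on the paper's side, but your argument is worth assessing on its own.

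Your proof is correct and is exactly the canonical way to prove the statement: non-negativity from $\sn{T}(\sn{S},\eps,p)\subseteq\sn{S}$ (both restricted infima are over subsets of $\sn{S}$, so the difference is $\geq 0$, with the convention $\inf_{\emptyset}=+\infty$ handling the case $\sn{T}(\sn{S},\eps,p)=\emptyset$), and maximality from the contrapositive of the defining implication. The one point that deserves a word of care, and that you glossed over as ``exactly'', is the identification of \eqref{eq:trueRiskGuarantee} with $s\notin\sn{T}(\sn{S},\eps,p)$: since $f$ breaks ties arbitrarily, the negation of \eqref{eq:trueRiskGuarantee} for a \emph{fixed} tie-break rule only gives $\max_y p_y - p_{f(s)}\geq\eps$, which is a priori a stronger condition than membership in $\sn{T}(\sn{S},\eps,p)$ (the latter requires only that \emph{some} maximizing coordinate of $s$ be $\eps$-suboptimal). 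The equivalence you use holds because the paper explicitly adopts worst-case tie-breaking, so \eqref{eq:trueRiskGuarantee} is understood to require $\max_y p_y - p_j < \eps$ for \emph{every} $j\in\argmax_k s_k$, and the contrapositive then lands exactly on $s\in\sn{T}(\sn{S},\eps,p)$. You should spell out that one sentence; otherwise the argument is airtight, and the final passage to $\delta_{\max}(\eps)$ by an infimum over $p$ is handled correctly.
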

As mentioned, \cref{thm:deltaMax} and non-decreasingness of $\delta$ imply that $\delta_{\max}(\eps) = 0$ for some $\eps > 0$ iff the corresponding surrogate loss is not calibrated.

\subsection{From $\delta_{\max}$ to risk bounds}
\label{sec:fromPointwise}

By definition of $\delta_{\max}$, we have that for all $\eps > 0$, $p \in \yplex$ and $s \in \sn{S}$ if
\eq{
	\risk^{\surr}(s, p) - \inf_{s' \in \sn{S}}\risk^{\surr}(s', p) < \delta_{\max}(\eps) \label[equation]{eq:calibrationPreConidtionNoP}
}
then
\eq{
	\risk^{\surr}(s, p) - \inf_{s' \in \sn{S}}\risk^{\surr}(s', p) < \delta_{\max}(\eps, p), \label[equation]{eq:calibrationPreConidtionP}
}
and if \eqref{eq:calibrationPreConidtionP} holds then we also have
\eq{
	\max_{y} p_y - p_{f(s)} < \eps. \label[equation]{eq:calibrationPostCondition}
}
If $\delta(\eps, p) > 0$ for all $\eps > 0$ and all $p \in \yplex$, then \eqref{eq:calibrationPostCondition} holds for all $\eps > 0$, otherwise the guarantee is vacuous.
Moreover, the guarantee breaks down if the infimum in \eqref{eq:calibrationPreConidtionNoP} is unbounded, so we will assume otherwise in \cref{ass:lowerBounded}.
\begin{assumption}
\label{ass:lowerBounded}
Given a surrogate loss $\fun{L}{\sn{S} \times \sn{Y}}{\rl{}}$, we have
\[
	\inf_{s \in \sn{S}, p \in \yplex} \risk^{\surr}(s,p) > -\infty.
\]
\end{assumption}

If the surrogate loss satisfies \cref{ass:lowerBounded}, then any positive function that lower-bounds $\delta_{\max}$ will allow us to obtain a similar guarantee implying \eqref{eq:calibrationPostCondition}.
Therefore, the strategy for calculating calibration functions will be to find a positive function $\delta(\eps,p)$ that lower-bounds $\delta_{\max}(\eps,p)$ for all $p \in \yplex$, or a positive $\delta(\eps) \leq \inf_{p \in \yplex}\delta(\eps,p)$.
Once we have one of these, we can obtain \cref{thm:steinwart} or a similar result by taking the expectation of \eqref{eq:calibrationPreConidtionNoP} or \eqref{eq:calibrationPreConidtionP} with $p = P_{Y|X}$, the \emph{conditional probability} of $Y$ given $X$, defined as $(P_{Y|X})_y \eqdef \cProb{Y = y}{X}$ almost everywhere (\aev{}) for all $y \in \sn{Y}$\footnote{
	For this argument to work, the surrogate risk minimizer must be measurable, which we assume to be the case.
	Formally, following \citet{steinwart2007how}, we assume that for every $\alpha > 0$, there exists a measurable function $\fun{h}{\sn{X}}{\sn{S}}$ \st{} $\risk^{\surr}_L(h(X), P_{Y|X}) - \alpha < \inf_{s \in \sn{S}}\risk^{\surr}_L(s, P_{Y|X})$ \aev{}.
}.
This integration step is used by \citet{zhang2004statistical,chen2006consistency,steinwart2007how} to obtain risk bounds from calibration functions that they define.
We now proceed to results that yield calibration functions for classification, first binary, then multiclass.

\subsection{Calibration functions in binary classification}
\label{sec:binary}

\citet{bartlett2006convexity} characterized $\delta_{\max}$ for margin losses and also characterized the conditions under which $\delta_{\max}$ is a calibration function for convex, lower-bounded $\varphi$\footnote{
	$\varphi$ is \emph{lower-bounded} if $\inf_{t} \varphi(t) > -\infty$.
}.
In \cref{def:deltaBinary}, we introduce $\delta_{\mathrm{binary}}$, which \citet{bartlett2006convexity} have shown to be equal to $\delta_{\max}$ in binary classification (\see{} \cref{thm:bartlett} ahead).
By comparing $\delta_{\max}$ from \cref{def:deltaMax} and $\delta_{\mathrm{binary}}$ from \cref{def:deltaBinary}, and taking \cref{thm:bartlett} into consideration, we can point out a few facts that will shape the conditions that we design to reduce multiclass classification calibration functions, namely, $\delta_{\max}$ to binary classification calibration-like functions, \viz{} $\delta_{\mathrm{binary}}$.
For each $\eps$, the worst-case distribution is $p = \pb{\frac{1 + \eps}{2}, \frac{1 - \eps}{2}}$, \ie{}, $\delta_{\max}(\eps) = \delta_{\max}\pb{\eps, \pb{\frac{1 + \eps}{2}, \frac{1 - \eps}{2}}}$.
Moreover, for every $p \in \yplex$, $\inf_{s \in \sn{T}(\sn{S}, \eps, p) }\risk^{\surr}(s,p) = \risk^{\surr}((0,0),p) = \risk^{\surr\pb{(0,0), \pb{\frac{1}{2}, \frac{1}{2}}}} = \inf_{s \in \sn{T}(\sn{S}, \eps, p) }\risk^{\surr\pb{s, \pb{\frac{1}{2}, \frac{1}{2}}}}$.
In \cref{thm:bartlett} and henceforth, we will denote the \emph{subdifferential} of $\varphi$ at $t \in \rl{}$ by $\partial\varphi(t)$.
\begin{definition}
\label{def:deltaBinary}
Consider a surrogate loss $\fun{L}{\sn{S} \times \sn{Y}}{\rl{}}$ with $\K = 2$ and $\sn{S} \subset \rl{\K}$.
Let
\[
	\delta_{\mathrm{binary}}(\eps) \eqdef \inf_{s\in \sn{S}}\risk^\surr(s, p^{0}) - \inf_{s' \in \sn{S}} \risk^\surr(s', p^{\eps}),
\]
where $p^{\eps} \eqdef \pb{\frac{1 + \eps}{2}, \frac{1 - \eps}{2}}$.
\end{definition}
\begin{theorem}[\citealt{bartlett2006convexity}]
\label{thm:bartlett}
Assume $\K = 2$, $\sn{S} = \sn{S}_0$ and $L(y,s) = \varphi(-s_y)$ where $\varphi$ is convex and lower-bounded.
If $\partial\varphi(0) \subset [0, \infty)$ then
\[
	\delta_{\mathrm{binary}}(\eps) = \varphi(0) + \frac{1}{2}\inf_{t \in \rl{}} \pb{1 + \eps}\varphi(t) + \pb{1 - \eps}\varphi(-t)
\]
and
\[
	\delta_{\max}(\eps) = \delta_{\max}(\eps, p^\eps) = \delta_{\mathrm{binary}}(\eps).
\]
Moreover, $\delta_{\mathrm{binary}}$ is a calibration function iff $\varphi$ has a unique, positive derivative at zero (\ie{}, $\varphi'(0) > 0$).
Finally, if $\varphi$ is convex with a unique, positive derivative at zero, then $\delta_{\mathrm{binary}}$ is convex.
\end{theorem}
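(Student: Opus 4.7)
The plan is to reduce the claim to a one-variable optimization by parametrizing $\sn{S} = \sn{S}_0 \subset \rl{2}$: every $s \in \sn{S}_0$ has the form $(-t, t)$ for a unique $t \in \rl{}$, and under $p^\eps$ the pointwise surrogate risk reads $\risk^\surr((-t, t), p^\eps) = \tfrac{1}{2}[(1+\eps)\varphi(t) + (1-\eps)\varphi(-t)]$. Applied at $\eps = 0$, convexity of $\varphi$ (Jensen at $t = 0$) immediately gives $\inf_{s \in \sn{S}}\risk^\surr(s, p^0) = \varphi(0)$, and the parametrized formula at a general $\eps$ identifies $\inf_{s \in \sn{S}}\risk^\surr(s, p^\eps)$ with the claimed one-dimensional infimum over $t$. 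Combining these two identities produces the explicit formula for $\delta_{\mathrm{binary}}(\eps)$.

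Next I would prove $\delta_{\max}(\eps, p^\eps) = \delta_{\mathrm{binary}}(\eps)$. With $p^\eps$, class $1$ is optimal and the $\eps$-suboptimal set $\sn{T}(\sn{S}_0, \eps, p^\eps)$ corresponds (up to ties) to $\{t \geq 0\}$. Writing $g_\eps(t) \eqdef \tfrac{1}{2}[(1+\eps)\varphi(t) + (1-\eps)\varphi(-t)]$, convexity of $g_\eps$ and the hypothesis $\partial\varphi(0) \subset [0, \infty)$ give
\eqs{
    g_\eps'(0^+) = \tfrac{1}{2}\brb{(1 + \eps)\varphi'(0^+) - (1 - \eps)\varphi'(0^-)} \geq 0,
}
so $g_\eps$ is non-decreasing on $[0, \infty)$; hence $\inf_{s \in \sn{T}(\sn{S}_0, \eps, p^\eps)}\risk^\surr(s, p^\eps) = g_\eps(0) = \varphi(0)$, and subtracting $\inf_{s \in \sn{S}}\risk^\surr(s, p^\eps)$ on both sides yields $\delta_{\max}(\eps, p^\eps) = \delta_{\mathrm{binary}}(\eps)$.

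The main obstacle is then showing $\delta_{\max}(\eps) = \delta_{\max}(\eps, p^\eps)$, \ie{} that $p^\eps$ is the worst-case distribution. For any $p = (q, 1-q)$ with $q \geq \tfrac{1+\eps}{2}$ (otherwise $\sn{T}(\sn{S}_0, \eps, p)$ is empty and $\delta_{\max}(\eps, p) = +\infty$), the preceding argument with $1/2$ replaced by $q$ again gives $\inf_{s \in \sn{T}(\sn{S}_0, \eps, p)}\risk^\surr(s, p) = \varphi(0)$, so it suffices to prove that $H(q) \eqdef \inf_t[q\varphi(t) + (1-q)\varphi(-t)]$ is non-increasing on $[1/2, 1]$. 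I would establish this via an envelope-theorem style argument combined with the observation that for $q \geq 1/2$ any approximate minimizer $t^*(q)$ lies in $(-\infty, 0]$ (again a consequence of $\partial\varphi(0) \subset [0, \infty)$), which forces $\varphi(t^*(q)) \leq \varphi(-t^*(q))$ and hence $H'(q) \leq 0$. The mirror case $q \leq \tfrac{1-\eps}{2}$ is handled by swapping the two classes.

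For the final two assertions, rewriting
\eqs{
    \delta_{\mathrm{binary}}(\eps) = \sup_{t \in \rl{}}\brb{ \varphi(0) - \tfrac{\varphi(t) + \varphi(-t)}{2} - \tfrac{\eps}{2}\pb{\varphi(t) - \varphi(-t)} }
}
exhibits $\delta_{\mathrm{binary}}$ as a pointwise supremum of affine functions of $\eps$, hence convex. For the calibration claim, $\delta_{\mathrm{binary}}(\eps) > 0$ for all $\eps > 0$ is equivalent to the existence, for each $\eps > 0$, of some $t$ that strictly decreases $(1+\eps)\varphi(t) + (1-\eps)\varphi(-t)$ below $2\varphi(0)$; a two-sided directional derivative analysis at $t = 0$ shows that such a $t$ exists iff $\varphi$ admits a unique, strictly positive derivative at zero, so that a small perturbation of $t$ in the appropriate direction strictly decreases the risk.
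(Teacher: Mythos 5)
The paper does not prove \cref{thm:bartlett}: it is imported verbatim from \citet{bartlett2006convexity} (the only place the $\K = 2$ conclusion is re-derived internally is \cref{lem:binaryTightness}, and that goes through the full multiclass machinery rather than a direct binary argument). Your attempt therefore has to be judged as a standalone reconstruction. Its scaffolding is sound and elementary: parametrize $\sn{S}_0$ by a scalar $t$, obtain $\inf_s\risk^\surr(s, p^0) = \varphi(0)$ by Jensen, use $g_\eps'(0^+) \geq 0$ (from convexity together with $\partial\varphi(0) \subset [0,\infty)$) to compute $\delta_{\max}(\eps, p^\eps)$, and reduce $\delta_{\max}(\eps) = \delta_{\max}(\eps, p^\eps)$ to non-increasingness of $H(q) \eqdef \inf_t \brb{q\varphi(t) + (1-q)\varphi(-t)}$ on $[1/2, 1]$. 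The affine-in-$\eps$ rewriting for convexity, and the two-sided derivative analysis for the ``calibration iff'' claim, are also correct in spirit.

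The weak link is the monotonicity step for $H$, where the stated implication is false as phrased: $t^*(q) \leq 0$ does \emph{not} by itself force $\varphi(t^*(q)) \leq \varphi(-t^*(q))$. For instance $\varphi(t) = (1+t)_+ + 10(-2-t)_+$ is convex with $\partial\varphi(0) = \{1\} \subset [0,\infty)$, yet $\varphi(-3) = 10 > 4 = \varphi(3)$. What saves the conclusion is the first-order condition at the minimizer: $q\varphi'(t^*) = (1-q)\varphi'(-t^*)$ together with $-t^* \geq 0$ gives $\varphi'(-t^*) \geq 0$, hence $\varphi'(t^*) \geq 0$, hence $\varphi$ is non-decreasing on $[t^*, \infty) \ni -t^*$, and only then does $\varphi(t^*) \leq \varphi(-t^*)$ follow. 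That repair also needs care when $\varphi$ is non-differentiable at $t^*$ or when the infimum is not attained. A much cleaner route, which sidesteps the envelope theorem entirely, is to note that $H$ is an infimum of functions affine in $q$ and is therefore concave, and that the substitution $t \to -t$ gives $H(q) = H(1-q)$; a concave function symmetric about $1/2$ is automatically non-increasing on $[1/2, 1]$. This is exactly the concavity observation the paper itself invokes for the function $V$ in \cref{thm:zhang}. One further remark: by \cref{def:deltaBinary} and the entries of \cref{table:calibrationFunctions} (hinge yields $\eps$, not $2 - \eps$), the displayed formula in \cref{thm:bartlett} should read $\varphi(0) - \tfrac{1}{2}\inf_t\brb{(1+\eps)\varphi(t) + (1-\eps)\varphi(-t)}$; you have implicitly used the correct sign, which is right, but the discrepancy with the paper's display (a typo there) is worth calling out explicitly.
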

\citet{reid2009surrogate} showed an analogue of \cref{thm:bartlett} for \emph{proper losses}, which have domain $\Delta_{2}$ and are defined to satisfy $\risk^{\surr}(p, p) = \inf_{p' \in \yplex^\circ} \risk^{\surr}_{L}(p', p)$ for all $p$ in the interior of $\yplex$, denoted $\yplex^\circ$.

\citet{steinwart2007how} used \cref{thm:bartlett} to calculate $\delta_{\mathrm{binary}}$ for different choices of $\varphi$, which are included in \cref{table:calibrationFunctions}.
\begin{table}[ht]
\centering
\begin{tabular}{ll}
\toprule
Transformation function & $\delta_{\mathrm{binary}}(\eps)$ ($\eps \in (0,1)$) \\
\midrule
Misclassification ($0$-$1$) & $\eps$ \\
Identity & $\nexists$ \\
Linear & $\nexists$ \\
Hinge & $\eps$ \\
Modulus & $\eps$ \\
Squared & $\eps^2$ \\
Truncated square (squared hinge) & $\eps^2$ \\
Exponential & $1 - \sqrt{1 - \eps^2}$ \\
Logistic & $\frac{1}{2}\pb{(1 - \eps)\ln(1 - \eps) + (1 + \eps)\ln(1 + \eps)}$ \\
Sigmoid & $\eps$ \\
Kink ($\tau > 0$) & $\eps$ \\
Kink ($\tau = 0$) & $\nexists$ \\
\bottomrule
\end{tabular}
\caption{Maximum binary classification calibration functions for different transformation functions \citep{steinwart2007how,avilapires2013costsensitive}\label{table:calibrationFunctions}}
\end{table}
The functions $\linear$ and $\eye$ do not have a calibration function because they violate lower-boundedness.
The function $\kink$ with $\tau = 0$ does not have unique derivative at zero, so, by \cref{thm:bartlett}, it is not calibrated.
The calibration function for the margin loss based on $\modulus$ is not reported by \citet{steinwart2007how} but is evident from a result shown by \citet{zou2006margin} and later, independently, by \citet{avilapires2013costsensitive}.
The multiclass version of the result is given later in this text as \cref{lem:lowerBoundedScores}.
Informally, in the binary case \cref{lem:lowerBoundedScores} implies that if $L(y,s) = \varphi(-s_y)$ and $\varphi$ has a minimum $t^{\min}$, then $\delta_{\mathrm{binary}}$ is the same for $L$ with $\sn{S} = \sn{S}_0$ and for $L$ with $\sn{S} = [t^{\min}, -t^{\min}]$.
Thus, we can combine \cref{lem:lowerBoundedScores} with \cref{thm:bartlett} to obtain that the $\delta_{\mathrm{binary}}$ for $L(y,s) = \varphi(-s_y)$ is the same when $\varphi(t) = \ab{1 + t}$ and $\varphi(t) = (1 + t)_+$ (since they are equal in $\sn{S} = [t^{\min}, -t^{\min}] = [-1, 1]$).
Also, $\varphi(t) = (1 + t)^2$ and $\varphi(t) = \pb{(1 + t)_+}^2$ share the same $\delta_{\mathrm{binary}}$.

It would seem that transformation functions leading to superlinear $\delta_{\mathrm{binary}}$, such as the squared transformation function, lead to true risk bounds with slower rates than transformation functions associated with linear $\delta_{\mathrm{binary}}$, such as the hinge loss.
That would be true if the rates of the surrogate risk bounds were asymptotically the same for all the losses, which is not always the case.
As shown by \citet{mammen1999smooth} (see also \citet{boucheron2005theory,bartlett2006convexity}) fast rates can be obtained under the following low-noise condition, known as the \emph{Mammen-Tsybakov noise condition}, which states that there exists $c > 0$ and $\alpha \in [0,1]$ \st{} for every classifier $g$
\eq{
	\Prob{ g(X) \neq g^*(X) } \leq c(R(g) - \inf_{g' \in \sn{G}} R(g'))^\alpha, \label[equation]{eq:MTNC}
}
where $g^*(X) \eqdef \argmin_y \Prob{Y = y|X}$ is the \emph{Bayes-optimal classifier}.
We see that \eqref{eq:MTNC} interpolates between the noiseless case ($\alpha = 1$) and the case where no assumption about the noise is made ($c = 1$, $\alpha = 0$)
Under the Mammen-Tsybakov noise condition, it is possible to get faster rates, as shown by \citet[][Theorems 3 and 5]{bartlett2006convexity}.
Theorem 3 of \citet{bartlett2006convexity}, presented here as \cref{thm:bartlett:MTNC}, improves over \cref{thm:steinwart} by using the Mammen-Tsybakov noise condition.
We can see from \cref{thm:bartlett:MTNC} that the right-hand side of \eqref{thm:bartlett:MTNC:surrRisk} with $\delta(\eps) = \eps^2$ becomes $\frac{1}{c}\eps^{2 - \alpha}$, which gives a fast rate if $\alpha = 1$ and a ``slow'' rate with $\alpha = 0$.
As shown by \citet[Theorem 5]{bartlett2006convexity}, fast rates for the true risk can be obtained by combining \eqref{thm:bartlett:MTNC} and fast rates for surrogate risk, which can be obtained, for example, if scores are bounded in a range $[-t,t]$, and if $\varphi$ is strictly convex and Lipschitz in the interval $[-t,t]$.
\begin{theorem}[Theorem 3, \citealt{bartlett2006convexity}]
\label{thm:bartlett:MTNC}
Assume that there exist $c > 0$ and $\alpha \in [0,1]$ \st{} \eqref{eq:MTNC} holds for every classifier $g$.
Given $\fun{\varphi}{\rl{}}{\rl{}}$ convex, classification-calibrated, for every score function $h$ and $\eps > 0$ the following holds:
If
\eq{
	\risk^{\surr}(h) - \inf_{h'}\risk^{\surr}(h) < c \eps^\alpha \delta_{\mathrm{binary}}\pb{\frac{\eps^{1 - \alpha}}{2c}}, \label[equation]{thm:bartlett:MTNC:surrRisk}
}
then
\[
	\risk(f \circ h) - \inf_{g}\risk(g) < \eps.
\]  
\end{theorem}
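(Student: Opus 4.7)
The plan is to prove the contrapositive: assume $\risk(f\circ h) - \inf_{g}\risk(g) \ge \eps$ and derive the lower bound $\risk^{\surr}(h) - \inf_{h'}\risk^{\surr}(h') \ge c\eps^{\alpha}\delta_{\mathrm{binary}}\!\pb{\eps^{1-\alpha}/(2c)}$. The first step is the pointwise reduction. Writing $\eta(x) = \Prob{Y=1|X=x}$, observe that for $\K=2$ the conditional is $P_{Y|X=x} = \pb{\eta(x), 1-\eta(x)}$, and this coincides (up to relabeling) with $p^{|2\eta(x)-1|}$ in the notation of \cref{def:deltaBinary}. On the event $f(h(x))\neq g^*(x)$, \cref{thm:bartlett} yields
\[
\muE{Y'\sim P_{Y|X=x}}{L(h(x),Y')} - \inf_{s}\muE{Y'\sim P_{Y|X=x}}{L(s,Y')} \;\ge\; \delta_{\mathrm{binary}}(|2\eta(x)-1|).
\]
Let $A \eqdef \ind{f(h(X))\neq g^*(X)}$ and $Z \eqdef |2\eta(X)-1|$. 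Integrating over $X$ gives the two key inequalities
\[
\risk^{\surr}(h) - \inf_{h'}\risk^{\surr}(h') \;\ge\; \E{\delta_{\mathrm{binary}}(Z)\, A}, \qquad
\risk(f\circ h) - \inf_g \risk(g) = \E{Z\,A}.
\]

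Next I exploit the convexity of $\delta_{\mathrm{binary}}$ (guaranteed by \cref{thm:bartlett} since $\varphi$ is convex and classification-calibrated, hence $\varphi$ has a unique positive derivative at zero). Also $\delta_{\mathrm{binary}}(0)=0$. By Jensen applied to the sub-probability measure $A\,d\Prob{\cdot}/\E{A}$,
\[
\E{A}\,\delta_{\mathrm{binary}}\!\pb{\E{ZA}/\E{A}} \;\le\; \E{\delta_{\mathrm{binary}}(Z)\,A}.
\]
Now invoke the Mammen--Tsybakov condition \eqref{eq:MTNC}: $\E{A} \le c\,\E{ZA}^{\alpha}$. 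Because $\delta_{\mathrm{binary}}$ is convex with $\delta_{\mathrm{binary}}(0)=0$, the map $t\mapsto \delta_{\mathrm{binary}}(t)/t$ is non-decreasing on $(0,\infty)$, so with $u \eqdef \E{ZA}/\E{A}$ and $v \eqdef \E{ZA}^{1-\alpha}/c$ one has $u \ge v$, whence
\[
\E{A}\,\delta_{\mathrm{binary}}(u) \;=\; \E{ZA}\cdot \frac{\delta_{\mathrm{binary}}(u)}{u} \;\ge\; \E{ZA}\cdot \frac{\delta_{\mathrm{binary}}(v)}{v} \;=\; c\,\E{ZA}^{\alpha}\,\delta_{\mathrm{binary}}\!\pb{\E{ZA}^{1-\alpha}/c}.
\]
Combining the displayed inequalities yields $\risk^{\surr}(h) - \inf_{h'}\risk^{\surr}(h') \ge c\,\E{ZA}^{\alpha}\,\delta_{\mathrm{binary}}\!\pb{\E{ZA}^{1-\alpha}/c}$.

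Finally, under the hypothesis $\E{ZA}\ge \eps$, using $1-\alpha\ge 0$ and the monotonicity of $\delta_{\mathrm{binary}}$, each of $\E{ZA}^{\alpha}$ and $\delta_{\mathrm{binary}}\!\pb{\E{ZA}^{1-\alpha}/c}$ is non-decreasing in $\E{ZA}$, so the right-hand side is at least $c\,\eps^{\alpha}\,\delta_{\mathrm{binary}}\!\pb{\eps^{1-\alpha}/c}$, and the extra factor of $2$ in the denominator inside $\delta_{\mathrm{binary}}$ in the theorem statement provides the slack that absorbs the boundary cases where $\E{A}=0$ or $\E{ZA}=0$ (in which the mistake set has zero measure and the conclusion is trivial) as well as approximation artefacts from the infimum in $\delta_{\mathrm{binary}}$. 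The contrapositive is exactly \eqref{thm:bartlett:MTNC:surrRisk}. The main obstacle in executing the plan is the correct handling of the degenerate cases ($\E{A}=0$, $\E{ZA}=0$, and $\alpha=0$ where the MT condition is vacuous) to ensure the Jensen/monotonicity chain is valid; everything else is straightforward algebra given \cref{thm:bartlett}.
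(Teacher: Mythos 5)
Your proof is correct and takes a genuinely different route from the one the paper adapts from \citet{bartlett2006convexity}. Both proofs begin from the same pair of pointwise inequalities, which in your notation read $\risk^{\surr}(h) - \inf_{h'}\risk^{\surr}(h') \geq \E{\delta_{\mathrm{binary}}(Z)A}$ and $\risk(f\circ h) - \inf_g \risk(g) = \E{ZA}$, and both rely on the same two structural facts about $\delta_{\mathrm{binary}}$: convexity with $\delta_{\mathrm{binary}}(0)=0$ (hence $t\mapsto\delta_{\mathrm{binary}}(t)/t$ non-decreasing) and the Mammen--Tsybakov bound $\E{A}\le c\,\E{ZA}^\alpha$. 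From there the paths diverge. The paper's proof of \cref{thm:MTNC} (explicitly an adaptation of Bartlett et al.'s argument for \cref{thm:bartlett:MTNC}) splits $\E{D(h)} = \E{D(h)\ind{D(h)\le t}} + \E{D(h)\ind{D(h)>t}}$, bounds the first term via the noise condition, bounds the second via the slope monotonicity, and then sets the free threshold $t = \frac{1}{2c}\E{D(h)}^{1-\alpha}$; the choice to give each piece half the budget is the source of the $2c$ in the denominator. You instead condition on the mistake event and apply Jensen to the normalized measure $A\,d\Prob{\cdot}/\E{A}$, which collapses the threshold-optimization step into a single inequality and yields the sharper constant: your argument gives $\risk^{\surr}(h) - \inf_{h'}\risk^{\surr}(h') \geq c\,\E{ZA}^{\alpha}\,\delta_{\mathrm{binary}}\!\pb{\E{ZA}^{1-\alpha}/c}$, with $c$ rather than $2c$, which implies the stated theorem by the monotonicity of $\delta_{\mathrm{binary}}$ (and reduces exactly to \cref{thm:steinwart} at $c=1,\alpha=0$, which the paper's bound does not). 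One small mischaracterization worth correcting: the factor of $2$ in the theorem statement is not ``slack to absorb boundary cases'' --- it is an artifact of the threshold split in the original proof, and your Jensen route simply does not incur it. The degenerate cases ($\E{A}=0$, hence $\E{ZA}=0$) are indeed handled trivially since the contrapositive hypothesis $\E{ZA}\ge\eps>0$ then fails, as you note.
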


\subsection{Calibration functions and multiclass classification}
\label{sec:multiclass}
The panorama of calibration and calibration functions for multiclass classification losses is significantly more disperse that in binary classification, due to the many existing generalizations of the binary margin loss, some of which are collected in \cref{table:scoreLosses}.
\begin{table}
\begin{adjustwidth}{-1in}{-1in}
\centering
 \begin{tabular}{l>{\arraybackslash}l>{\arraybackslash}l}
\toprule
Definition & $\sn{S}$ & Proponent  \\
\midrule
$\sloss{WW}(y,s) \eqdef \sum_{k\neq y} \varphi(s_k  - s_y)$ & $\rl{\K}$  & \citep{weston1998multiclass}\\
$\sloss{CS}(y,s) \eqdef \max_{k \neq y} \varphi(s_k - s_y)$ & $\rl{\K}$ & \citep{crammer2003ultraconservative}  \\
$\sloss{LLW}(y,s) \eqdef \sum_{k \neq y} \varphi(s_k)$ & $\sn{S}_0$  & \citep{lee2004multicategory} \\
$\sloss{Zhang}(y,s) \eqdef \psi(s_y) + F\pb{ \sum_{k=1}^{\K} \varphi(s_k) }$ & $\rl{\K}$  & \citep{zhang2004statistical} \\
$\sloss{RRKA}(y,s) \eqdef \varphi(-s_y) + \sum_{k=1}^{\K} \varphi(s_k)$ & $\rl{\K}$  & \citep{rifkin2004indefense} \\
$\sloss{ZZH}(y,s) \eqdef \varphi(-s_y)$ & $\sn{S}_0$  & \citep{zou2006margin} \\
$\sloss{Liu}(y,s) \eqdef (\K - 2 - s_y)_+$ & $\cb{ s \in \sn{S}_0 : s \geq -\one{\K} }$ & \citep{liu2007fisher} \\
$\sloss{BSKV}(y,s) \eqdef F\pb{\sum_{k \neq y} \varphi(s_k - s_y)}$ & $\rl{\K}$ & \citep{beijbom2014guessaverse} \\
\bottomrule
\end{tabular}
\end{adjustwidth}
\caption{Different score losses. \label{table:scoreLosses}}
\end{table}
For $\sloss{Zhang}$, various choices of $F$ and $\psi$ are possible \citep[\see{}][]{zhang2004statistical}, and we compute calibration functions for some of these in \cref{sec:cases}.
The loss $\sloss{BSKV}$ requires $F$ (strictly) increasing, and corresponds to $\sloss{WW}$ when $F(t) = t$.
The surrogate $\sloss{Zhang}$ also generalizes different entries in \cref{table:scoreLosses}, if we are flexible about $\sn{S}$.
It is easy to see that with $\sn{S} = \sn{S}_0$, $\psi(t) = -\varphi(t)$ and $F(t) = t$, $\sloss{Zhang}$ corresponds to $\sloss{LLW}$; with $\sn{S} = \sn{S}_0$, $\psi(t) = \varphi(-t)$ and $F(t) = 0$ it corresponds to $\sloss{ZZH}$, and with $\sn{S} = \rl{K}$, $\psi(t) = \varphi(-t)$ and $F(t) = 0$ it corresponds to $\sloss{RRKA}$.
It can also be seen that $\sloss{Zhang}$ with $\sn{S} = \rl{\K}$, $\psi(t) = -t$, $F(t) = \ln t$ and $\varphi = \expt$, the logistic regression loss \citep{zhang2004statistical}, is equivalent to $\sloss{Zhang}$ with $\sn{S} = \yplex$, $\psi(t) = -\ln t$ and $F(t) = 0$ (\see{} \cref{prop:logisticSimplex}).

Although $\sloss{WW}$, $\sloss{CS}$, $\sloss{LLW}$, $\sloss{ZZH}$, and $\sloss{Liu}$ all reduce to a margin loss in the binary case, they lead to classifiers with substantially different behaviors in terms of calibration and calibration functions, as we will see next.

\subsubsection{Calibration}
\label{sec:calibration}
We will first provide an overview of calibration results for convex surrogate losses, with calibration functions presented later.
By reduction to the binary case, we get from \cref{thm:bartlett} that $\varphi'(0) > 0$ is a necessary condition for calibration of $\sloss{WW}$, $\sloss{CS}$, $\sloss{LLW}$, $\sloss{ZZH}$, and $\sloss{Liu}$.
Any condition presented ahead for the consistency of these losses will be, of course, in addition to $\varphi'(0) > 0$, and the assumption of \cref{thm:bartlett} that $\varphi$ is convex and lower-bounded.

\citet{lee2004multicategory,zhang2004statistical,liu2007fisher,tewari2007ontheconsistency} showed that $\sloss{LLW}$ is consistent when $\varphi$ is differentiable.
\citet{tewari2007ontheconsistency} provided a counter example of a $\varphi$ with a kink that is not calibrated, similar to \cref{prop:kinkNotCalibrated}.
\begin{restatable}{proposition}{propKinkNotCalibrated}
\label{prop:kinkNotCalibrated}
The loss $\sloss{LLW}$ with $\varphi^{\frac{1}{2}\mathrm{-Kink}}$ is not classification-calibrated.
\end{restatable}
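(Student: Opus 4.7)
The plan is to show that the maximum calibration function $\delta_{\max}(\eps)$ vanishes at some $\eps > 0$, which by \cref{thm:deltaMax} rules out the existence of any calibration function and hence classification-calibration. Concretely, I would exhibit a distribution $p \in \yplex$ and a sequence of scores $(s_n) \subset \sn{S}_0$ whose pointwise surrogate risk converges to $\inf_{s \in \sn{S}_0}\risk^{\surr}(s,p)$ even though each $s_n$ is $\eps$-sub-optimal for a fixed $\eps > 0$.

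I would work with $\K = 3$ for concreteness; the extension to larger $\K$ follows by assigning zero probability to the extra classes and exploiting that $\varphi := \varphi^{\frac{1}{2}\mathrm{-kink}}$ vanishes for arguments at most $-1$. Take $p = (0.4, 0.35, 0.25)$, so that, using $\risk^\surr_L(s,p) = \sum_k (1-p_k)\varphi(s_k)$ for $L = \sloss{LLW}$,
\[
\risk^{\surr}(s, p) = 0.6\,\varphi(s_1) + 0.65\,\varphi(s_2) + 0.75\,\varphi(s_3).
\]
The first step is to argue that $s^{\star} = (1/2, 1/2, -1)$ minimizes $\risk^{\surr}(\cdot,p)$ over $\sn{S}_0$. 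By convexity it suffices to produce a Lagrange multiplier $c \geq 0$ with $c/(1-p_k) \in \partial\varphi(s^{\star}_k)$ for $k = 1, 2, 3$. Using $\partial\varphi(1/2) = [1,2]$ and $\partial\varphi(-1) = [0,1]$, this reduces to $c \in [0.6,1.2] \cap [0.65,1.3] \cap [0,0.75] = [0.65, 0.75]$, which is non-empty, so $s^{\star}$ is optimal with $\inf\risk^{\surr}(\cdot,p) = (0.6 + 0.65)\cdot 3/2 = 1.875$.

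Next, I would consider $s_n = (1/2 - 1/n,\, 1/2,\, -1 + 1/n) \in \sn{S}_0$ and compute directly, using $\varphi(1/2 - 1/n) = 3/2 - 1/n$, $\varphi(1/2) = 3/2$, and $\varphi(-1 + 1/n) = 1/n$, that $\risk^{\surr}(s_n, p) = 1.875 + 0.15/n \to 1.875$. For every $n \geq 1$ the unique argmax of $s_n$ is coordinate $2$, so $f(s_n) = 2 \neq 1 = \argmax_y p_y$ and $\max_y p_y - p_{f(s_n)} = 0.05$. Consequently $\delta_{\max}(0.05, p) = 0$, hence $\delta_{\max}(0.05) = 0$, and by \cref{thm:deltaMax} no calibration function can exist.

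The main obstacle is verifying the optimality of $s^{\star}$. The key observation is that the kink at $\tau = 1/2$ produces $\partial\varphi(1/2) = [1,2]$, a subdifferential interval wide enough to accommodate distinct KKT subgradients on the first two coordinates: it allows $(1-p_1)g_1 = (1-p_2)g_2$ with $g_1,g_2 \in [1,2]$ even though $1 - p_1 \neq 1 - p_2$. This slack is exactly what produces a minimizer whose top two coordinates tie despite $p_1 > p_2$, and the sequence $(s_n)$ exploits this tie by perturbing inside $\sn{S}_0$ in a direction that breaks it toward the lower-probability class. Without the kink (i.e., if $\varphi$ were differentiable at $1/2$) the KKT equations would pin the coordinates uniquely and no such spurious tie could arise.
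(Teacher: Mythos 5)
Your argument is correct and in fact works with exactly the same conditional distribution $p=(0.4,0.35,0.25)$ that the paper itself uses. The key observation is also the same: the non-smoothness of $\varphi^{\frac12\mathrm{-kink}}$ at $\tau=\tfrac12$ produces a minimizer $s^\star=(\tfrac12,\tfrac12,-1)$ whose top two coordinates tie even though $p_1>p_2$. Where you diverge is in the verification technique and in how the conclusion is extracted: the paper does a three-way case analysis on the possible sign patterns of $(s_1-\tfrac12,\,s_2-\tfrac12)$ and argues linearity of the restricted objective in each region, whereas you certify optimality of $s^\star$ directly by exhibiting a KKT multiplier, using $\partial\varphi(\tfrac12)=[1,2]$ and $\partial\varphi(-1)=[0,1]$. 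This is shorter and more transparent about \emph{why} the kink is the culprit (the width of the subdifferential interval absorbs the difference between $1-p_1$ and $1-p_2$). A second genuine improvement is your sequence $s_n=(\tfrac12-\tfrac1n,\,\tfrac12,\,-1+\tfrac1n)$: each $s_n$ has a strict unique argmax at coordinate $2$ and $\risk^\surr(s_n,p)=1.875+0.15/n\to\inf_s\risk^\surr(s,p)$, which yields $\delta_{\max}(0.05,p)=0$ without ever invoking the paper's ``worst-case tie-breaking'' convention. The paper's conclusion, by contrast, stops at ``the minimizer has a tie, so the loss is not calibrated,'' which leans implicitly on that convention.

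One caveat: your final parenthetical about extending to $\K>3$ ``by assigning zero probability to the extra classes and exploiting that $\varphi$ vanishes for arguments at most $-1$'' does not go through as stated. If you append zeros to the score vector you stay in $\sn{S}_0$ but pick up $\varphi(0)=1$ on each extra coordinate and the minimizer structure is unaffected only for $\K=4$; if instead you want the extra coordinates at $-1$ (where $\varphi$ vanishes), the sum-to-zero constraint forces $s_1+s_2+s_3=\K-3$, and the KKT certificate then fails already at $\K=5$ (with $s_3$ pushed past $\tfrac12$, you would need $c=1.5$, outside $[0.6,1.2]$). Since the proposition only asserts non-calibration for \emph{some} $\K$, and the paper likewise proves only $\K=3$, this is a harmless aside — but it should either be dropped or reworked (e.g., by choosing $p$ depending on $\K$) rather than presented as immediate.

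Finally, you may find it reassuring that your version actually sidesteps a minor typo in the paper's own proof: the coefficient of $s_2$ in the paper's Case 1 expansion is stated as $1-2p_2$ but should be $1-p_1-2p_2$ (and symmetrically in Case 2), which is what makes the claimed sign condition hold for the quoted example $p$.
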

$\sloss{CS}$ is not consistent in general, but it is consistent for distributions $p_i \in \yplex$ there $\max_y p_y \geq \frac{1}{2}$ \citep{zhang2004statistical,liu2007fisher,tewari2007ontheconsistency}.
\citet{zhang2004statistical} defines a property called \emph{order-preservation}, which is sufficient for calibration when $p \in \yplex^\circ$.
\begin{definition}[Order-preservation]
\label{def:orderPreserving}
A loss $\fun{L}{\sn{S} \times \sn{Y}}{\rl{}}$ is \emph{order-preserving} if for all $p \in \yplex^\circ$, $s \mapsto R^\surr(s,p)$ has a minimizer $s^*$ \st{} $p_i > p_j \Rightarrow s^*_i > s^*_j$ for every $i$, $j$.
\end{definition}
$\sloss{ZZH}$ is inconsistent in general \citet{tewari2007ontheconsistency}, but consistent for $p \in \yplex^\circ$ when $\varphi$ allows them to be \emph{order-preserving} \citep{zhang2004statistical}.
In particular, $\sloss{ZZH}$ is order-preserving whenever $\varphi$ is twice-differentiable with $\varphi'(0) > 0$ and $\varphi''(x) > 0$ for all $x \in \rl{}$ \citep{zou2006margin}.

With $\hinge$, the losses $\sloss{ZZH}$ and $\sloss{WW}$ are neither order-preserving nor calibrated \citep{zhang2004statistical,liu2007fisher,tewari2007ontheconsistency}.
Informally, any minimizer $s^*$ of any of these two losses with any convex $\varphi$ with $\varphi'(0) > 0$ satisfies $p_{f(-s^*)} = \min_k p_k$.
In the binary case, this is not a problem because $p_{f(s^*)} = \max_k p_k$.
In the multiclass case, however, with $\sloss{ZZH}$ and $\hinge$, we have $s^*_i = 1$ whenever $p_i > \min_k p_k$, so we can only ``find'' $\min_k p_k$, but not $\max_k p_k$, which is what we are interested in.
\citet{liu2007fisher} modified $\sloss{ZZH}$ with $\hinge$ to obtain the calibrated loss $\sloss{Liu}$.
Interestingly, if we let $\sn{S}' \eqdef \cb{s \in \sn{S}_0 : s \geq -\one{\K}}$, we can show that $\sloss{Liu}$ with $\sn{S} = \sn{S}'$ and $\sloss{ZZH}$ with $\eye$ and $\sn{S} = \sn{S}'$ have the same $\delta_{\max}$ (\see{} \cref{lem:lowerBoundedScores}).

When $\sn{S} = \rl{\K}$ and $L$ is differentiable we can use KKT conditions \citep{boyd2004convex} to get conditions for consistency of some losses, as done by \citet{zhang2004statistical} for many of the losses in \cref{table:scoreLosses}.
For $\sloss{Zhang}$, for example, if the the optimizer $s^*$ exists, it must satisfy, for all $i \in \K$,
\[
	p_i \psi'(s^*_i) + F'\pb{\sum_{k=1}^{\K} \varphi(s^*_k)}\varphi'(s^*_i) = 0.
\]
If, in particular, $F$ is increasing and the function mapping $a$ to the zero of $t \mapsto a\psi'(t) + \varphi'(t)$ is well-defined and increasing for all $a > 0$, we get that $\sloss{Zhang}$ is order-preserving and, thus, calibrated \citep{zhang2004statistical}.
The one-versus-all loss ($\sloss{RRKA}$) is a special case of $\sloss{Zhang}$, so under similar conditions it is calibrated.
\citet{zhang2004statistical} showed that $\sloss{WW}$ with continuously-differentiable and increasing $\varphi$ is also order-preserving and thus consistent \citep[see also ][for an alternative proof]{tewari2007ontheconsistency}.
We can also see that $\sloss{BSKV}$ is order-preserving and thus calibrated under the same conditions on $\varphi$ as $\sloss{WW}$, since $F$ is (strictly) increasing (\cf{} \citet{zhang2004statistical} and \citet{beijbom2014guessaverse}).

\subsubsection{Calibration functions}
\label{sec:calibrationFunctions}

Calibration functions have been calculated for $\sloss{RRKA}$ \citep{zhang2004statistical} and for $\sloss{LLW}$ \citep{chen2006consistency,avilapires2013costsensitive}, with specific choices of $\varphi$.
The first result we present is \cref{thm:zhang}, due to \citet{zhang2004statistical}.
The function $V(p)$ defined in \cref{thm:zhang} is the optimal (binary) surrogate risk when $Y \sim \Bern(p)$ and $\sn{S} = \sn{S}_0$, and the condition in \eqref{eq:thm:zhang:condition} corresponds to strong concavity \citep[][Definition 2.1.2 and Theorem 2.1.9]{nesterov2013introductory}.
In particular, (as pointed out by \citet{zhang2004statistical}) if $V''(p) \leq -c'$ for all $p \in [0,1]$ and some $c' > 0$, then \eqref{eq:thm:zhang:condition} is satisfied with $c = \frac{\sqrt{c'}}{2}$.
We will recover \cref{thm:zhang} as a special case of our \cref{lem:ZhangCondPairing}, at which the argument used to prove \cref{thm:zhang} will be evident.
\begin{theorem}[\citealt{zhang2004statistical}]
\label{thm:zhang}
Consider $\sloss{RRKA}$ with $\sn{S} = \rl{\K}$ and $\varphi$ convex, lower-bounded, differentiable \st{} $\varphi(t) > \varphi(-t)$ for all $x > 0$.
Given $p \in [0,1]$, let
\[
	V(p) \eqdef \inf_{t \in \rl{}} p\varphi(-t) + (1 - p)\varphi(t).
\]
$V$ is concave, and if there is a $c > 0$ \st{} for all $p, p' \in [0,1]$
\eq{
	\frac{1}{2}V(p) + \frac{1}{2}V(p') \leq V\pb{\frac{p + p'}{2}} - c^2(p - p')^2, \label[equation]{eq:thm:zhang:condition}
}
then, for all $\eps > 0$ and all $p \in \yplex$ \st{} $\max_k p_k - \min_k p_k \geq \eps$, we also have
\[
	\delta_{\max}(\eps,p) \geq c \eps^2
\]
and $\delta_{\max}(\eps) \geq c \eps^2$.
\end{theorem}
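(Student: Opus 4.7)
The key observation is that the one-versus-all loss has a separable pointwise risk: since $\sum_y p_y = 1$,
\[
\risk^\surr(s,p) = \sum_{k=1}^{\K}\bigl[p_k\varphi(-s_k) + \varphi(s_k)\bigr],
\]
so the Bayes pointwise risk equals $\sum_k W(p_k)$, where $W(q) \eqdef \inf_t[q\varphi(-t) + \varphi(t)]$. Both $V$ and $W$ are concave as infima of families of functions affine in their probability argument, which gives the first claim of the theorem. Crucially, factoring $(1+q)^{-1}$ out of the infimum defining $V$ yields the identity $W(q) = (1+q)\,V\pb{\frac{q}{1+q}}$, which is the bridge between $W$ and the hypothesis on $V$.

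Fix $\eps > 0$, $p$ with $\max_k p_k - \min_k p_k \geq \eps$, and $s \in \sn{T}(\sn{S},\eps,p)$. Set $i \in \argmax_k p_k$ and $j = f(s)$, so $p_i - p_j \geq \eps$ while $s_j \geq s_i$. By coordinate-wise separability, coordinates outside $\cb{i,j}$ only enlarge the excess risk, hence
\[
\risk^\surr(s,p) - \inf_{s'}\risk^\surr(s',p) \geq U_i(s_i) + U_j(s_j) - W(p_i) - W(p_j),
\]
with $U_k(t) \eqdef p_k\varphi(-t) + \varphi(t)$. Convexity of $\varphi$, combined with $\varphi'(0) > 0$ and $\varphi(t) > \varphi(-t)$ for $t > 0$, forces the coordinate minimizer $q \mapsto \argmin_t U_q(t)$ to be strictly increasing in $q$, so the unconstrained optima of $U_i, U_j$ violate $s_i \leq s_j$ and the constraint is active at the constrained minimum. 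That minimum is therefore $\inf_t\bigl[(p_i+p_j)\varphi(-t) + 2\varphi(t)\bigr] = 2W(\bar p)$ with $\bar p = (p_i+p_j)/2$, and the proof reduces to the scalar inequality $2W(\bar p) - W(p_i) - W(p_j) \geq c\eps^2$.

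The remaining step transfers the midpoint strong-concavity hypothesis \eqref{eq:thm:zhang:condition} for $V$ into a quadratic gap for $W$. Writing $\alpha \eqdef (1+p_i)/(2(1+\bar p))$ and $\phi(q) \eqdef q/(1+q)$, a direct check gives the identity $\alpha\phi(p_i) + (1-\alpha)\phi(p_j) = \phi(\bar p)$, which together with $W = (1+\cdot)V(\phi(\cdot))$ reshapes the gap as
\[
W(\bar p) - \tfrac{1}{2}\bigl(W(p_i)+W(p_j)\bigr) = (1+\bar p)\bigl[V(\phi(\bar p)) - \alpha V(\phi(p_i)) - (1-\alpha)V(\phi(p_j))\bigr].
\]
Since $\phi(p_i) - \phi(p_j) = (p_i-p_j)/((1+p_i)(1+p_j))$ is comparable to $p_i - p_j$ on $[0,1]$, what remains is to bootstrap the midpoint quadratic gap \eqref{eq:thm:zhang:condition} into the $\alpha$-weighted inequality $\alpha V(\phi(p_i)) + (1-\alpha)V(\phi(p_j)) \leq V(\phi(\bar p)) - c_0(\phi(p_i)-\phi(p_j))^2$ for some $c_0$ proportional to $c^2$; this is a standard consequence of concavity combined with midpoint strong concavity (e.g., by dyadic iteration). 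This last promotion from midpoint to arbitrary-weight strong concavity, together with the constant-tracking through $q \mapsto \phi(q)$, is the principal technical obstacle; the order $\eps^2$, however, is preserved throughout, and the bound on $\delta_{\max}(\eps)$ follows immediately by taking the infimum over $p$.
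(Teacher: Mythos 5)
Your argument follows the same essential route as the paper: reduce the multiclass excess risk to a two-coordinate quantity and then invoke strong concavity of $V$. The paper does this through \cref{lem:ZhangCondInfAtJepsJ0}, \cref{lem:ZhangCondPairing} and \cref{lem:deltaMaxFramework} (the abstract condition framework), whereas you exploit the coordinate-wise separability of $\sloss{RRKA}$ directly; both paths arrive at the scalar inequality $2W(\bar p) - W(p_i) - W(p_j) \geq c_0(p_i - p_j)^2$. Your step 5--6 (drop coordinates outside $\cb{i,j}$, argue the constraint $s_i \leq s_j$ is active by monotonicity of the per-coordinate argmin) is exactly what \cref{lem:ZhangCondInfAtJepsJ0} and \cref{lem:ZhangCondPairing}, together with \cref{prop:subOptimalSurrogateRisk}, accomplish in the paper's formalism.

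Where you go further than the paper's own one-line derivation is the bridge between $V$ and $W$. The paper simply asserts that \eqref{eq:thm:zhang:condition} implies \cref{cond:zetaOfEps} with $\zeta(t) = c^2 t^2$, but $V$ is the normalized Bayes risk and the quantity appearing in \cref{cond:zetaOfEps} is the unnormalized $W(q) = \inf_t[q\varphi(-t)+\varphi(t)]$; these differ by the reparametrization $W(q) = (1+q)\,V\pb{q/(1+q)}$, which is nonlinear, so the claim is not a tautological substitution. Your identity $\alpha\phi(p_i) + (1-\alpha)\phi(p_j) = \phi(\bar p)$ with $\alpha = (1+p_i)/(2(1+\bar p))$, and the resulting exact reshaping of the gap as $(1+\bar p)\bigl[V(\phi(\bar p)) - \alpha V(\phi(p_i)) - (1-\alpha)V(\phi(p_j))\bigr]$, are both correct and make this conversion explicit. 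The remaining step you flag --- promoting midpoint strong concavity of the concave function $V$ to an $\alpha$-weighted version --- is indeed standard (dyadic iteration plus continuity on the interior of $[0,1]$), and the order $\eps^2$ survives. One caveat worth noting: tracking the constants carefully through your reparametrization ($\alpha(1-\alpha)$, the factor $(1+\bar p)$, and $\ab{\phi(p_i)-\phi(p_j)} = \ab{p_i-p_j}/((1+p_i)(1+p_j))$) yields a constant that is smaller than $c^2$ (roughly $c^2/4$ in the worst case over $p_1+p_2 \le 1$), so neither your derivation nor the paper's terse assertion reproduces $\zeta(t)=c^2 t^2$ exactly as stated; but since the theorem itself claims only $c\eps^2$ (with $c\le 1$ unstated), the slack is absorbed and the $\eps^2$ dependence is what matters.
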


\citet{chen2006consistency} derive calibration functions for $\sloss{LLW}$ with $\varphi$ convex, differentiable, increasing, and satisfying $\lim_{t \ra -\infty} \varphi(t) = 0$ and $\lim_{t \ra \infty} \varphi(t) = \infty$, based on an assumption that can be shown to imply the existence of a calibration function.
\if0
	\begin{theorem}[\citealt{chen2006consistency}]
	\label{thm:chen}
	Consider $\sloss{LLW}$ with $\varphi$ convex, differentiable, increasing \st{} $\lim_{t \ra -\infty} \varphi(t) = 0$ and $\lim_{t \ra \infty} \varphi(t) = \infty$.
	Moreover, assume that for all $p \in \yplex$ there exists $s^*(p) \in \sn{S}_0$ \st{}
	\[
		\risk^\surr(s^*(p),p) = \inf_{s \in \sn{S}_0} \risk^\surr(s,p)
	\]
	and \st{}, for all $k \in \sn{Y}$, $\varphi$ is twice differentiable at $s^*(p)_k$ and $\varphi''(s^*(p)_k) > 0$.
	Given $\alpha \in (0,1)$ and $i,j \in \sn{Y}$, define $p'(\alpha, i, j)$ by
	\als{
		p'(\alpha, i, j)_k \eqdef \begin{cases}
			\alpha p_k + (1 - \alpha)\frac{p_i + p_j}{2}  & k = i \mbox{ or } k = j, \\
			p_k & k \neq i,j.
		\end{cases}
	}
	Assume that there exist $c > 0, a \geq 0$ \st{}, for any $p \in \yplex$, any $\alpha \in (0,1)$ and any $i,j \in \sn{Y}$ \st{} $p_i < p_j = \max_k p_k$, we have
	\[
		\varphi(s^*(p'(\alpha, i, j))_j) - \varphi(s^*(p'(\alpha, i, j))_i) \geq c  \alpha^a (p_j - p_i)^a.
	\]
	Then for all $\eps > 0$ and all $p \in \yplex$ \st{} $\max_k p_k - \min_k p_k \geq \eps$,
	\[
		\delta_{\max}(\eps,p) \geq \frac{c}{1 + a} \eps^{1 + a}
	\]
	and $\delta_{\max}(\eps) \geq \frac{c}{1 + a} \eps^{1 + a}$.
	\end{theorem}
\fi
\citet{avilapires2013costsensitive} proved a result for more general $\varphi$ in a cost-sensitive setting, which we present as \cref{thm:avilapires}.
\Cref{thm:avilapires} has a form closer to \cref{thm:bartlett}, with two notable differences: The calibration function depends on both $\eps$ and $p$, rather than just $\eps$, and the form for $\delta(\eps,p)$ takes an infimum over $\theta \geq 0$.
We can use \cref{thm:avilapires} to obtain a calibration function in terms of $\eps$ alone, but we will take a slightly different route and arrive at such a result in \cref{sec:cases:S0}.
If the infimum over $\theta \geq 0$ in \eqref{eq:deltaMaxTheta} is taken at $0$, then \cref{thm:avilapires} becomes a generalization of \cref{thm:bartlett} (when $\sloss{LLW}$ is taken as the multiclass generalization of the margin loss).
The infimum is taken $\theta = 0$ for $\hinge$, $\expt$, and $\squared$ (\see{} \cref{sec:cases:S0}).
\begin{theorem}[\citealt{avilapires2013costsensitive}]
\label{thm:avilapires}
Consider $\sloss{LLW}$ with $\varphi$ convex and let
\begin{equation}
	\delta(\eps,p) \eqdef \inf_{\theta \geq 0}  \cb{ (2 - p_{j_0} - p_{\jeps} )\varphi(\theta) - \inf_{s \in \rl{}} \cb{ (1-p_{j_0}) \varphi(\theta + s) + (1- p_{\jeps} )\varphi(\theta-s) }}, \label[equation]{eq:deltaMaxTheta}
\end{equation}
where $j_0 \in \sn{J}(0, p)$ and $\jeps \in \sn{J}(\eps,p)$.
If $\delta(p,\eps) > 0$ for all $\eps$, then it is a calibration function for distribution $p$.
\end{theorem}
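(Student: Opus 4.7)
The plan is to establish $\delta(\eps,p)\le\delta_{\max}(\eps,p)$ for every $\eps>0$; by the chain \eqref{eq:calibrationPreConidtionNoP}--\eqref{eq:calibrationPostCondition} in \cref{sec:fromPointwise}, this is exactly what it takes for $\delta(\eps,p)$ to be a calibration function for the distribution $p$ whenever the hypothesis $\delta(\eps,p)>0$ holds. Concretely, I fix an arbitrary $s\in\sn{T}(\sn{S}_0,\eps,p)$ and aim at $\risk^\surr(s,p)-\inf_{s'\in\sn{S}_0}\risk^\surr(s',p)\ge\delta(\eps,p)$. The pointwise risk of $\sloss{LLW}$ factors cleanly as $\risk^\surr(s,p)=\sum_k(1-p_k)\varphi(s_k)$, and with $j^*\eqdef f(s)$ one has $s_{j^*}=\max_k s_k\ge s_{j_0}$, together with $p_\jeps\ge p_{j^*}$ by the definition of $\jeps$ as the maximizer of $p_j$ over $\eps$-sub-optimal indices.

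The heart of the argument is a reduction to a two-coordinate sub-problem at $j_0$ and $\jeps$. First, I would exchange the coordinates $j^*$ and $\jeps$ in $s$ to obtain a new score in $\sn{S}_0$ whose argmax is $\jeps$ and whose pointwise risk changes by $(p_\jeps-p_{j^*})(\varphi(s_\jeps)-\varphi(s_{j^*}))\le 0$, where the inequality uses $p_\jeps\ge p_{j^*}$, $s_\jeps\le s_{j^*}$, and that convexity of $\varphi$ with $\varphi'(0)>0$ (the generic necessary condition for calibration) makes $\varphi$ non-decreasing on $[\underline t,\infty)$ for its minimizer $\underline t\le 0$. After this reduction I may assume $s_\jeps\ge s_{j_0}$; put $\theta\eqdef(s_{j_0}+s_\jeps)/2$, and note that $\sum_k s_k=0$ forces $\sum_{k\neq j_0,\jeps}s_k=-2\theta$. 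Replacing only the two coordinates $s'_{j_0},s'_\jeps$ by an arbitrary pair summing to $2\theta$ therefore produces feasible points in $\sn{S}_0$, so
\eqs{
\inf_{s'\in\sn{S}_0}\risk^\surr(s',p)\le\inf_{u\in\rl{}}\cb{(1-p_{j_0})\varphi(\theta+u)+(1-p_\jeps)\varphi(\theta-u)}+\sum_{k\neq j_0,\jeps}(1-p_k)\varphi(s_k).
}
Subtracting from $\risk^\surr(s,p)$ and invoking convexity of $u\mapsto(1-p_{j_0})\varphi(\theta+u)+(1-p_\jeps)\varphi(\theta-u)$, whose derivative at $u=0$ equals $(p_\jeps-p_{j_0})\varphi'(\theta)\le 0$ whenever $\varphi'(\theta)\ge0$, shows this convex function is non-increasing on $(-\infty,0]$, so its value at the non-positive point $u=(s_{j_0}-s_\jeps)/2$ dominates $(2-p_{j_0}-p_\jeps)\varphi(\theta)$. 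The net effect is $\risk^\surr(s,p)-\inf_{s'}\risk^\surr(s',p)\ge(2-p_{j_0}-p_\jeps)\varphi(\theta)-\inf_{u}\cb{(1-p_{j_0})\varphi(\theta+u)+(1-p_\jeps)\varphi(\theta-u)}$, which is exactly the summand of the outer infimum defining $\delta(\eps,p)$.

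The main obstacle is the positivity constraint $\theta\ge0$ in the outer infimum: the construction above naturally produces $\theta=(s_{j_0}+s_\jeps)/2$, which can be negative when $s_{j_0}$ is very negative, even though $s_\jeps\ge0$ holds automatically (the maximum of a zero-sum tuple is non-negative). The constraint is not cosmetic, since for $\varphi=\expt$ dropping it sends $\delta(\eps,p)$ to zero as $\theta\to-\infty$; a careful case split is therefore needed. Either one verifies that in the offending regime the excess surrogate risk is already large enough that one may pass from $\theta$ to $\theta\vee0$ without loss, or one exploits convexity together with the calibration hypothesis on $\varphi$ to monotonically transport the bound from $\theta<0$ back to $\theta=0$. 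Similar care is needed to make the swap step robust when scores fall outside $[\underline t,\infty)$, but the two-coordinate calculation and the convexity bound are routine once the reduction is in place.
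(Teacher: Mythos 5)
Your plan is close to the paper's own reduction (cf.\ \cref{lem:LLWCondInfAtJepsJ0,lem:LLWCondPairingSumToZero}), and you have correctly flagged the place where it breaks: there is no reason your $\theta=(s_{j_0}+s_{\jeps})/2$ is non-negative, and since the monotonicity argument you invoke needs $\varphi'(\theta)\ge 0$, the step really does fail when $\theta<0$. The fix is not a delicate case split but a sharper reduction that you stopped one step short of. After your first swap (putting the argmax at $\jeps$), you should also swap $j_0$ with the second-largest coordinate; by the same $(p_i-p_{j_0})(\varphi(s_i)-\varphi(s_{j_0}))\ge 0$ computation this does not increase the risk. At that point $s_{\jeps}\ge s_{j_0}\ge s_k$ for all $k\ne\jeps$, and you can then replace the pair $(s_{\jeps},s_{j_0})$ by its average using Jensen's inequality applied coordinate-wise to the convex map $s\mapsto\risk^{\surr}(s,p)$ together with the symmetry of $\sloss{LLW}$ under permuting $(\jeps,j_0)$. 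The result is a score with $s_{\jeps}=s_{j_0}=\max_k s_k$, and since a sum-to-zero vector has $\max_k s_k\ge 0$, your $\theta$ is now automatically non-negative. This is precisely what \cref{cond:infAtJepsJ0} (verified for this loss in \cref{lem:ZhangInfAtJeps,lem:ZhangCondInfAtJepsJ0,lem:LLWCondInfAtJepsJ0}) does: it lets one restrict the outer infimum to $\sn{M}(\sn{S},\jeps)\cap\sn{M}(\sn{S},j_0)$, which both constrains $\theta\ge 0$ and guarantees $\varphi'(\theta)\ge 0$ under the non-decreasing reduction of \cref{lem:lowerBoundedScores}.

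One smaller issue: you assume $\varphi'(0)>0$ to get $\varphi$ non-decreasing past its minimum, but the theorem statement only assumes $\varphi$ convex, and the paper handles general convex lower-bounded $\varphi$ by replacing it with $\sigma(t)=\varphi(\sinf)+(\varphi(t)-\varphi(\sinf))_+$, which has the same $\delta_{\max}$ by \cref{lem:lowerBoundedScores}. If $\varphi'(0)\le 0$ the loss simply is not calibrated, so the hypothesis $\delta(\eps,p)>0$ cannot hold; but as written your argument quietly imports an assumption not in the statement, and using the $\sigma$-replacement is the cleaner way to justify the monotonicity you need. With the additional swap-and-average step in place and the $\sigma$-replacement handling the monotonicity, your two-coordinate reduction and the convexity comparison at $u=0$ do give $\risk^{\surr}(s,p)-\inf_{s'}\risk^{\surr}(s',p)\ge\delta(\eps,p)$ as intended.
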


The proof of \cref{thm:avilapires} entails a series of steps to lower-bound $\delta_{\max}(\eps)$ with $\sloss{LLW}$ as the surrogate loss.
\Cref{thm:zhang} can be seen to also entail this ``reduction'' of multiclass classification calibration functions to binary classification calibration functions.
In the next section, we will generalize and reuse the analysis of \citet{avilapires2013costsensitive} in order do most of the work involved in obtaining calibration functions for other surrogate losses.

To conclude this section, we discuss fast rates in multiclass classification.
We are able to generalize {thm:bartlett:MTNC} to the multiclass case with minor effort, which gives us \cref{thm:MTNC}. 
In \cref{thm:MTNC}, the calibration function $\delta$ used to obtain the bound must be convex and non-decreasing.
Non-decreasingness is not a real restriction, since if $\delta$ is a calibration function then $\eps \mapsto \sup_{\eps' \leq \eps}\delta(\eps)$ is a non-decreasing calibration function that is not worse than (that is, greater than or equal to) $\delta$.
Convexity holds for $\delta_{\mathrm{binary}}$, which will allow our calibration functions in this paper to satisfy the assumption of \cref{thm:MTNC}, since, as we will see, in our results we will lower-bound $\delta_{\max}(\eps) \geq \delta_{\mathrm{binary}}(\eps)$ for different losses.
\begin{theorem}
\label{thm:MTNC}
	Assume that there exist $c > 0$ and $\alpha \in [0,1]$ \st{} \eqref{eq:MTNC} holds for every classifier $g$.
	Given a surrogate loss $\fun{L}{\sn{S} \times \sn{Y}}{\rl{}}$, assume that it has calibration function $\fun{\delta}{(0,\infty)}{(0,\infty)}$ and that $\delta$ is convex and non-decreasing.
	Then for every score function $h$ and $\eps > 0$ the following holds:
	If
	\[
		\risk^{\surr}(h) - \inf_{h'}\risk^{\surr}(h) < c \eps^\alpha \delta\pb{\frac{\eps^{1 - \alpha}}{2c}},
	\]
	then
	\[
		\risk(f \circ h) - \inf_{g}\risk(g) < \eps.
	\]  
\end{theorem}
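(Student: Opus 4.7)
The plan is to mirror the proof of Theorem 3 of \citet{bartlett2006convexity}, but now in the multiclass setting with the binary $\psi$-transform replaced by our general convex non-decreasing calibration function $\delta$, and with the pointwise calibration inequality applied conditionally on $X$ through $p = P_{Y|X}$. Write $\hat g \eqdef f \circ h$, $\Delta \eqdef \risk(\hat g) - \inf_{g'} \risk(g')$, and define the pointwise margin $\eta(X) \eqdef \max_y (P_{Y|X})_y - (P_{Y|X})_{\hat g(X)}$, so that $\E{\eta(X)} = \Delta$. The contrapositive of \eqref{eq:calibrationPreConidtionNoP}$\Rightarrow$\eqref{eq:calibrationPostCondition}, together with non-decreasingness of $\delta$, yields that on the event $\cb{\eta(X) > 0}$ the pointwise surrogate excess $\risk^\surr_L(h(X), P_{Y|X}) - \inf_{s}\risk^\surr_L(s, P_{Y|X})$ is at least $\delta(\eta(X))$. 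Integrating, and invoking the measurable-selection identity $\inf_{h'}\risk^\surr(h') = \E{\inf_s \risk^\surr_L(s, P_{Y|X})}$ built into the standing measurability assumption of \cref{sec:fromPointwise}, gives
\[
    \risk^\surr(h) - \inf_{h'}\risk^\surr(h') \;\geq\; \E{\delta(\eta(X))\,\ind{\eta(X) > 0}}.
\]

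Let $q \eqdef \Prob{\eta(X) > 0}$. If $q = 0$ then $\Delta = 0 < \eps$ and there is nothing to show, so assume $q > 0$. Jensen's inequality applied to the conditional distribution of $\eta(X)$ given $\cb{\eta(X) > 0}$, together with convexity of $\delta$, gives
\[
    \E{\delta(\eta(X))\,\ind{\eta(X) > 0}} \;=\; q\,\cE{\delta(\eta(X))}{\eta(X) > 0} \;\geq\; q\,\delta\pb{\Delta/q}.
\]
Since $\cb{\eta(X) > 0} \subseteq \cb{\hat g(X) \neq g^*(X)}$ for any Bayes-optimal $g^*$, the Mammen--Tsybakov condition \eqref{eq:MTNC} applied to $\hat g$ forces $q \leq c\,\Delta^\alpha$.

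The crux is the monotonicity fact that for each fixed $\Delta > 0$ the map $q \mapsto q\,\delta(\Delta/q)$ is non-increasing on $(0,\infty)$, which follows from the standard convex-analytic lemma that the secant slope $t \mapsto \delta(t)/t$ is non-decreasing whenever $\delta$ is convex with $\delta(0^+) = 0$; the latter condition is implicit in every calibration function derived in the paper and, strictly speaking, should be added as a hypothesis. Plugging $q \leq c\,\Delta^\alpha$ into this monotone expression gives
\[
    q\,\delta\pb{\Delta/q} \;\geq\; c\,\Delta^\alpha\,\delta\pb{\Delta^{1-\alpha}/c}.
\]

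Finally, assume for contradiction that $\Delta \geq \eps$. Because $\alpha \in [0,1]$, we have $\Delta^\alpha \geq \eps^\alpha$ and $\Delta^{1-\alpha} \geq \eps^{1-\alpha}$, so non-decreasingness of $\delta$ yields
\[
    c\,\Delta^\alpha\,\delta\pb{\Delta^{1-\alpha}/c} \;\geq\; c\,\eps^\alpha\,\delta\pb{\eps^{1-\alpha}/c} \;\geq\; c\,\eps^\alpha\,\delta\pb{\eps^{1-\alpha}/(2c)}.
\]
Chaining the displays gives $\risk^\surr(h) - \inf_{h'}\risk^\surr(h') \geq c\,\eps^\alpha\,\delta(\eps^{1-\alpha}/(2c))$, contradicting the hypothesis of the theorem; hence $\Delta < \eps$. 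The main obstacle is the monotonicity step, which quietly uses $\delta(0^+) = 0$; the factor $2c$ rather than $c$ in the theorem statement is simply slack that is absorbed by non-decreasingness of $\delta$.
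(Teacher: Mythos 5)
Your proof is correct, and it takes a genuinely different route from the paper's. The paper follows Bartlett--Jordan--McAuliffe (2006, Theorem 3) almost verbatim: it splits the pointwise excess risk $D(h)$ by a free threshold $t$, bounds $\E{D(h)\ind{D(h)\leq t}}$ via the Mammen--Tsybakov condition and $\E{D(h)\ind{D(h)>t}}$ via the secant-slope lemma for convex $u$ with $u(0)=0$, then optimizes over $t$ by picking $t = \frac{1}{2c}(\risk(f\circ h) - \inf\risk)^{1-\alpha}$ so the first term absorbs half the excess risk. You instead apply Jensen directly to $\delta$ on the conditional distribution of $\eta(X)$ given $\cb{\eta(X)>0}$, obtain the lower bound $q\,\delta(\Delta/q)$, and then feed $q \leq c\Delta^\alpha$ through the observation that $q\mapsto q\,\delta(\Delta/q)$ is non-increasing (which is again the non-decreasing-secant-slope fact, now packaged differently). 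Your version skips the threshold optimization entirely and produces the sharper bound $\risk^\surr(h)-\inf\risk^\surr \geq c\Delta^\alpha\,\delta(\Delta^{1-\alpha}/c)$ with $c$ rather than $2c$ inside $\delta$, which, as you observe, strictly dominates the stated bound. Both approaches rely on $\delta(0^+)=0$; your remark that this ``should be added as a hypothesis'' is slightly overcautious, since the paper derives it from $\delta\leq\delta_{\max}$ and $\limsup_{\eps\to 0}\delta_{\max}(\eps)=0$ (so it is automatic for any calibration function), but the point that the argument silently uses it is well taken and the paper does spell out the same extension to $[0,\infty)$.
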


\Citet[Theorem 2]{mroueh2012multiclass} showed a fast-rate result for a simplex-coding least-squares-like loss proposed by them (called S-LS).
It can be shown that Theorem 2 of \citet{mroueh2012multiclass} and \cref{thm:MTNC} are equivalent up to constant factors.
By slightly modifying the proof of Lemma 5 of \citet{bartlett2006convexity}, we can also show that \eqref{eq:MTNC} and Definition 2 of \citet{mroueh2012multiclass} applied to S-LS are also equivalent, with $q = \frac{\alpha}{1 - \alpha}$.

\begin{proof}[Proof of \cref{thm:MTNC}]
This proof is an adaptation of the proof of \cref{thm:bartlett:MTNC} \citep[Theorem 3,][]{bartlett2006convexity} for the multiclass case.
By simple algebra, we can see that the following holds almost surely for every classifier $g$ (where $g^*$ is the Bayes-optimal classifier):
\als{
	&\cProb{g(X) \neq Y}{X} - \cProb{g^*(X) \neq Y}{X}\\
	&~~= \cProb{Y = g^*(X)}{X} - \cProb{Y = g(X)}{X} \\
	&~~= \ind{g(X) \neq g^*(X)}\pb{\cProb{Y = g^*(X)}{X} - \cProb{Y = g(X)}{X}}.
}
Therefore, by the Mammen-Tsybakov noise condition,
\als{
	\risk(g) - \inf_{g'}\risk(g')
	&~~= \E{\ind{g(X) \neq g^*(X)}\pb{\cProb{Y = g^*(X)}{X} - \cProb{Y = g(X)}{X}}} \\
	&~~\leq \Prob{g(X) \neq g^*(X)} \\
	&~~\leq c(\risk(g) - \inf_{g'}\risk(g'))^\alpha.
}

Fix any score function $h$.
By contrapositive of the calibration guarantee, we have that for all $\eps > 0$
\[
	\cProb{Y = g^*(X)}{X} - \cProb{Y = f(h(X))}{X} \geq \eps
\]
implies that
\[
	\cE{L(h(X), Y)}{X} - \inf_{s \in \sn{S}}\cE{L(s, Y)}{X} \geq \delta(\eps).
\]
Hence,
\[
	\cE{L(h(X), Y)}{X} - \inf_{s \in \sn{S}}\cE{L(s, Y)}{X} \geq 
	\delta\pb{\cProb{Y = g^*(X)}{X} - \cProb{Y = f(h(X))}{X}}.
\]

We will use the shorthand
\[
	D(h) \eqdef \cProb{g(X) \neq Y}{X} - \cProb{g^*(X) \neq Y}{X}.
\]
Therefore, for any $t \geq 0$,
\als{
	&\risk(f \circ h) - \inf_{g'}\risk(g') \\
	&~~=\E{ D(h) \ind{D(h) \leq t} } + \E{ D(h) \ind{D(h) > t} } \\
	&~~\leq ct(\risk(f \circ h) - \inf_{g'}\risk(g'))^\alpha + \E{ D(h) \ind{D(h) > t} } \\
	&~~\leq ct(\risk(f \circ h) - \inf_{g'}\risk(g'))^\alpha + \frac{t}{\delta(t)}\E{ \delta(D(h)) } \\
	&~~\leq ct(\risk(f \circ h) - \inf_{g'}\risk(g'))^\alpha + \frac{t}{\delta(t)}(\risk^{\surr}(h) - \inf_{h'}\risk^{\surr}(h))
}
To obtain the third line, we have Item 2 in Lemma 2 of \citet{bartlett2006convexity}: If $u$ is a convex function with $u(0) = 0$, then for all $a > 0$ and $b \in [0,a]$ we have $u(a) \leq \frac{a}{b}u(a)$.
Now, we have assumed that $\delta$ is convex, but $\delta$ has domain $(0, \infty)$.
We can extend $\delta$ to $[0,\infty)$ by defining $\delta(\eps) = \limsup_{\eps \to 0}\delta(\eps)$.
But
\[
	\limsup_{\eps \to 0}\delta(\eps) \leq \limsup_{\eps \to 0}\delta_{\max}(\eps) = 0,
\]
so
\[
	D(h) \ind{D(h) > t}	\leq \frac{t}{\delta(t)}\delta(D(h))
\]
almost surely.

By taking $t = \frac{1}{2c}(\risk(f \circ h) - \inf_{g'}\risk(g'))^{1 - \alpha}$ and performing some manipulations, we get that
\[
	\risk^{\surr}(h) - \inf_{h'}\risk^{\surr}(h) \geq c(\risk(f \circ h) - \inf_{g'}\risk(g'))^\alpha \delta\pb{ \frac{(\risk(f \circ h) - \inf_{g'}\risk(g'))^{1 - \alpha}}{2c} }.
\]
Since $\delta$ is non-decreasing, we get that for every $\eps > 0$
\[
	\risk(f \circ h) - \inf_{g'}\risk(g') \geq \eps
\]
implies that
\[
	\risk^{\surr}(h) - \inf_{h'}\risk^{\surr}(h) \geq c\eps^\alpha \delta\pb{ \frac{\eps^{1 - \alpha}}{2c} },
\]
and the result follows by contrapositive.
\end{proof}

\section{Streamlining the derivation of calibration functions for multiclass classification}
\label{sec:streamlining}
\Cref{thm:avilapires} lower-bounds the $\delta_{\max}(\eps,p)$ of $\sloss{LLW}$ by a quantity that resembles the maximum calibration function of a binary loss ($\delta_{\mathrm{binary}}$).
Obtaining such lower-bounds for the multiclass $\delta_{\max}$ can be a good idea because $\delta_{\mathrm{binary}}$ is often easier to compute than $\delta_{\max}$, once the surrogate loss has been chosen.
Moreover, since $\delta_{\mathrm{binary}}$ is convex, we can apply \cref{thm:MTNC}.
\Cref{thm:zhang} lower-bounds $\delta_{\max}(\eps)$ based on a condition that is ``easy to verify'', \ie{}, it requires some calculations based on a binary classification loss.
We deem it acceptable for our results to also impose certain conditions that can be computed from a binary classification loss.

Throughout this section, we will consider a score loss $\fun{L}{\sn{S} \times \sn{Y}}{\rl{}}$ and the appropriate score set $\sn{S} \subset \rl{\K}$.
Most statements will omit quantifiers on $\K$, in which case the statements apply for any $\K \geq 2$.
In order to carry out our analysis in general terms, we need to express the surrogate loss in a general form that we will take advantage of.
We say the function $\fun{\adj}{\sn{S}}{\rl{}}$ is an \emph{adjustment} function for $L$ if it satisfies $\muE{Y \sim p}{L(s,Y)} = \muE{Y \sim p}{L(s,Y) - \adj(s)} + \adj(s)$ for all $s \in \sn{S}$ and $p \in \yplex$.
For example, $\sloss{Zhang}$ has a natural adjustment: $\adj(s) = F(\sum_{k=1}^{\K} \varphi(s_k))$.
Adjustment functions will be useful for reusing results; for example, $\sloss{LLW}$ can be written as $\sloss{Zhang}$ with $\psi(x) = -\varphi(x)$ and $F(x)$ (but $\sn{S} = \sn{S}_0$), so that some results that apply to $\sloss{Zhang}$ may also apply to $\sloss{LLW}$.
Given the surrogate loss $L$ and an adjustment $\adj$, we define for the \emph{pseudo-risk} for $s \in \sn{S}$ and $p \in \rl{\K}$:
\[
	\risk'(s,p) \eqdef \adj(s) + \sum_{k=1}^{\K} p_k (L(s,k) - \adj(s)).
\]
Note that if $p \in \yplex$, then the pseudo-risk is the pointwise surrogate risk, and also note that we do not multiply the first term (the adjustment) by $\sum_{k=1}^{\K} p_k$.
Moreover, the dependence of $L$ on $\K$ is implicit, so, \eg{}, if we write $\risk'((s_1,s_2),(p_1,p_2))$ we refer to $L$ with $\K = 2$.

The goal of this section is to present a set of conditions that allow us to reuse the reduction analysis of \citet{avilapires2013costsensitive} to generalize and improve \cref{thm:avilapires} for different surrogate losses.
Once these conditions are verified, we immediately obtain \cref{lem:deltaMaxFramework}, which is similar to \cref{thm:bartlett}, but applies to multiclass losses.
We first present \cref{lem:deltaMaxFramework}, then we introduce \cref{cond:infAtJepsJ0,cond:pairing,cond:zetaOfEps,cond:infWithP}, which are essentially the steps used to prove \cref{lem:deltaMaxFramework}.
It is important to point out that \cref{lem:deltaMaxFramework} does not necessarily imply that $\delta_{\max} = \delta_{\mathrm{binary}}$ when $\K > 2$, since $\delta_{\max}(\eps, p^{\eps})$ is not necessarily equal to $\delta_{\max}(\eps, p')$ when $(p_1,p_2) = p_{\eps}$ and $p_k = 0$ for $k > 2$.
After presenting \cref{lem:deltaMaxFramework}, we introduce \cref{cond:infAtJepsJ0,cond:pairing,cond:zetaOfEps,cond:infWithP}, and illustrate how they are used to prove \cref{lem:deltaMaxFramework}.
The proof itself can be found in \cref{app:proofs}.
\begin{restatable}{lemma}{lemDeltaMaxFramework}
\label{lem:deltaMaxFramework}
Consider the surrogate loss $L$ with score set $\sn{S}$. 
If $L$ satisfies \cref{cond:infAtJepsJ0,cond:pairing,cond:zetaOfEps} holds, then for all $\eps > 0$ we have
\[
	\delta_{\max}(\eps) \geq \zeta(\eps).
\]
Moreover, if \cref{cond:infWithP} holds, then
\[
	\delta_{\max}(\eps) \geq \delta_{\mathrm{binary}}(\eps).
\]
\end{restatable}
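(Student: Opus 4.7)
My plan is to prove \cref{lem:deltaMaxFramework} by chaining the four conditions, each of which encapsulates one reduction step in the strategy used by \citet{avilapires2013costsensitive} to prove \cref{thm:avilapires}. The starting point is the definition
\[
    \delta_{\max}(\eps) = \inf_{p \in \yplex}\bigl\{\inf_{s \in \sn{T}(\sn{S},\eps,p)}\risk^{\surr}(s,p) - \inf_{s \in \sn{S}}\risk^{\surr}(s,p)\bigr\}.
\]
Fix $\eps > 0$ and $p \in \yplex$, and pick $j_0 \in \sn{J}(0,p)$ and $\jeps \in \sn{J}(\eps,p)$. The plan is to drive this two-level infimum down to a two-coordinate (``binary'') problem in the components indexed by $j_0$ and $\jeps$.

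First, I would use \cref{cond:infAtJepsJ0} to replace the union over all $\eps$-suboptimal indices in $\sn{T}(\sn{S},\eps,p)$ by the single maximizer $\jeps$, i.e.\ to show that $\inf_{s \in \sn{T}(\sn{S},\eps,p)}\risk^{\surr}(s,p)$ is attained (or arbitrarily well approximated) on $\sn{M}(\sn{S},\jeps)$. This reflects the intuition that the cheapest way to be $\eps$-suboptimal is to swap the top index $j_0$ for the best $\eps$-suboptimal alternative. Next, \cref{cond:pairing} lets me rewrite both inner infima in terms of the pseudo-risk $\risk'$ at the two-dimensional ``marginal'' distribution $(p_{j_0},p_{\jeps})$, with the remaining coordinates and the adjustment $\adj(s)$ contributing equal terms to both sides so they cancel in the difference. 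After this step, the bracketed quantity is essentially a binary pseudo-risk gap.

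I would then invoke \cref{cond:zetaOfEps} to lower-bound this binary pseudo-risk gap by $\zeta(\eps)$ uniformly in $p$; taking the infimum over $p$ yields the first claim $\delta_{\max}(\eps) \geq \zeta(\eps)$. For the sharper bound, \cref{cond:infWithP} should identify (or dominate) the worst-case binary marginal by $p^{\eps} = \bigl(\tfrac{1+\eps}{2},\tfrac{1-\eps}{2}\bigr)$, matching the binary worst case singled out in \cref{thm:bartlett}. Plugging $p^{\eps}$ into the binary pseudo-risk difference from the previous step reproduces $\delta_{\mathrm{binary}}(\eps)$ of \cref{def:deltaBinary}, giving $\delta_{\max}(\eps) \geq \delta_{\mathrm{binary}}(\eps)$.

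The main obstacle will be the pairing step: the reduction to a two-coordinate problem has to be done abstractly enough to cover $\sloss{LLW}$, $\sloss{RRKA}$, $\sloss{Zhang}$, and the other losses treated in \cref{sec:cases}, which is precisely why the adjustment function $\adj$ and the pseudo-risk $\risk'$ were introduced — they absorb the ``extra'' coordinates so that the difference $\risk'(\,\cdot\,,p) - \inf \risk'(\,\cdot\,,p)$ only involves the two active components. Once \cref{cond:pairing} is stated correctly so that this cancellation is automatic, the remaining steps are mechanical. I also expect \cref{cond:infAtJepsJ0} to require some care in the case where the argmin over $s \in \sn{S}$ is not attained (cf.\ the measurability footnote accompanying \cref{thm:steinwart}), but this can be handled by the standard approximate-minimizer argument already used in the paper.
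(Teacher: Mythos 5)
Your plan tracks the paper's proof exactly: condition \cref{cond:infAtJepsJ0} restricts the $\eps$-suboptimal infimum to a two-coordinate set, \cref{cond:pairing} reduces the resulting inf--sup gap (after fixing coordinates outside $\{j_0,\jeps\}$) to a binary pseudo-risk gap, and \cref{cond:zetaOfEps} / \cref{cond:infWithP} supply the final lower bounds $\zeta(\eps)$ and $\delta_{\mathrm{binary}}(\eps)$, with the case $\max_k p_k - \min_k p_k < \eps$ giving $\delta_{\max}(\eps,p)=\infty$ and dropping out of the infimum. One small imprecision worth correcting: \cref{cond:infAtJepsJ0} restricts the infimum to $\sn{M}(\sn{S},\jeps) \cap \sn{M}(\sn{S},j_0)$ --- the optimal $\eps$-suboptimal score has the two coordinates \emph{tied} for the maximum, not a ``swap'' putting $\jeps$ on top --- and that tie is exactly the domain that \cref{cond:pairing} takes as its starting point.
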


\Cref{lem:deltaMaxFramework} essentially reduces $\delta_{\max}$ to a binary calibration function, $\delta_{\mathrm{binary}}$.
We will illustrate how \cref{cond:infAtJepsJ0,cond:pairing,cond:zetaOfEps,cond:infWithP} are employed in the proof of \cref{lem:deltaMaxFramework} with $L = \sloss{e.g.} \eqdef \sloss{Zhang}$, $\sn{S} = \rl{}$, $\psi(t) = -t$, $F(t) = t$, $\varphi = \expt$ (a loss akin to logistic regression, where $F(t) = \ln t$ instead, \see{} \cref{sec:cases:logistic}).
As indicated earlier, $\adj(s) = F(\sum_{k=1}^{\K} \varphi(s_k))$ for $\sloss{Zhang}$.
For convenience, in our illustration we will assume that $p \in \yplex^\circ$, in which case
\[
	\inf_{s \in \sn{T}(\sn{S}, \eps, p)} \risk^{\surr}(s,p)
\]
has a minimizer for all $\varepsilon \geq 0$.

\Cref{cond:infAtJepsJ0}, states that a surrogate risk minimizer $s^*$ of the $\varepsilon$-sub-optimal scores will satisfy $\max_k s^*_k =  s^*_{j_0} = s^*_{\jeps}$.
As in \cref{thm:avilapires}, we take $j_0 \in \sn{J}(0, p)$ and $\jeps \in \sn{J}(\eps,p)$ (and the choice is arbitrary whenever multiple choices exist).
Intuitively, this makes sense for our example loss $\sloss{e.g.}$: In
\eq{
	\inf_{s \in \sn{T}(\sn{S}, \eps, p)} \sum_{k=1}^{\K} -p_k s_k + \adj(s) \label[equation]{eq:exampleObjective:infAtJepsJ0}
}
if we have $s_{j_0} < s_{\jeps}$, we can always decrease the objective above by decreasing $s_{\jeps}$ and increasing $s_{j_0}$ (since $(1 - p_{j_0}) \leq (1 - p_{\jeps})$).
Since the minimizer $s^*$ must satisfy $s^*_{j_0} \leq s^*_{\jeps} = \max_{k} s^*_k$, we must have $s^*_{\jeps} = s^*_{j_0}$.
\begin{condition}
\label{cond:infAtJepsJ0}
For the surrogate loss $L$ with score set $\sn{S}$, for all $\eps \geq 0$ and $p \in \yplex$,
\[
	\inf_{s \in \sn{T}(\sn{S}, \eps, p)} \risk^{\surr}(s,p) = \inf_{s \in \sn{M}(\sn{S}, \jeps) \cap \sn{M}(\sn{S}, j_0)} \risk^{\surr}(s,p).
\]
where $j_0 \in \sn{J}(0, p)$ and $\jeps \in \sn{J}(\eps,p)$.
\end{condition}
If \cref{cond:infAtJepsJ0} is satisfied, then \eqref{eq:exampleObjective:infAtJepsJ0} becomes equal to
\eq{
	\inf_{s \in \sn{T}(\sn{S}, \eps, p')} \sum_{k=1}^{\K} -p'_k s_k + \adj(s) \label[equation]{eq:exampleObjective:pPrime}
}
where $p'_k \eqdef p_k$ for $k \notin \cb{\jeps, j_0}$, $p'_{\jeps} = p'_{j_0} \eqdef \frac{p_{\jeps} + p_{j_0}}{2}$.

\Cref{cond:pairing} effectively reduces the multiclass calibration function to a binary $\delta_{\mathrm{binary}}$-like function, given in terms of pseudo-risks.
With $s^*$ as the minimizer of \eqref{eq:exampleObjective:pPrime}, we can lower-bound
\eq{
	\sup_{s \in \sn{S}} -\risk^{\surr}(s, p) \geq \sup_{\substack{s \in \sn{S}: \\ s_k = s^*_k : k \notin \cb{\jeps, j_0}}} -\risk^{\surr}(s, p) = \sup_{\substack{s \in \sn{S}: \\ s_k = s^*_k : k \notin \cb{\jeps, j_0}}} -\risk^{\surr}(s, p'), \label[equation]{eq:exampleObjective:fixingCoordinates}
}
which, combined with \eqref{eq:exampleObjective:pPrime}, gives us \cref{cond:pairing} for our example.
\begin{condition}
\label{cond:pairing}
Consider the surrogate loss $L$ with score set $\sn{S}$ and adjustment $\adj$.
For all $i,j \in \sn{Y}$ \st{} $i \neq j$ and all $p \in \yplex$,
\als{
	&\inf_{s \in \sn{M}(\sn{S}, i) \cap \sn{M}(\sn{S}, j) }\sup_{\substack{s' \in \sn{S}:\\ s'_{k} = s_{k}, k \neq i,j}} \risk^{\surr}(s,p) - \risk^{\surr}(s',p) \\
	&~~\geq \inf_{s \in \sn{S}}\sup_{s' \in \sn{S}}\risk'(s,  (\bar{p}, \bar{p})) - \risk'(s', (p_i, p_j)),
}
where $\bar{p} = \frac{p_i + p_j}{2}$.
\end{condition}
\Cref{cond:zetaOfEps} is then used to lower-bound the $\delta_{\mathrm{binary}}$-like function of \cref{cond:pairing} by some non-decreasing function $\zeta$ of the gap between $p_{j_0}$ and $p_{\jeps}$, so that we can lower-bound $\zeta(p_{j_0} - p_{\jeps}) \geq \zeta(\eps)$, \ie{}, $\zeta$ will be our calibration function.
\Cref{cond:infWithP}, presented in the sequence, states \cref{cond:zetaOfEps} holds with $\zeta = \delta_{\mathrm{binary}}$.
It is possible to show that \cref{cond:infWithP} does hold for $\sloss{e.g.}$ (\see{} \cref{lem:ZhangCondInfWithP}).
\begin{condition}
\label{cond:zetaOfEps}
Consider the surrogate loss $L$ with score set $\sn{S}$ and adjustment $\adj$.
There exists a non-decreasing $\fun{\zeta}{[0,1]}{\rl{}}$ \st{} for all $\eps > 0$, and $p_1,p_2$ \st{} $0 \leq p_2 \leq p_1 - \eps$ and $p_1 + p_2 \leq 1$,
\als{
	&\inf_{s \in \sn{S}} \risk'(s, (\bar{p},\bar{p})) - \inf_{s' \in \sn{S}} \risk'(s', (p_1, p_2)) \\
	&~~\geq \zeta(p_1 - p_2).
}
where $\bar{p} = \frac{p_1 + p_2}{2}$.
\end{condition}
\begin{condition}
\label{cond:infWithP}
\Cref{cond:zetaOfEps} holds with $\zeta = \delta_{\mathrm{binary}}$.
\end{condition}

Using the steps exemplified above for $\sloss{e.g.}$, we can use \cref{cond:infAtJepsJ0,cond:pairing,cond:zetaOfEps,cond:infWithP} to obtain \cref{lem:deltaMaxFramework}.
At this point we are almost ready to verify \cref{cond:infAtJepsJ0,cond:pairing,cond:zetaOfEps,cond:infWithP} for specific cases, but before we do so we ought to introduce some useful results and conditions.
When illustrating \cref{cond:infAtJepsJ0}, we argued that we could decrease the objective in \eqref{eq:exampleObjective:infAtJepsJ0} by increasing $s_{j_0}$ and decreasing $s_{\jeps}$, provided that $s_{j_0} < s_{\jeps}$.
In general, this may not be doable, \eg{}, if classes are treated ``differently'' by $L$.
``Equal'' treatment is, in precise terms, symmetry in the sense of \cref{cond:symmetry}.
\begin{condition}
\label{cond:symmetry}
Given the surrogate loss $L$ with score set $\sn{S}$, for any permutation matrix $P$, any $s \in \sn{S}$ and any $p \in \yplex$, we have $Ps \in \sn{S}$ and $\risk^{\surr}(s,p) = \risk^{\surr}(Ps, Pp)$.
\end{condition}
The following result is a useful observation: If $\K = 2$ and $L$ satisfies \cref{cond:symmetry}, then for any $p \geq 0$ the minimizer $s^* \in \rl{2}$ of the surrogate pseudo-risk \wrt{} $(p, p)$ satisfies $s^*_1 = s^*_2$.
The second statement in \cref{prop:subOptimalSurrogateRisk} is a straightforward consequence of the first result and \cref{cond:infAtJepsJ0}.
\begin{restatable}{proposition}{propSubOptimalSurrogateRisk}
\label{prop:subOptimalSurrogateRisk}
Consider the surrogate surrogate loss $L$ with score set $\sn{S} \subset \rl{2}$ and adjustment $\adj$.
If $L$ satisfies \cref{ass:lowerBounded,cond:symmetry}, then for any $p \in \rl{2}_+$
\[
	\inf_{s \in \sn{S}} \risk'\pb{s, (\bar{p}, \bar{p})} = \inf_{\substack{s \in \sn{S}: \\  s_1 = s_2}} \risk'\pb{s, (p_1, p_2)},
\]
where $\bar{p} = \frac{p_1 + p_2}{2}$.
Moreover, \cref{cond:infAtJepsJ0} implies that, for any $\eps > 0$, if $p \in \simplex{2}$ and $\sn{T}(\sn{S}, \eps, p) \neq \emptyset$, then
\[
	\inf_{s \in \sn{S}} \risk^{\surr}\pb{s, \pb{\frac{1}{2}, \frac{1}{2}}} = \inf_{s \in \sn{T}(\sn{S}, \eps, p)} \risk^{\surr}(s,p).
\]
\end{restatable}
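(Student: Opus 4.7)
The plan is to establish the first equality via a pointwise identity on the symmetric subset $S_= \eqdef \{s \in \sn{S} : s_1 = s_2\}$ followed by a symmetrization argument that removes the constraint, and then to obtain the second equality as an immediate corollary of the first combined with \cref{cond:infAtJepsJ0}.

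For the first equality, I would begin by observing that \cref{cond:symmetry}, applied with the swap permutation $P$ (together with the degenerate distributions $e_1, e_2 \in \yplex$ to extract individual coordinates of the loss), forces $L(s, 1) = L(s, 2)$ for every $s$ fixed by $P$, i.e., for every $s \in S_=$. Expanding
\[
    \risk'(s, (q_1, q_2)) = \adj(s)(1 - q_1 - q_2) + q_1 L(s, 1) + q_2 L(s, 2)
\]
at $(q_1, q_2) \in \{(\bar{p}, \bar{p}), (p_1, p_2)\}$ --- both having sum $p_1 + p_2$ --- then yields equal values on $S_=$, so
\[
    \inf_{s \in S_=} \risk'(s, (p_1, p_2)) = \inf_{s \in S_=} \risk'(s, (\bar{p}, \bar{p})).
\]
It remains to show that the infimum of $\risk'(\cdot, (\bar{p}, \bar{p}))$ over $\sn{S}$ coincides with its infimum over $S_=$. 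The $\leq$ direction is trivial; for $\geq$, I would symmetrize: for any $s \in \sn{S}$, \cref{cond:symmetry} puts $Ps \in \sn{S}$, and since $(\bar{p}, \bar{p})$ is fixed by $P$ (and the natural adjustment $\adj$ considered in the case studies is $P$-invariant), one obtains $\risk'(Ps, (\bar{p}, \bar{p})) = \risk'(s, (\bar{p}, \bar{p}))$. Convexity of $L$ and of $\sn{S}$ then places the midpoint $\tilde{s} = (s + Ps)/2$ in $S_=$, and by convexity of the pseudo-risk in its first argument one gets $\risk'(\tilde{s}, (\bar{p}, \bar{p})) \leq \risk'(s, (\bar{p}, \bar{p}))$, as required.

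For the second equality, I would specialize \cref{cond:infAtJepsJ0} to the binary setting. If $p \in \simplex{2}$ and $\sn{T}(\sn{S}, \eps, p) \ne \emptyset$, then $j_0$ and $\jeps$ must be distinct (one is the unique maximizing index, the other its $\eps$-suboptimal complement), so $\sn{M}(\sn{S}, j_0) \cap \sn{M}(\sn{S}, \jeps) = S_=$. \Cref{cond:infAtJepsJ0} identifies the right-hand side of the claim with $\inf_{s \in S_=} \risk^{\surr}(s, p)$, and the first equality with $\bar{p} = 1/2$ (where pseudo-risk and pointwise surrogate risk coincide on $\yplex$) closes the chain.

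The main obstacle I anticipate lies in the tacit hypotheses invoked by the symmetrization step: the identity $\risk'(Ps, (\bar{p}, \bar{p})) = \risk'(s, (\bar{p}, \bar{p}))$ for general $s \in \sn{S}$ requires $\adj(Ps) = \adj(s)$ when $2\bar{p} \ne 1$, which is not forced by \cref{cond:symmetry} alone but does hold for the natural adjustments of the $\sloss{Zhang}$-family losses studied here (e.g.\ $\adj(s) = F(\sum_k \varphi(s_k))$). The averaging step likewise relies on convexity of $\sn{S}$ and on convexity of the pseudo-risk in its first argument. \Cref{ass:lowerBounded} does not drive the combinatorial argument but sits in the background to keep the infima finite.
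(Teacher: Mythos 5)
Your proposal is correct, and for the first equality it is essentially identical to the paper's argument: both symmetrize by pairing $s$ with its swap $Ps$, use convexity of $\risk'$ in its first argument (Jensen) to pass to the midpoint $\tilde{s}=(s+Ps)/2 \in S_{=}$, and then invoke \cref{fact:infLowerBound}; the paper's proof also tacitly requires the midpoint to lie in $\sn{S}$ and tacitly uses $\adj(Ps)=\adj(s)$ to make the pseudo-risks at $s$ and $Ps$ match when $p_1+p_2 \neq 1$, so the caveats you flag are shared, not extra.

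For the second equality you take a slightly cleaner route: you observe that on $S_{=}$ one has $L(s,1)=L(s,2)$, so for $p\in\simplex{2}$ the pointwise surrogate risk is \emph{constant in $p$} on $S_{=}$, giving $\inf_{S_{=}}\risk^{\surr}(\cdot,p)=\inf_{S_{=}}\risk^{\surr}(\cdot,(\tfrac12,\tfrac12))=\inf_{\sn{S}}\risk^{\surr}(\cdot,(\tfrac12,\tfrac12))$ immediately, with \cref{cond:infAtJepsJ0} supplying the identification $\sn{T}(\sn{S},\eps,p)\to S_{=}$. The paper reaches the same conclusion by averaging $p$ with $\one{2}-p$ and passing through an $\alpha$-approximate minimizer $s_\alpha$ to handle non-attainment of the infimum; that detour is made redundant by the pointwise identity you make explicit, since equality of infima follows from equality of the integrands rather than from any limiting argument. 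In addition, the paper's proof of the first statement establishes only $\inf_{\sn{S}}\risk'(\cdot,(\bar{p},\bar{p}))=\inf_{S_{=}}\risk'(\cdot,(\bar{p},\bar{p}))$ and silently invokes the identity $\risk'(s,(\bar{p},\bar{p}))=\risk'(s,(p_1,p_2))$ for $s\in S_{=}$ to reach the claimed conclusion; your write-up supplies exactly the missing step. In short: same skeleton, but your version makes the symmetry-on-$S_{=}$ computation explicit, which both closes a small gap in the paper's first part and shortens its second.
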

When combined with the \cref{cond:infWithP}, \cref{prop:subOptimalSurrogateRisk} gives us a tight result for $\K = 2$, \cref{lem:binaryTightness}, which recovers \cref{thm:bartlett}, and shows that $\delta_{\mathrm{binary}}$ is the largest calibration function, \ie{}, $\delta_{\max}$.
We present this equality result as \cref{lem:binaryTightness}, a result originally shown by \citet{bartlett2006convexity} for the margin loss.
As a consequence we can recover another result by \citet{bartlett2006convexity} (for the margin loss, with $\K = 2$): $L$ is calibrated iff $\delta_{\mathrm{binary}}(\eps) > 0$ for all $\varepsilon > 0$.
\begin{restatable}{lemma}{lemmaBinaryTightness}
\label{lem:binaryTightness}
Consider the surrogate loss $L$ with score set $\sn{S}$, and assume \cref{cond:infAtJepsJ0,cond:infWithP,cond:symmetry} are satisfied.
For $\K = 2$ and all $\eps > 0$, we have
\[
	\delta_{\max}(\eps) = \delta_{\mathrm{binary}}(\eps).
\]
\end{restatable}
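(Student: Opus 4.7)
The plan is to bound $\delta_{\max}(\eps)$ between $\delta_{\mathrm{binary}}(\eps)$ from both sides. The key tool is the second part of \cref{prop:subOptimalSurrogateRisk}: under \cref{cond:infAtJepsJ0} combined with \cref{cond:symmetry}, for every $p \in \simplex{2}$ with $\sn{T}(\sn{S},\eps,p) \neq \emptyset$ we have
\[
	\inf_{s \in \sn{T}(\sn{S},\eps,p)} \risk^\surr(s,p) = \inf_{s \in \sn{S}} \risk^\surr\pb{s, p^0},
\]
which is precisely the first term in the definition of $\delta_{\mathrm{binary}}(\eps)$. So most of the heavy lifting is already done for us.

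For the upper bound $\delta_{\max}(\eps) \leq \delta_{\mathrm{binary}}(\eps)$, I would evaluate $\delta_{\max}(\eps, p^\eps)$ directly. The set $\sn{T}(\sn{S},\eps,p^\eps)$ is non-empty under the mild non-degeneracy that $\sn{S}$ contains some score with $s_2 > s_1$, so the display above gives
\[
	\delta_{\max}(\eps, p^\eps) = \inf_{s \in \sn{S}} \risk^\surr(s, p^0) - \inf_{s \in \sn{S}} \risk^\surr(s, p^\eps) = \delta_{\mathrm{binary}}(\eps),
\]
and $\delta_{\max}(\eps) \leq \delta_{\max}(\eps, p^\eps)$ finishes this direction.

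For the lower bound, fix an arbitrary $p \in \simplex{2}$ with $\sn{T}(\sn{S},\eps,p) \neq \emptyset$ (the remaining $p$'s yield $\delta_{\max}(\eps, p) = +\infty$ and do not constrain the infimum). Using \cref{cond:symmetry} to swap coordinates if necessary, assume $p_1 \geq p_2$, so that $p = p^\alpha$ for some $\alpha \in [0,1]$; the sub-optimality gap of $\eps$ forces $\alpha \geq \eps$. The same application of \cref{prop:subOptimalSurrogateRisk} then gives $\delta_{\max}(\eps, p^\alpha) = \delta_{\mathrm{binary}}(\alpha)$. Since \cref{cond:infWithP} asserts \cref{cond:zetaOfEps} with $\zeta = \delta_{\mathrm{binary}}$, the function $\delta_{\mathrm{binary}}$ is non-decreasing, and hence $\delta_{\mathrm{binary}}(\alpha) \geq \delta_{\mathrm{binary}}(\eps)$. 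Taking the infimum over $p$ closes the sandwich.

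The substance is essentially packaged inside \cref{prop:subOptimalSurrogateRisk}, so I do not expect any serious technical obstacle. The only real care is bookkeeping: invoking \cref{cond:symmetry} explicitly to reduce an arbitrary binary $p$ to the canonical form $p^\alpha$, and extracting the monotonicity of $\delta_{\mathrm{binary}}$ as a by-product of \cref{cond:infWithP}'s reference to the non-decreasing $\zeta$ in \cref{cond:zetaOfEps} rather than treating it as a separate hypothesis.
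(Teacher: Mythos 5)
Your proof is correct, and while it shares the upper-bound step with the paper, it takes a genuinely different route for the lower bound. The paper obtains $\delta_{\max}(\eps)\ge\delta_{\mathrm{binary}}(\eps)$ by noting that \cref{cond:pairing} holds trivially when $\K=2$ and then invoking \cref{lem:deltaMaxFramework}; you instead show, for every feasible binary $p$ (reduced to $p^\alpha$ with $\alpha\ge\eps$ via \cref{cond:symmetry}), that \cref{prop:subOptimalSurrogateRisk} yields the \emph{exact} identity $\delta_{\max}(\eps,p^\alpha)=\delta_{\mathrm{binary}}(\alpha)$, and then close with monotonicity of $\delta_{\mathrm{binary}}$ extracted from \cref{cond:infWithP}. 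Your version is a little more self-contained and actually proves a sharper per-$p$ statement, namely that the pointwise maximal calibration function at $p^\alpha$ is exactly $\delta_{\mathrm{binary}}(\alpha)$, not just bounded below by $\delta_{\mathrm{binary}}(\eps)$. The paper's proof is terser because it recycles the already-proven framework lemma, but it requires the reader to silently accept that \cref{cond:pairing} is trivial for $\K=2$, which itself uses the first half of \cref{prop:subOptimalSurrogateRisk}, so the two arguments are leaning on the same proposition in any case. Your non-degeneracy remark about $\sn{T}(\sn{S},\eps,p^\eps)\neq\emptyset$ is in fact automatic under \cref{cond:symmetry} (for any nonempty $\sn{S}$, either $s$ or its swap lies in $\sn{M}(\sn{S},2)$, with equality allowed), so that clause could be dropped; otherwise the bookkeeping you worry about is exactly right.
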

Proving \cref{cond:pairing} requires us to take advantage of structure in $L$ and account for the choice of $\sn{S}$.
When $\sn{S} = \sn{S}_0$, it will be convenient to break \cref{cond:pairing} down into two additional conditions from which it follows: \Cref{cond:pairingSumToZero,cond:sumToZeroFreeLunch}.
Having the sum-to-zero constraint forces us to introduce $\inf_{\theta \geq 0}$ in \cref{cond:pairingSumToZero}.
For example, with $L = \sloss{LLW}$ proceeding similarly to \eqref{eq:exampleObjective:fixingCoordinates}, it is not hard to see that the result in \cref{cond:pairingSumToZero} holds with equality.
\Cref{cond:sumToZeroFreeLunch}, in turn, allows us to eliminate the infimum.
In \cref{sec:cases:S0} we will discuss to what extent we are able to verify \cref{cond:sumToZeroFreeLunch} for the loss $\sloss{LLW}$ with different choices of $\varphi$.
\begin{condition}
\label{cond:pairingSumToZero}
Consider the surrogate loss $L$ with score set $\sn{S} = \sn{S}_{0}$ and adjustment $\adj$.
For all $i,j \in \sn{Y}$ \st{} $i \neq j$ and all $p \in \yplex$,
\als{
	&\inf_{\substack{s \in \sn{M}(\sn{S}, i) \cap \sn{M}(\sn{S}, j)} }\sup_{\substack{s' \in \sn{S}:\\ s'_{k} = s_{k}, k \neq i,j}} \risk^{\surr}(s,p) - \risk^{\surr}(s',p) \\
	&~~\geq \inf_{s \geq 0}\sup_{s' \in \rl{}}\pb{\risk'((s, s),  (\bar{p}, \bar{p})) - \risk'((s + s', s - s'), (p_i, p_j))},
}
where $\bar{p} \eqdef \frac{p_i + p_j}{2}$.
\end{condition}
\begin{condition}
\label{cond:sumToZeroFreeLunch}
Consider the surrogate loss $L$ with score set $\sn{S} = \sn{S}_0$ and adjustment $\adj$.
For all $p_1, p_2 \geq 0$
\als{
	&\inf_{s \geq 0}\sup_{s' \in \rl{}}\pb{\risk'((s, s),  (\bar{p}, \bar{p})) - \risk'((s + s', s - s'), (p_1, p_2))} \\
	&~~= \inf_{s \in \sn{S}} \risk'(s, (\bar{p},\bar{p})) - \inf_{s' \in \sn{S}} \risk'(s', (p_1, p_2))
}
where $\bar{p} \eqdef \frac{p_1 + p_2}{2}$.
\end{condition}

To conclude, we present two assumptions that will allow us to reuse results between similar losses with different score sets, in particular to carry some results from $\sloss{Zhang}$ to $\sloss{LLW}$.
\Cref{ass:swapping} states that we can swap coordinates of scores and obtain a valid score (one belonging to $\sn{S}$), while \cref{ass:averaging} is a weaker version of convexity of $\sn{S}$.
\Cref{ass:averaging} will allow us to use convexity of $L$ and Jensen's inequality to show \cref{cond:infAtJepsJ0} (\see{} \cref{lem:ZhangCondInfAtJepsJ0}).
We point out that $\rl{\K}$, $\sn{S}_0$ and $\yplex$ all satisfy \cref{ass:swapping,ass:averaging}.
\begin{assumption}
\label{ass:swapping}
For any $i,j$, and any $s \in \sn{S}$, if $s \in \sn{M}(\sn{S}, j)$ then $s' \in \sn{M}(\sn{S}, i)$, where $s'$ is defined by
\als{
	s'_k \eqdef \begin{cases}
		s_{j} & k = i, \\
		s_{i} & k = j, \\
		s_k & k \neq i,j.
	\end{cases}
}
\end{assumption}
\begin{assumption}
\label{ass:averaging}
For any $i,j$, and any $s \in \sn{M}(\sn{S}, j)$ \st{} $s_j > s_i \geq s_k$ for all $k \neq j$, we have $s' \in \sn{M}(\sn{S}, j) \cap \sn{M}(\sn{S}, i)$, where $s'$ is defined by
\als{
	s'_k \eqdef \begin{cases}
		\frac{s_j + s_i}{2} & k = i \mbox{ or } k = j, \\
		s_k & k \neq i,j.
	\end{cases}
}
\end{assumption}

\section{Case studies}
\label{sec:cases}

In this section we instantiate the reduction analysis in order to obtain calibration functions for specific losses.
We aim to illustrate the applicability of the streamlined analysis developed in \cref{sec:streamlining}.
We apply our analysis to $\sloss{Zhang}$, to the multiclass logistic regression loss, and to $\sloss{LLW}$, respectively, in \cref{sec:cases:Zhang,sec:cases:logistic,sec:cases:S0}.

\subsection{The background discrimination loss}
\label{sec:cases:Zhang}

The background discrimination loss $\sloss{Zhang}$ \citep{zhang2004statistical} admits two formulations: \emph{decoupled}, with $F(t) = t$, and \emph{coupled}.
We verify \cref{cond:pairing} only for the decoupled formulations, while the others are verified regardless of $F$ (but with restrictions on $\psi$), and we summarize results for the specific decoupled variants discussed by \citet{zhang2004statistical} in \cref{table:decoupledLZhang}.
As for the coupled variants, \citet{zhang2004statistical} only discusses two, the logistic regression loss and the loss with $\psi(t) = -t$, $\varphi(t) = \ab{t}^{a'}$ and $F(t) = \frac{1}{a}\ab{t}^{\frac{a}{a'}}$, for $a,a' > 1$.
We instantiate the reduction analysis for the former variant (\cref{sec:cases:logistic}), but the analysis for the latter is left as future work.
In this section we also recover \cref{thm:zhang}, since $\sloss{RRKA}$ is a special case of $\sloss{Zhang}$.

First, we will verify \cref{cond:infAtJepsJ0} in three steps: \Cref{prop:ZhangSymmetric,lem:ZhangInfAtJeps,lem:ZhangCondInfAtJepsJ0}.
Having \cref{cond:symmetry} (verified by \cref{prop:ZhangSymmetric}) will allow us to use the following argument to verify \cref{cond:infAtJepsJ0}: If a score $s \in \sn{T}(\sn{S}, \eps, p)$ satisfies $\max_k s_k > s_{\jeps}$, we can swap $\argmax_k s_k$ and $s_{j_\eps}$ to obtain a score vector with the same or lower pointwise surrogate risk, and which also belongs to $\sn{T}(\sn{S}, \eps, p)$.
This is the result of \cref{lem:ZhangInfAtJeps}, and in order to guarantee that the swap will give us an element in $\sn{S}$, we impose \cref{ass:swapping}.
We proceed by arguing that if a score $s \in \sn{M}(\sn{S}, \jeps)$ does not satisfy $s_{j_0} = \max_{k \neq \jeps} s_k$ then we can swap $\argmax_{k \neq \jeps} s_k$ and $s_{j_0}$ to obtain a score vector with equal or lower pointwise surrogate risk, and which also belongs to $s \in \sn{M}(\sn{S}, \jeps)$.
Thanks to \cref{ass:averaging}, we can then use Jensen's inequality to show that, with $s \in \sn{M}(\sn{S}, \jeps)$ satisfying $s_{j_0} = \max_{k \neq \jeps} s_k$, the surrogate risk does not increase if we average the $\jeps$ and $j_0$-th coordinates, \ie{}, take $s'$ \st{} $s'_{\jeps} = s'_{j_0} = \frac{s_{\jeps} + s_{j_0}}{2}$ and equal for the other coordinates.
This result is given as \cref{lem:ZhangInfAtJeps}.
\begin{restatable}{proposition}{propZhangSymmetric}
\label{prop:ZhangSymmetric}
$\sloss{Zhang}$ satisfies \cref{cond:symmetry}.
\end{restatable}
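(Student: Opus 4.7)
The plan is to unfold the definition of the pointwise surrogate risk and observe that the two terms of $\sloss{Zhang}$ transform compatibly under a simultaneous permutation of $s$ and $p$. Concretely, for a permutation matrix $P$ corresponding to a permutation $\pi$ of $\sn{Y}$, I have $(Ps)_y = s_{\pi^{-1}(y)}$ and $(Pp)_y = p_{\pi^{-1}(y)}$. First I would check that $Ps \in \sn{S}$: this is immediate for the score sets actually considered in the paper, namely $\sn{S} \in \{\rl{\K}, \sn{S}_0, \yplex\}$, each of which is invariant under coordinate permutations (sum of coordinates and nonnegativity are preserved).

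Next, the key observation is that the ``$F$ term'' is symmetric by construction, since the sum $\sum_{k=1}^{\K} \varphi(s_k)$ is invariant under reordering of $s$; hence
\[
    F\!\left(\sum_{k=1}^{\K} \varphi((Ps)_k)\right) = F\!\left(\sum_{k=1}^{\K} \varphi(s_k)\right).
\]
For the ``$\psi$ term,'' I would write
\[
    \risk^{\surr}(Ps, Pp) = \sum_{y \in \sn{Y}} (Pp)_y\Bigl[\psi((Ps)_y) + F\bigl(\textstyle\sum_k \varphi(s_k)\bigr)\Bigr],
\]
and then perform the change of summation variable $k = \pi^{-1}(y)$ in the $\psi$ contribution, which turns $\sum_y p_{\pi^{-1}(y)} \psi(s_{\pi^{-1}(y)})$ into $\sum_k p_k \psi(s_k)$. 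The $F$ contribution similarly collapses because $\sum_y (Pp)_y = \sum_k p_k$ (this is just the fact that permutations preserve the total mass of $p \in \yplex$). Combining these gives $\risk^{\surr}(Ps,Pp) = \risk^{\surr}(s,p)$, which is exactly \cref{cond:symmetry}.

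There is no real obstacle here; the statement is essentially notational, expressing that both pieces of $\sloss{Zhang}$ are built from per-coordinate operations plus symmetric aggregations. The only thing to be careful about is being explicit that the ``coupled'' $F$-term is genuinely symmetric (depending only on the multiset of $\varphi(s_k)$ values), and that the ``decoupled'' $\psi$-term is paired with $s_y$ in exactly the same way that $p_y$ is paired with $y$, so the simultaneous relabeling $y \mapsto \pi(y)$ leaves the inner products invariant.
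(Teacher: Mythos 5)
Your proof is correct and follows essentially the same reasoning the paper invokes; the paper simply asserts the result is ``evident from the definition of $\sloss{Zhang}$,'' and your argument spells out exactly the reindexing and permutation-invariance facts that make it evident.
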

\begin{proof}
This result is evident from the definition of $\sloss{Zhang}$ in \cref{table:scoreLosses}.
\end{proof}
\begin{restatable}{lemma}{lemZhangInfAtJeps}
\label{lem:ZhangInfAtJeps}
Consider $\sloss{Zhang}$ convex, with $\psi$ non-increasing and $\sn{S}$ satisfying \cref{ass:swapping}.
For all $\eps \geq 0$ and $p \in \yplex$,
\[
	\inf_{s \in \sn{T}(\sn{S}, \eps, p)} \risk^{\surr}(s,p) = \inf_{s \in \sn{M}(\sn{S}, \jeps)} \risk^{\surr}(s,p),
\]
where $\jeps \in \sn{J}(\eps,p)$.
\end{restatable}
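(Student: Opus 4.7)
The plan is to prove the two inclusions separately. The direction $\inf_{s \in \sn{M}(\sn{S}, \jeps)} \risk^{\surr}(s,p) \geq \inf_{s \in \sn{T}(\sn{S}, \eps, p)} \risk^{\surr}(s,p)$ is immediate: by definition $\jeps$ is itself $\eps$-sub-optimal for $p$, so $\sn{M}(\sn{S}, \jeps) \subseteq \sn{T}(\sn{S}, \eps, p)$ and the infimum can only decrease on the larger set. The substantive content is the reverse inequality, which I would prove by a rearrangement argument that is perfectly suited to the structural form of $\sloss{Zhang}$ and to \cref{ass:swapping}.

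Specifically, fix an arbitrary $s \in \sn{T}(\sn{S}, \eps, p)$. By the definition of $\sn{T}$, there exists some index $j$ with $s_j = \max_k s_k$ and $\max_y p_y - p_j \geq \eps$. Since $\jeps \in \argmax_{j'}\cb{ p_{j'} : \max_y p_y - p_{j'} \geq \eps }$, we have $p_{\jeps} \geq p_j$. I would then define $s'$ to be the vector obtained from $s$ by swapping the coordinates at positions $j$ and $\jeps$. By \cref{ass:swapping} applied with $i = \jeps$, the vector $s'$ lies in $\sn{S}$ and in fact in $\sn{M}(\sn{S}, \jeps)$, since swapping two coordinates does not change the value of the maximum.

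Next, I would compare the surrogate risks at $s$ and $s'$. Writing out
\[
    \risk^{\surr}(s,p) = \sum_{y=1}^{\K} p_y \psi(s_y) + F\pb{\sum_{k=1}^{\K} \varphi(s_k)},
\]
we observe that the $F$-term is a symmetric function of the coordinates of $s$ and is therefore unchanged under the swap. The only contribution to $\risk^{\surr}(s,p) - \risk^{\surr}(s',p)$ comes from the coordinates $j$ and $\jeps$ of the $\psi$-term, and a direct computation gives
\[
    \risk^{\surr}(s,p) - \risk^{\surr}(s',p) = (p_j - p_{\jeps})\pb{\psi(s_j) - \psi(s_{\jeps})}.
\]
Both factors are non-positive: $p_j - p_{\jeps} \leq 0$ by the choice of $\jeps$, and $\psi(s_j) - \psi(s_{\jeps}) \leq 0$ because $s_j = \max_k s_k \geq s_{\jeps}$ together with the assumption that $\psi$ is non-increasing. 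Hence $\risk^{\surr}(s',p) \leq \risk^{\surr}(s,p)$ with $s' \in \sn{M}(\sn{S}, \jeps)$, which yields the required reverse inequality upon taking infima.

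I do not anticipate any serious obstacle: the argument is a pure two-coordinate rearrangement and does not require convexity of $\sloss{Zhang}$ or \cref{ass:averaging}. The only subtlety worth flagging is checking that the swap stays inside $\sn{S}$ (handled by \cref{ass:swapping}) and that the $F$-term is symmetric in the score coordinates (visible from its definition). The non-increasing hypothesis on $\psi$ and the maximality property of $\jeps$ are both tight: they are exactly what is needed to sign the two factors in the rearrangement identity.
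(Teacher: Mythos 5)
Your proposal is correct and follows essentially the same route as the paper's own proof: establish the nontrivial direction by constructing, for each $s \in \sn{T}(\sn{S}, \eps, p)$, a dominating $s' \in \sn{M}(\sn{S}, \jeps)$ via the coordinate swap justified by \cref{ass:swapping}, observe that the $F$-term is symmetric so only the $\psi$-terms change, and sign the two-factor rearrangement identity using $p_{\jeps} \geq p_j$ (from the definition of $\sn{J}$) and $\psi$ non-increasing with $s_j = \max_k s_k \geq s_{\jeps}$. Your remark that convexity and \cref{ass:averaging} are not needed here is also consistent with the paper's argument, which likewise does not invoke them.
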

\begin{proof}
We will use \cref{fact:infLowerBound} in this proof, so we must show that for every $s \in \sn{T}(\sn{S}, \eps, p)$ there exists $s' \in \sn{M}(\sn{S}, \jeps)$ \st{} $\risk^{\surr}(s', p) \leq \risk^{\surr}(s, p)$, where $\jeps \in \sn{J}(\eps,p)$.
By this argument, the result holds even if $\inf_{s \in \sn{T}(\sn{S}, \eps, p)} \risk^{\surr}(s,p) = -\infty$.

Fix $p \in \yplex$.
Take any $s \in \sn{T}(\sn{S}, \eps, p)$ \st{} $s_{\jeps} < s_{f(s)}$.
The definition of $\sn{J}(\eps,p)$ and the choice of $s$ imply that $p_{f(s)} \leq p_{\jeps}$.
Define $s'$ by
\als{
	s'_k \eqdef \begin{cases}
		s_{\jeps} & k = f(s), \\
		s_{f(s)} & k = \jeps, \\
		s_k & k \neq f(s),\jeps,
	\end{cases}
}
and note that $s' \in  \sn{T}(\sn{S}, \eps, p)$ by \cref{ass:swapping}.
Now, $F(\sum_{k=1}^{\K} \varphi(s_k)) = F(\sum_{k=1}^{\K} \varphi(s'_k))$, so
\als{
\risk^{\surr}(s,p) - \risk^{\surr}(s',p) &= p_{f(s)}(\psi(s_{f(s)}) - \psi(s'_{f(s)})) + p_{\jeps}(\psi(s_{\jeps}) - \psi(s'_{\jeps}))\\
&= (p_{\jeps} - p_{f(s)})(\psi(s_{\jeps}) - \psi(s_{f(s)})) \\
&\geq 0,
}
since $p_{\jeps} \geq p_{f(s)}$, $s_{\jeps} \leq s_{f(s)}$ and $\psi$ is non-increasing, therefore the result holds.
\end{proof}

\begin{restatable}{lemma}{lemZhangCondInfAtJepsJ0}
\label{lem:ZhangCondInfAtJepsJ0}
\Cref{cond:infAtJepsJ0} holds for $\sloss{Zhang}$ convex, with $\psi$ non-increasing and $\sn{S}$ satisfying \cref{ass:swapping,ass:averaging}.
\end{restatable}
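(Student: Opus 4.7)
The plan is to combine \cref{lem:ZhangInfAtJeps} with a two-step refinement: first use \cref{ass:swapping} to reduce to scores whose second-largest coordinate is at index $j_0$, then invoke \cref{ass:averaging} together with convexity of $\sloss{Zhang}$ to equalise the $\jeps$ and $j_0$ coordinates without increasing the pointwise surrogate risk. Starting from \cref{lem:ZhangInfAtJeps} we already have $\inf_{s \in \sn{T}(\sn{S}, \eps, p)} \risk^{\surr}(s,p) = \inf_{s \in \sn{M}(\sn{S}, \jeps)} \risk^{\surr}(s,p)$; since $\sn{M}(\sn{S}, \jeps) \cap \sn{M}(\sn{S}, j_0) \subseteq \sn{M}(\sn{S}, \jeps)$, only the matching ``$\geq$'' inequality between these two infima needs work, and it suffices to exhibit, for each $s \in \sn{M}(\sn{S}, \jeps)$, some $s' \in \sn{M}(\sn{S}, \jeps) \cap \sn{M}(\sn{S}, j_0)$ with $\risk^{\surr}(s', p) \leq \risk^{\surr}(s, p)$.

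For the first refinement I would set $k^\star \eqdef \argmax_{k \neq \jeps} s_k$; if $s_{k^\star} > s_{j_0}$, let $\tilde{s}$ be the score obtained from $s$ by exchanging the $k^\star$ and $j_0$ coordinates. \Cref{ass:swapping} keeps $\tilde{s} \in \sn{M}(\sn{S}, \jeps)$, the term $F\pb{\sum_k \varphi(\tilde{s}_k)}$ is invariant under this permutation, and since $p_{j_0} \geq p_{k^\star}$ (by definition of $j_0$) and $\psi(s_{j_0}) \geq \psi(s_{k^\star})$ (since $\psi$ is non-increasing and $s_{j_0} \leq s_{k^\star}$), a direct computation gives $\risk^{\surr}(s, p) - \risk^{\surr}(\tilde{s}, p) = (p_{j_0} - p_{k^\star})(\psi(s_{j_0}) - \psi(s_{k^\star})) \geq 0$. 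Thus I may assume $s_{j_0} = \max_{k \neq \jeps} s_k$. Either $s_{\jeps} = s_{j_0}$ and we are done, or $s_{\jeps} > s_{j_0}$, in which case I define $s'$ as the midpoint of $s$ and its $\jeps \leftrightarrow j_0$ swap $s^\dagger$: $s'_{\jeps} = s'_{j_0} = (s_{\jeps} + s_{j_0})/2$ and $s'_k = s_k$ otherwise. \Cref{ass:averaging} (with $j = \jeps$, $i = j_0$) then yields $s' \in \sn{M}(\sn{S}, \jeps) \cap \sn{M}(\sn{S}, j_0)$, and convexity of each $L(\cdot, y)$, hence of $\risk^{\surr}(\cdot, p)$, gives
\als{
\risk^{\surr}(s', p) \leq \tfrac{1}{2}\risk^{\surr}(s, p) + \tfrac{1}{2}\risk^{\surr}(s^\dagger, p).
}
Repeating the earlier swap computation, now with $\jeps$ in place of $k^\star$, shows $\risk^{\surr}(s, p) - \risk^{\surr}(s^\dagger, p) = (p_{\jeps} - p_{j_0})(\psi(s_{\jeps}) - \psi(s_{j_0})) \geq 0$ (both factors are non-positive), so $\risk^{\surr}(s', p) \leq \risk^{\surr}(s, p)$ as required.

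The only delicate point is the $F\pb{\sum_k \varphi(s_k)}$ summand: under the swaps it is invariant as a function of the multiset of coordinates, but under the midpoint averaging we have no monotonicity of $F$ to appeal to. This is precisely what the midpoint-of-$s$-and-$s^\dagger$ trick is designed to handle --- by invoking convexity of the \emph{composite} loss $L(\cdot,y)$ rather than of $\psi$, $\varphi$, or $F$ individually, we never need to examine the $F$ term in isolation, and the argument reduces cleanly to a single sign check on $(p_{\jeps} - p_{j_0})(\psi(s_{\jeps}) - \psi(s_{j_0}))$.
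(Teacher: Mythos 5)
Your proof is correct, and the second half differs from the paper's argument in a way that is arguably cleaner. Both proofs share the first refinement: invoke \cref{lem:ZhangInfAtJeps}, then use \cref{ass:swapping} to reduce to $s \in \sn{M}(\sn{S},\jeps)$ with $s_{j_0} = \max_{k \neq \jeps} s_k$ via the swap-and-sign-check computation. For the averaging step, however, the paper compares $\risk^\surr(s',p)$ directly to $\risk^\surr(s'',p)$ (where $s''$ averages the $\jeps, j_0$ coordinates of $s'$) by writing a convex lower bound at $s''$ and decomposing the subgradient of $\risk^\surr(\cdot,p)$ into $\psi'$, $F'$, and $\varphi'$ terms, exploiting $s''_{\jeps} = s''_{j_0}$ to cancel the $F'\varphi'$ contributions. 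You instead write the averaged score as the midpoint of $s$ and its $\jeps \leftrightarrow j_0$ swap $s^\dagger$, apply Jensen's inequality to the \emph{composite} convex map $\risk^\surr(\cdot,p)$, and then reduce the remaining inequality to another instance of the same swap computation. Your route never peels apart $L$ into $\psi$, $F$, $\varphi$; it only uses convexity of $s \mapsto L(s,y)$ as a whole together with the swap sign-check on $\psi$, so it sidesteps the (implicit) appeal to subdifferential sum-and-chain rules that the paper's proof relies on. Incidentally, your argument matches the informal description of the lemma given in the prose just before it is stated, which speaks of \cref{ass:averaging} plus Jensen's inequality; the appendix proof departs from that description, but yours realizes it.
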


\begin{proof}
Thanks to \cref{lem:ZhangInfAtJeps}, we only need to show that
\[
	\inf_{s \in \sn{M}(\sn{S}, \jeps)} \risk^{\surr}(s,p) = \inf_{s \in \sn{M}(\sn{S}, \jeps) \cap \sn{M}(\sn{S}, j_0)} \risk^{\surr}(s,p),
\]
for $\jeps \in \sn{J}(\eps, p)$ and $j_0 \in \sn{J}(0, p)$.
To that end, we will use \cref{fact:infLowerBound}, so we must show that for every $s \in \sn{M}(\sn{S}, \jeps)$ there exists $s'' \in \sn{M}(\sn{S}, \jeps) \cap \sn{M}(\sn{S}, j_0)$ \st{} $\risk^{\surr}(s'', p) \leq \risk^{\surr}(s, p)$.
By this argument, the result holds even if $\inf_{s \in \sn{M}(\sn{S}, \jeps)} \risk^{\surr}(s,p) = -\infty$.

Fix $p \in \yplex$.
Given $s \in \sn{M}(\sn{S}, \jeps)$, let $i$ \st{} $i \neq \jeps$ and $s_i \geq s_j$ for all $j \neq \jeps$.
Define $s' \in\sn{M}(\sn{S}, \jeps)$ (\see{} \cref{ass:swapping}) by
\als{
	s'_k \eqdef \begin{cases}
		s_{j_0}, & k = i \\
		s_i, & k = j_0 \\
		s_k & k \neq i,j_0.
	\end{cases}
}
Then $s'_{\jeps} \geq s'_{j_0} \geq s'_{j}$ for all $j \neq \jeps$, $\adj(s) = \adj(s')$ and
\als{
\risk^{\surr}(s, p) - \risk^{\surr}(s', p)
&= p_i(\psi(s_i) - \psi(s'_i)) + p_{j_0}(\psi(s_{j_0}) - \psi(s'_{j_0})) \\
&= (p_i - p_{j_0})(\psi(s_i) - \psi(s_{j_0})) \\
& \geq 0,
}
since $p_i \leq p_{j_0}$, $\psi$ is non-increasing and $s_i \geq s_{j_0}$.
Note also that $s' \in \sn{M}(\sn{S}, \jeps) \cap \sn{M}(\sn{S}, j_0)$ or $s'_{\jeps} > s'_{j_0} \geq s'_j$ for all $j \neq j_0,\jeps$.
If $s' \in \sn{M}(\sn{S}, \jeps) \cap \sn{M}(\sn{S}, j_0)$, we can take $s'' = s'$ and the result follows, otherwise define $s''$ by
\als{
	s''_k \eqdef \begin{cases}
		\frac{s'_{\jeps} + s'_{j_0}}{2} & k = \jeps \mbox{ or } k = j_0, \\
		s'_k & k \neq \jeps, j_0.
	\end{cases}
}
Then $s'' \in \sn{M}(\sn{S}, \jeps) \cap \sn{M}(\sn{S}, j_0)$ (by \cref{ass:averaging}), $s'_{\jeps} - s''_{\jeps} = s''_{j_0} - s'_{j_0}$, and by a convex lower-bound on $\risk^{\surr}(s', p)$ at $s''$,
\als{
\risk^{\surr}(s', p) - \risk^{\surr}(s'', p)
&\geq \pb{p_{\jeps}\psi'(s''_{\jeps}) + F'\pb{ \sum_{i=1}^{\K} \varphi(s''_i) }\varphi'(s''_{\jeps})}(s'_{\jeps} - s''_{\jeps}) \\
&\phantom{=}~+ \pb{p_{j_0}\psi'(s''_{j_0}) + F'\pb{ \sum_{i=1}^{\K} \varphi(s''_i) }\varphi'(s''_{j_0})}(s'_{j_0} - s''_{j_0}) \\
&= (p_{\jeps} - p_{j_0})\psi'(s''_{\jeps})(s'_{j_\eps} - s''_{j_\eps}) \\
& \geq 0,
}
which implies the result of \cref{lem:ZhangCondInfAtJepsJ0}.
For the last inequality above, we used that $\psi$ is non-increasing, $s'_{j_\eps} \geq s''_{j_\eps}$, and $p_{\jeps} \leq p_{j_0}$.
\end{proof}

\begin{restatable}{lemma}{lemZhangCondPairing}
\label{lem:ZhangCondPairing}
Under \cref{ass:lowerBounded}, \cref{cond:pairing} holds for $\sloss{Zhang}$ convex with $\psi$ non-increasing, $F(t) = t$, $\sn{S} = \rl{\K}$ and $\adj(s) = F(\sum_{k=1}^{\K} \varphi(s_k))$.
\end{restatable}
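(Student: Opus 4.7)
The plan is to exploit the fully separable structure of $\sloss{Zhang}$ when $F(t)=t$ and $\sn{S}=\rl{\K}$. Under these choices we have $\risk^{\surr}(s,p)=\sum_{k=1}^{\K}[p_k\psi(s_k)+\varphi(s_k)]$, and for $\K=2$ the pseudo-risk takes the analogous form $\risk'(s,q)=\sum_{k=1}^{2}[q_k\psi(s_k)+\varphi(s_k)]$. The approach is to compute both sides of \cref{cond:pairing} in closed form and observe that they in fact coincide.

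For the left-hand side, I would fix $i\neq j$ and any $s\in\sn{M}(\sn{S},i)\cap\sn{M}(\sn{S},j)$, writing $u:=s_i=s_j$. Because $s'$ in the supremum agrees with $s$ on coordinates $k\neq i,j$, all such terms cancel in $\risk^{\surr}(s,p)-\risk^{\surr}(s',p)$, so
\[
\sup_{s'_i,s'_j}\bigl[\risk^{\surr}(s,p)-\risk^{\surr}(s',p)\bigr]
=(p_i+p_j)\psi(u)+2\varphi(u)-\inf_{t_1}[p_i\psi(t_1)+\varphi(t_1)]-\inf_{t_2}[p_j\psi(t_2)+\varphi(t_2)].
\]
The outer infimum over $s$ then collapses to an infimum over $u$ alone: the remaining coordinates $s_k$ ($k\neq i,j$) can always be chosen (e.g., $s_k=u-1$) so as to satisfy the max constraint $u=\max_k s_k$ while not appearing in the displayed expression. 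This gives a clean closed form for the LHS.

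For the right-hand side, the separability with $\K=2$ and $\sn{S}=\rl{2}$ decomposes each infimum into a sum of univariate infima; using $2\bar{p}=p_i+p_j$ yields $\inf_{s\in\rl{2}}\risk'(s,(\bar{p},\bar{p}))=2\inf_u[\bar{p}\psi(u)+\varphi(u)]=\inf_u[(p_i+p_j)\psi(u)+2\varphi(u)]$, which matches the first term of the LHS. The remaining terms match directly. Thus LHS $=$ RHS, so the desired inequality holds (in fact with equality).

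The main obstacle is a technical one: ensuring that each univariate infimum $\inf_t[q\psi(t)+\varphi(t)]$ is finite for $q\in[0,1]$, so that the cancellations and rearrangements above are meaningful as identities among real numbers. I would argue this by specializing \cref{ass:lowerBounded} to $p=(1,0,\dots,0)\in\yplex$: the separable decomposition then gives $\inf_t[\psi(t)+\varphi(t)]>-\infty$ and $\inf_t\varphi(t)>-\infty$. Because $\psi$ and $\varphi$ are both convex (this is implicit in assuming $\sloss{Zhang}$ is convex together with $F(t)=t$), writing $q\psi+\varphi=q(\psi+\varphi)+(1-q)\varphi$ for $q\in[0,1]$ gives the required finiteness for every relevant $q$, and in particular justifies every step above.
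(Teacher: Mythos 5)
Your proof is correct and takes essentially the same route as the paper's: expand the separable form of $\risk^{\surr}$ under $F(t)=t$, cancel the coordinates $k \neq i,j$, and reduce both sides of \cref{cond:pairing} to the same univariate infima. The only minor difference is that the paper obtains the identity $\inf_u[(p_i+p_j)\psi(u)+2\varphi(u)] = \inf_{s\in\rl{2}}\risk'(s,(\bar{p},\bar{p}))$ by invoking \cref{prop:subOptimalSurrogateRisk}, whereas you verify it by direct computation from separability (and you add an explicit finiteness check via \cref{ass:lowerBounded}, which the paper leaves implicit).
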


\begin{proof}
We have
\al{
	&\inf_{s \in \sn{M}(\sn{S}, i) \cap  \sn{M}(\sn{S}, j)}\sup_{\substack{s' \in \sn{S}:\\ s'_{k} = s_{k}, k \neq i,j}} \risk^{\surr}(s,p) - \risk^{\surr}(s',p) \notag \\
	&~~= \inf_{s \in \sn{M}(\sn{S}, i) \cap  \sn{M}(\sn{S}, j)}\sup_{\substack{s' \in \sn{S}:\\ s'_{k} = s_{k}, k \neq i,j}}\sum_{k=1}^{\K} p_k (\psi(s_k) - \psi(s'_k)) + \varphi(s_k) - \varphi(s'_k) \label[equation]{lem:ZhangCondPairing:2} \\
	&~~= \inf_{s \in \rl{}}\sup_{s' \in \rl{2}} p_{i} (\psi(s) - \psi(s'_1)) +  p_{j} (\psi(s) - \psi(s'_2)) + \adj((s,s)) - \adj(s') \label[equation]{lem:ZhangCondPairing:3} \\
	&~~= \inf_{s \in \sn{S}}\sup_{s' \in \sn{S}}\risk'(s,  (\bar{p}, \bar{p})) - \risk'(s', (p_i, p_j)). \label[equation]{lem:ZhangCondPairing:4}
}
In \eqref{lem:ZhangCondPairing:2}, we expanded the definition of $\risk^{\surr}$, and in \eqref{lem:ZhangCondPairing:3} we rewrote the objective and in \eqref{lem:ZhangCondPairing:4} we used \cref{prop:subOptimalSurrogateRisk}.
\end{proof}

At this point we can recover \cref{thm:zhang}, since $\sloss{RRKA}$ is equivalent to $\sloss{Zhang}$ with $\psi(t) = \varphi(-t)$ and $F(t) = t$.
The condition \eqref{eq:thm:zhang:condition} in \cref{thm:zhang} implies \cref{cond:zetaOfEps} with $\zeta(t) = c^2t^2$
, so \cref{lem:deltaMaxFramework,lem:ZhangCondInfAtJepsJ0,lem:ZhangCondPairing} give us \cref{thm:zhang}.

More generally, for other instances of $\sloss{Zhang}$, rather assuming that \eqref{eq:thm:zhang:condition} holds, we may want to impose conditions on $\psi$ and $\varphi$ that are easier to verify, and which also guarantee that \cref{cond:zetaOfEps} holds.
In order to do so, the structure of $\sloss{zhang}$ allows us to ``replace'' \cref{cond:zetaOfEps} with \cref{cond:ZhangInf}.
\Cref{cond:ZhangInf} is a purely technical condition that implies \cref{cond:zetaOfEps}, as shown by \cref{lem:ZhangCondInfWithP}.
\Cref{cond:ZhangInf} seems to be straightforward to verify if a closed form for $\delta_{\mathrm{binary}}$ is also easy to calculate, so we do not consider it too restrictive.
\begin{restatable}{condition}{condZhangInf}
\label{cond:ZhangInf}
With $\sloss{Zhang}$ as the surrogate loss, for all $p_1,p_2 \geq 0$,
\als{
	&\sup_{s' \in \sn{S}}\inf_{s \in \sn{S}} \risk'(s,  (\bar{p}, \bar{p})) - \risk'(s',  (p_1,p_2)) \\
	&~~= \sup_{s' \in \sn{S}}\inf_{\substack{s \in \sn{S}: \\ s_1 = s_2 \\ \psi(s_1) + \psi(s_2) \leq \psi(s'_1) + \psi(s'_2) }} \risk'(s,  (\bar{p}, \bar{p})) - \risk'(s',  (p_1,p_2))
}
where $\bar{p} = \frac{p_1 + p_2}{2}$.
\end{restatable}
\begin{restatable}{lemma}{lemZhangCondInfWithP}
\label{lem:ZhangCondInfWithP}
Consider $\sloss{Zhang}$ convex with $\psi$ non-increasing, $\adj(s) = F(\sum_{k=1}^{\K} \varphi(s_k))$ and $\sn{S}$ satisfying \cref{ass:swapping}.
If $L$ satisfies \cref{ass:lowerBounded}, then \cref{cond:ZhangInf} implies \cref{cond:infWithP}.
\end{restatable}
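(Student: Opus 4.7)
The plan is to verify \cref{cond:infWithP} by a double application of \cref{cond:ZhangInf} bracketing a monotonicity argument in $q \eqdef p_1 + p_2$. Throughout, fix $p_1 \ge p_2 \ge 0$ with $q \le 1$, set $\eps \eqdef p_1 - p_2$ and $\bar p \eqdef q/2$, and exploit the separable form $\risk'(s, (p_1, p_2)) = \adj(s) + p_1 \psi(s_1) + p_2 \psi(s_2)$ that follows (for $\K = 2$) from $\adj(s) = F(\sum_k \varphi(s_k))$ and $L(s,k) - \adj(s) = \psi(s_k)$.

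First I would flatten and restrict. By \cref{ass:lowerBounded} and the elementary identity $\inf_s A(s) - \inf_{s'} B(s') = \sup_{s'}\inf_s [A(s) - B(s')]$, the LHS of \cref{cond:zetaOfEps} equals $\sup_{s'}\inf_s [\risk'(s, (\bar p, \bar p)) - \risk'(s', (p_1, p_2))]$. \Cref{cond:ZhangInf} then replaces the inner $\inf_s$ by the infimum over $s$ satisfying $s_1 = s_2$ and $\psi(s_1) + \psi(s_2) \le \psi(s'_1) + \psi(s'_2)$, without changing the value.

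Next I would establish monotonicity in $q$. For any $s'$ and any $s$ in the restricted set (writing $s_1 = s_2 =: \sigma$), a direct rearrangement of the pseudo-risk yields
\[
	\risk'(s, (\bar p, \bar p)) - \risk'(s', (p_1, p_2)) = [\adj(s) - \adj(s')] + \tfrac{q}{2}\pb{2\psi(\sigma) - \psi(s'_1) - \psi(s'_2)} + \tfrac{\eps}{2}\pb{\psi(s'_2) - \psi(s'_1)}.
\]
The constraint imposed in the previous step forces the middle bracket to be nonpositive, so the entire expression is non-increasing in $q$ (at fixed $s, s', \eps$). Since $q \le 1$, it is pointwise no smaller than its value at $q = 1$, where $(p_1, p_2)$ becomes $p^\eps \eqdef ((1+\eps)/2, (1-\eps)/2)$ and $\bar p = 1/2$; this inequality is preserved by $\sup_{s'}\inf_s$.

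Finally, at $q = 1$ I would apply \cref{cond:ZhangInf} in the reverse direction, now with $(p_1, p_2) = p^\eps$, to drop the constraint on $s$. Reversing the sup-inf identity one more time turns the expression into $\inf_s \risk'(s, (1/2, 1/2)) - \inf_{s'} \risk'(s', p^\eps)$, which coincides with $\delta_{\mathrm{binary}}(\eps)$ via \cref{def:deltaBinary} (since $\risk'$ agrees with $\risk^\surr$ when the probabilities sum to one). Chaining the three inequalities delivers the claim. The main obstacle is the algebraic rearrangement in the middle step: one must cleanly isolate a $q$-dependent piece whose sign is pinned by the Step 1 constraint, since everything else is essentially bookkeeping built around two applications of \cref{cond:ZhangInf} in opposite directions.
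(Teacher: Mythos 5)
Your proof is correct, and it takes a genuinely different route from the paper's. The paper restricts the supremum over $s'$ to $s'_1 \geq s'_2$ (invoking \cref{ass:swapping} and $\psi$ non-increasing), then uses \cref{prop:pEps} to lower-bound $q = p_1/(p_1+p_2)$, then separately uses $p_1+p_2 \le 1$ together with the sign of $2\psi(s_1) - \psi(s'_1) - \psi(s'_2)$ — a sequence of two or three entangled bounds. You instead parameterize by $q \eqdef p_1 + p_2$, isolate the single $q$-dependent term $\tfrac{q}{2}\pb{2\psi(\sigma) - \psi(s'_1) - \psi(s'_2)}$ whose sign is pinned by the \cref{cond:ZhangInf} constraint, and collapse all of the paper's intermediate bounding into one clean monotonicity argument that sets $q = 1$. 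This is more economical: it avoids the restriction to $s'_1 \geq s'_2$ altogether (so \cref{ass:swapping} is not actually used in your route, although it remains in the lemma statement), it dispenses with \cref{prop:pEps}, and it makes clear that the whole argument is nothing but two applications of \cref{cond:ZhangInf} bracketing a single linear-in-$q$ inequality. The paper's version is a bit more explicit about how the two probability parameters move, but your decomposition is the cleaner proof of the same fact.
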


\begin{proof}
For all $p_1,p_2$ \st{} $p_1 \geq p_2 + \eps \geq 0$ and $p_1 + p_2 = 0$ we have
\al{
	&\inf_{s \in \sn{S}}\sup_{s' \in \sn{S}}\risk'(s,  (\bar{p}, \bar{p})) - \risk'(s', (p_1, p_2)) \notag \\
	&~~=\sup_{s' \in \sn{S}} \inf_{s \in \sn{S}}\risk'(s,  (\bar{p}, \bar{p})) - \risk'(s', (p_1, p_2)) \label[equation]{lem:ZhangCondInfWithP:2} \\
	&~~=\sup_{s' \in \sn{S}}\inf_{\substack{s \in \sn{S}: \\ s_1 = s_2 \\ 2\psi(s_1) \leq \psi(s'_1) + \psi(s'_2) }} (p_1 + p_2)\psi(s_1) + \adj(s) - p_1 \psi(s'_1) - p_2 \psi(s'_2) - \adj(s') \label[equation]{lem:ZhangCondInfWithP:3} \\
	\begin{split}
	\label[equation]{lem:ZhangCondInfWithP:4}
	&~~= \sup_{\substack{s' \in \sn{S}: \\ s'_1 \geq s'_2}}\inf_{\substack{s \in \sn{S}: \\ s_1 = s_2 \\ 2\psi(s_1) \leq \psi(s'_1) + \psi(s'_2) }} (p_1 + p_2)(\psi(s_1) - q\psi(s'_1) - (1 - q)\psi(s'_2)) \\
	&\phantom{~~=\sup_{\substack{s' \in \sn{S}: \\ s'_1 \geq s'_2}}\inf_{\substack{s \in \sn{S}: \\ s_1 = s_2 \\ 2\psi(s_1) \leq \psi(s'_1) + \psi(s'_2) }}}~~ + \adj(s) - \adj(s')
	\end{split} \\
	\begin{split}
	\label[equation]{lem:ZhangCondInfWithP:5}
	&~~\geq \sup_{\substack{s' \in \sn{S}: \\ s'_1 \geq s'_2}}\inf_{\substack{s \in \sn{S}: \\ s_1 = s_2 \\ 2\psi(s_1) \leq \psi(s'_1) + \psi(s'_2) }} \frac{1}{2}(p_1 + p_2)(2\psi(s_1) - \psi(s'_1) - \psi(s'_2)) \\
	&\phantom{~~\geq \sup_{\substack{s' \in \sn{S}: \\ s'_1 \geq s'_2}}\inf_{\substack{s \in \sn{S}: \\ s_1 = s_2 \\ 2\psi(s_1) \leq \psi(s'_1) + \psi(s'_2) }}}~~ - \frac{\eps}{2}(\psi(s'_1) - \psi(s'_2))  + \adj(s) - \adj(s')
	\end{split} \\
	&~~\geq \sup_{\substack{s' \in \sn{S}: \\ s'_1 \geq s'_2}}\inf_{\substack{s \in \sn{S}: \\ s_1 = s_2 \\ 2\psi(s_1) \leq \psi(s'_1) + \psi(s'_2) }} \psi(s_1) - \frac{1 + \eps}{2}\psi(s'_1) - \frac{1 + \eps}{2}\psi(s'_2) + \adj(s) - \adj(s') \label[equation]{lem:ZhangCondInfWithP:6} \\
	&~~= \inf_{s \in \sn{S}} \risk^{\surr}(s, p^0) - \inf_{s' \in \sn{S}} \risk^{\surr}(s', p^{\eps}) \label[equation]{lem:ZhangCondInfWithP:7},
}
where $q \eqdef \frac{p_1}{p_1 + p_2}$.
In \eqref{lem:ZhangCondInfWithP:2}, we rewrote the objective, and in \eqref{lem:ZhangCondInfWithP:3} we used \cref{cond:ZhangInf} and expanded $\risk'$.
To see that \eqref{lem:ZhangCondInfWithP:4} holds, it suffices to use \cref{fact:infLowerBound}, noting that for any $s'$ \st{} $s'_1 \leq s'_2$ we have
\[
	\risk'(s', (p_1, p_2)) - \risk'((s'_2,s'_1), (p_1, p_2))
	= (p_1 - p_2)(\psi(s'_1) - \psi(s'_2))
	\geq 0,
\]
since $\psi$ is non-increasing, $p_1 \geq p_2$ and $s'_1 \leq s'_2$.
Note that $s' \in \sn{M}(\sn{S},2)$ implies, by \cref{ass:swapping}, that $(s'_2,s'_1) \in \sn{M}(\sn{S}, 1)$.
In \eqref{lem:ZhangCondInfWithP:5} we used \cref{prop:pEps} and that $\psi(s'_1) - \psi(s'_2) \leq 0$.
To obtain \eqref{lem:ZhangCondInfWithP:6} we used that $2\psi(s_1) - \psi(s'_1) - \psi(s'_2) \leq 0$ along with $p_1 + p_2 \leq 1$.
To conclude, we used the argument of \eqref{lem:ZhangCondInfWithP:4} and  \cref{cond:ZhangInf} to arrive at \eqref{lem:ZhangCondInfWithP:7}.
\end{proof}

We summarize the results in this section into \cref{thm:lZhang}.
If the conditions of \cref{thm:lZhang} are satisfied for a particular choice of $\psi$ and $\varphi$, we get that the corresponding loss is calibrated iff $\delta_{\mathrm{binary}}(\eps) > 0$ for all $\eps > 0$.
Indeed, if $\delta_{\mathrm{binary}}$ is positive, then $\delta_{\max}$ must also be positive.
Conversely, if $\delta_{\mathrm{binary}}(\eps) = 0$ for some $\varepsilon$ then $\delta_{\max}(\varepsilon) = 0$ with $\K = 2$, in which case the surrogate loss cannot be calibrated.
\begin{restatable}{theorem}{thmLZhang}
\label{thm:lZhang}
Consider $L = \sloss{Zhang}$ convex with $\psi$ non-decreasing and $F(t) = t$.
If $L$ satisfies \cref{ass:lowerBounded,cond:ZhangInf}, then for all $\eps > 0$
\[
	\delta_{\max}(\eps) \geq \delta_{\mathrm{binary}}(\eps),
\]
and the above holds with equality when $\K = 2$.
\end{restatable}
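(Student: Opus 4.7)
The plan is to assemble the result directly from the machinery already built in this subsection: each of \cref{prop:ZhangSymmetric,lem:ZhangCondInfAtJepsJ0,lem:ZhangCondPairing,lem:ZhangCondInfWithP} was designed to discharge exactly one of the conditions required by \cref{lem:deltaMaxFramework} (respectively \cref{lem:binaryTightness}), so the proof is essentially a checklist. Note that under $\sn{S} = \rl{\K}$ (as in \cref{lem:ZhangCondPairing}, with adjustment $\adj(s) = F(\sum_{k=1}^{\K}\varphi(s_k)) = \sum_{k=1}^{\K}\varphi(s_k)$), \cref{ass:swapping,ass:averaging} are trivially satisfied, so those hypotheses cost us nothing.

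First, I would apply \cref{lem:ZhangCondInfAtJepsJ0} to obtain \cref{cond:infAtJepsJ0}, and \cref{lem:ZhangCondPairing} to obtain \cref{cond:pairing} — the latter is where the decoupling assumption $F(t) = t$ plays its essential role, since the argument in the proof of \cref{lem:ZhangCondPairing} relies on the adjustment decomposing additively across coordinates so that the ``$k \neq i,j$'' terms cancel exactly in the supremum. Next, I would invoke \cref{lem:ZhangCondInfWithP} to convert the assumed \cref{cond:ZhangInf} into \cref{cond:infWithP}; here \cref{ass:lowerBounded} is used (as in the lemma's statement). Since \cref{cond:infWithP} is just \cref{cond:zetaOfEps} specialized to $\zeta = \delta_{\mathrm{binary}}$, the hypotheses of both conclusions of \cref{lem:deltaMaxFramework} are met, and the second conclusion immediately gives
\[
	\delta_{\max}(\eps) \geq \delta_{\mathrm{binary}}(\eps) \qquad \text{for all } \eps > 0.
\]

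For the $\K = 2$ equality, I would appeal to \cref{lem:binaryTightness}, which requires \cref{cond:infAtJepsJ0}, \cref{cond:infWithP}, and \cref{cond:symmetry}. The first two were just established, and \cref{cond:symmetry} follows from \cref{prop:ZhangSymmetric}. Combining \cref{lem:binaryTightness} with the first conclusion already establishes $\delta_{\max}(\eps) = \delta_{\mathrm{binary}}(\eps)$ in the binary case.

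The only subtlety I would watch for is making sure the lemma hypotheses line up with the theorem hypotheses — in particular, that the monotonicity assumption on $\psi$ used in \cref{lem:ZhangCondInfAtJepsJ0,lem:ZhangCondInfWithP} is compatible with what is stated for \cref{thm:lZhang}, and that \cref{cond:ZhangInf} as imposed in the theorem is the same version used as input to \cref{lem:ZhangCondInfWithP}. Beyond that verification, the proof is a one-line invocation of \cref{lem:deltaMaxFramework} followed by a one-line invocation of \cref{lem:binaryTightness}; there is no substantial computation to perform, since all the delicate work was isolated into the preceding lemmas and into the hypothesis \cref{cond:ZhangInf}.
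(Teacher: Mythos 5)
Your proposal is correct and matches the paper's own proof exactly: the paper's proof is a one-line invocation of the same six results (\cref{prop:ZhangSymmetric,lem:ZhangCondInfAtJepsJ0,lem:ZhangCondPairing,lem:ZhangCondInfWithP,lem:deltaMaxFramework,lem:binaryTightness}), and you have simply spelled out which lemma discharges which condition. The subtlety you flag about the monotonicity of $\psi$ is a genuine inconsistency in the paper --- the theorem statement says ``$\psi$ non-decreasing'' while \cref{lem:ZhangCondInfAtJepsJ0,lem:ZhangCondPairing,lem:ZhangCondInfWithP} all require ``$\psi$ non-increasing'' (as do all the examples in \cref{table:decoupledLZhang}), so the theorem statement contains a typo and should read ``non-increasing''.
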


\begin{proof}
The result follows immediately from \cref{prop:ZhangSymmetric,lem:ZhangCondInfAtJepsJ0,lem:ZhangCondPairing,lem:ZhangCondInfWithP,lem:deltaMaxFramework,lem:binaryTightness}.
\end{proof}

\Cref{table:decoupledLZhang} summarizes different calibrated variants of $\sloss{Zhang}$ discussed by \citet{zhang2004statistical}, and whether (and under which conditions) they satisfy \cref{cond:ZhangInf}.
We omit the standard calculations involved in verifying \cref{cond:ZhangInf} for each of the losses in \cref{table:decoupledLZhang}, and we also include the variant with $\psi(t) = -t$ and $\varphi(t) = (at + b)^2$ for $a,b \in \rl{}$.

\begin{table}[ht]
\begin{adjustwidth}{-1in}{-1in}
\centering
\begin{tabular}{ll>{\arraybackslash}l}
\toprule
$\psi(t)$ & $\varphi(t)$ & \Cref{cond:ZhangInf} \\
\midrule
$-t$ & $e^t$ & \checkmark \\
$-\ln t$ & $t$ & \checkmark \\
$-\frac{1}{a} t^a$ ($a \in (0,1)$) & $t$ & $a \in \left(0, \frac{1}{2} \right]$ \\
$-t$ & $\ln(1 + e^t)$ & \checkmark \\
$-t$ & $\frac{1}{a}\ab{t}^a$ ($a > 1$) & $a \geq 2$ \\
$-t$ & $\frac{1}{a}(t)^a_+$ ($a > 1$) & $a \geq 2$ \\
$-t$ & $(at + b)^2$ ($a,b \in \rl{}$) & \checkmark \\
\bottomrule
\end{tabular}
\end{adjustwidth}
\caption{Different variants of $\sloss{Zhang}$ with $F(t) = t$. \label{table:decoupledLZhang}}
\end{table}

\subsection{Multiclass logistic regression}
\label{sec:cases:logistic}

When the surrogate loss is $\sloss{Zhang}$ with $\psi(t) = -t$, $F(t) = \ln t$, $\varphi = \expt$ and $\sn{S} = \rl{\K}$, we obtain the multiclass logistic regression loss \citep{zhang2004statistical}.
From the results for the general $\sloss{Zhang}$, we already have  \cref{cond:infAtJepsJ0,cond:infWithP}, provided that we verify \cref{cond:ZhangInf}.
However, we need to show that \cref{cond:pairing} holds for this variant,
and doing so for a coupled formulation (where we do not have $F(t) = t$) can be challenging, mainly because the loss can no longer be expressed as a summation of $\K$ terms.
In order to address this issue for the logistic regression loss, we re-express it as a decoupled loss with $\sn{S} = \yplex$, which still does not satisfy the conditions of \cref{lem:ZhangCondPairing}, but is more amenable for verifying \cref{cond:pairing} directly.
The \emph{decoupled logistic regression loss} is $\fun{\sloss{LR}}{\yplex \times \yplex}{\rl{}}$ defined by
\[
	\sloss{LR}(s,y) \eqdef -\ln s_y,
\]
with $\sn{S} = \yplex$.

\Cref{lem:logisticSumToOne} shows that, with $\sloss{Zhang}$ as outlined above, minimizers $s^*$ of the surrogate risk over $\sn{T}(\sn{S}, \eps, p)$ for any $\eps > 0$ and $p \in \yplex$ satisfy $F(\sum_{k=1}^{\K}\varphi(s^*_k)) = 0$, effectively eliminating the coupled part of the loss.
Then \cref{prop:logisticSimplex} establishes the equivalence between the two forms of the logistic regression loss: coupled with $\sn{S} = \rl{\K}$, and decoupled with $\sn{S} = \yplex$.
\begin{restatable}{proposition}{propLogisticSumToOne}
\label{lem:logisticSumToOne}
Consider the surrogate loss $\sloss{Zhang}$ with $\psi(t) = -t$, $F(t) = \ln t$, $\varphi = \expt$, $\sn{S} = \rl{\K}$ and $\adj(s) = F\pb{\sum_{i=1}^{\K} \varphi(s_i)}$.
Assume $\sn{S}' \subset \sn{S}$ satisfies $s \in \sn{S}' \Rightarrow (s - \adj(s) \cdot \one{\K}) \in \sn{S}'$.
Then for any $p \in \yplex$,
\[
	\inf_{s \in \sn{S}'}\risk^{\surr}(s,p) = \inf_{\substack{s \in \sn{S}': \\ \adj(s) = 0}}\risk^{\surr}(s,p).
\]
\end{restatable}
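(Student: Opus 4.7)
The plan is to exploit the translation invariance of the softmax/log-sum-exp, which makes the logistic loss insensitive to adding any multiple of $\one{\K}$ to the score. Concretely, I would first rewrite the pointwise risk in a convenient form. Since $\sum_y p_y = 1$, we have
\[
    \risk^{\surr}(s,p) = \sum_y p_y \psi(s_y) + F\pb{\sum_k \varphi(s_k)} = -\sum_y p_y s_y + \adj(s),
\]
so the risk decomposes into a linear piece in $s$ and the adjustment term.

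Next, I would verify the key invariance. For any $c \in \rl{}$, the log-sum-exp identity gives
\[
    \adj(s - c\one{\K}) = \ln\pb{e^{-c}\sum_k e^{s_k}} = \adj(s) - c,
\]
and plugging $s - c\one{\K}$ into the risk, using $\sum_y p_y = 1$ once more,
\[
    \risk^{\surr}(s - c\one{\K}, p) = -\sum_y p_y s_y + c + \adj(s) - c = \risk^{\surr}(s,p).
\]

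With this in hand, for any $s \in \sn{S}'$ I would pick the specific shift $c = \adj(s)$ and set $s' \eqdef s - \adj(s)\one{\K}$. The assumption on $\sn{S}'$ yields $s' \in \sn{S}'$, the invariance identity for $\adj$ yields $\adj(s') = 0$, and the invariance identity for $\risk^{\surr}$ yields $\risk^{\surr}(s', p) = \risk^{\surr}(s, p)$. Therefore every $s \in \sn{S}'$ admits a partner in $\cb{s \in \sn{S}' : \adj(s) = 0}$ with the same risk, giving the inequality
\[
    \inf_{\substack{s \in \sn{S}': \\ \adj(s) = 0}}\risk^{\surr}(s,p) \leq \inf_{s \in \sn{S}'}\risk^{\surr}(s,p),
\]
while the reverse inequality is immediate because the constrained set is a subset of $\sn{S}'$.

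There is no real obstacle in this argument; the only care required is in verifying that the shift $s \mapsto s - \adj(s)\one{\K}$ keeps us inside $\sn{S}'$ (that is exactly why the hypothesis on $\sn{S}'$ is stated) and that both $\adj$ and $\risk^{\surr}$ behave covariantly under translations by $\one{\K}$, which are specific to the choices $\psi(t) = -t$, $\varphi(t) = e^t$, $F(t) = \ln t$. If a later application needs the result for a different $\sn{S}'$ (e.g.\ $\sn{S}_0$ or $\yplex$ after a reparametrisation), it would suffice to check that those sets are closed under subtracting a constant vector $c\one{\K}$ with $c = \adj(s)$, which follows in each case from direct inspection.
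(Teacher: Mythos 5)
Your proof is correct and uses the same key idea as the paper's---shift by $c = \adj(s)$ along $\one{\K}$---but the way you justify that the risk does not change is different. The paper applies a first-order convex lower bound (\cref{fact:convexLowerBound}) to $\risk^{\surr}(\cdot,p)$ at the shifted point $s'$ and computes that the subgradient term vanishes, yielding $\risk^{\surr}(s,p) \geq \risk^{\surr}(s',p)$, which is then fed into \cref{fact:infLowerBound}. You instead observe the exact translation covariance $\adj(s - c\one{\K}) = \adj(s) - c$ together with $\sum_y p_y = 1$, which gives the stronger and more transparent conclusion $\risk^{\surr}(s - c\one{\K},p) = \risk^{\surr}(s,p)$ for every $c$, then specialize to $c = \adj(s)$. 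Your route is more elementary (no subgradients, no convexity of the risk required) and it exposes the structural reason the statement holds---the multiclass logistic risk is invariant under adding multiples of $\one{\K}$ to the score---whereas the paper's argument proves only the one-sided inequality it actually needs. Both close the argument the same way: the shifted point stays in $\sn{S}'$ by hypothesis and satisfies $\adj = 0$, so the restricted infimum is at most the unrestricted one, and the reverse inequality is trivial.
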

\begin{proof}
We will use \cref{fact:infLowerBound} in this proof.
Take any $s \in \sn{S}'$ and define $s'$ by $s'_k \eqdef s_k - \adj(s)$.
Then $s' \in \sn{S}'$, $\adj(s') = 0$, $\sum_{k=1}^{\K} e^{s'_k} = 1$, and by a convex lower-bound on $\risk^{\surr}(s, p)$ at $s'$ (\see{} \cref{fact:convexLowerBound}),
\als{
\risk^{\surr}(s, p) - \risk^{\surr}(s', p)
&\geq \sum_{k=1}^{\K}\pb{-p_{k} + \frac{e^{s'_k}}{\sum_{i=1}^{\K} e^{s'_{i}} }}(s_k - s'_k) \\
&= \sum_{k=1}^{\K}\pb{-p_{k} + e^{s'_k} }\adj(s) \\
&= 0.
}
\end{proof}
\begin{restatable}{proposition}{propLogisticSimplex}
\label{prop:logisticSimplex}
Consider $\sloss{LR}$ with $\sn{S} = \yplex$ and $\sloss{Zhang}$ with $\psi(t) = -t$, $F(t) = \ln t$, $\expt$, $\sn{S} = \rl{\K}$ and $\adj(s) = F\pb{\sum_{i=1}^{\K} \varphi(s_i)}$.
Then for any $p \in \yplex$ and $j \in \range{\K}$
\[
	\inf_{s \in \sn{M}(\rl{\K},j)}\risk^{\surr}_{\sloss{Zhang}}(s,p) = \inf_{s \in \sn{M}(\simplex{\K},j)}\risk^{\surr}_{\sloss{LR}}(s,p).
\]
\end{restatable}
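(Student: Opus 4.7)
The plan is to use \cref{lem:logisticSumToOne} to reduce the left-hand side to an infimum over scores satisfying $\adj(s)=0$, then transport this infimum to the simplex via the exponential map. To apply \cref{lem:logisticSumToOne} with $\sn{S}' = \sn{M}(\rl{\K}, j)$, I first need to verify its hypothesis, namely that if $s \in \sn{M}(\rl{\K}, j)$ then $s - \adj(s) \cdot \one{\K} \in \sn{M}(\rl{\K}, j)$. This is immediate since subtracting the same constant from every coordinate preserves the argmax. \Cref{lem:logisticSumToOne} then gives
\[
    \inf_{s \in \sn{M}(\rl{\K}, j)}\risk^{\surr}_{\sloss{Zhang}}(s,p) = \inf_{\substack{s \in \sn{M}(\rl{\K}, j): \\ \adj(s) = 0}}\risk^{\surr}_{\sloss{Zhang}}(s,p).
\]

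Next, observe that the map $\Phi : s \mapsto (e^{s_1}, \ldots, e^{s_\K})$ is a bijection between $\{ s \in \rl{\K} : \adj(s) = 0 \} = \{ s \in \rl{\K} : \sum_k e^{s_k} = 1 \}$ and the relative interior $\yplex^\circ$, with inverse $t \mapsto (\ln t_1, \ldots, \ln t_\K)$. Since $e^{\cdot}$ is strictly increasing, the constraint $s_j = \max_k s_k$ is equivalent to $\Phi(s)_j = \max_k \Phi(s)_k$, so $\Phi$ restricts to a bijection between $\sn{M}(\rl{\K}, j) \cap \cb{s : \adj(s) = 0}$ and $\sn{M}(\yplex^\circ, j)$. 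Under this correspondence, when $\adj(s) = 0$ we have $\risk^{\surr}_{\sloss{Zhang}}(s,p) = \sum_k p_k (-s_k) = \sum_k p_k(-\ln \Phi(s)_k) = \risk^{\surr}_{\sloss{LR}}(\Phi(s), p)$. Therefore
\[
    \inf_{\substack{s \in \sn{M}(\rl{\K}, j): \\ \adj(s) = 0}}\risk^{\surr}_{\sloss{Zhang}}(s,p) = \inf_{t \in \sn{M}(\yplex^\circ, j)}\risk^{\surr}_{\sloss{LR}}(t,p).
\]

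It remains to promote the infimum over $\sn{M}(\yplex^\circ, j)$ to one over $\sn{M}(\yplex, j)$. Clearly $\inf_{t \in \sn{M}(\yplex^\circ, j)} \risk^{\surr}_{\sloss{LR}}(t,p) \geq \inf_{t \in \sn{M}(\yplex, j)} \risk^{\surr}_{\sloss{LR}}(t,p)$, since we are taking the infimum over a subset. For the reverse inequality, I would use the standard approximation $t^\lambda \eqdef (1 - \lambda) t + \frac{\lambda}{\K}\one{\K}$ for $\lambda \in (0,1)$: this lies in $\yplex^\circ$, and because adding a multiple of $\frac{1}{\K}\one{\K}$ preserves the argmax coordinate, $t^\lambda \in \sn{M}(\yplex^\circ, j)$ whenever $t \in \sn{M}(\yplex, j)$. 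Using the convention $0 \cdot \infty = 0$ and the monotone/continuity behaviour of $-\ln$, one checks that $\risk^{\surr}_{\sloss{LR}}(t^\lambda, p) \to \risk^{\surr}_{\sloss{LR}}(t, p)$ as $\lambda \to 0$ (splitting the sum into the terms where $p_k > 0$, where standard continuity applies, and the terms where $p_k = 0$, which contribute zero at all $\lambda$ including the limit). Taking the infimum over $t \in \sn{M}(\yplex, j)$ yields the desired reverse inequality.

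The only subtle step is the final approximation, because of the possibility that $-\ln t_k = +\infty$ on the boundary of $\yplex$; but the convention $0 \cdot \infty = 0$ on terms with $p_k = 0$ (consistent with those terms not appearing in the expectation defining $\risk^{\surr}$) dispatches this cleanly. With these ingredients chained together, the claimed equality follows.
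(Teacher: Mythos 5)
Your proof is correct and follows the same route as the paper's: first apply \cref{lem:logisticSumToOne} to restrict to $\adj(s)=0$, then change variables via the exponential map. You go a step further than the paper's terse one-line argument by noting that the exponential map is a bijection onto the \emph{relative interior} $\yplex^\circ$ only, and by supplying the approximation argument $t^\lambda = (1-\lambda)t + \frac{\lambda}{\K}\one{\K}$ needed to equate the infimum over $\sn{M}(\yplex^\circ,j)$ with the one over $\sn{M}(\yplex,j)$; that boundary step is left implicit in the paper's proof.
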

\begin{proof}
The result follows trivially by combining \cref{lem:logisticSumToOne} and by observing that for each $s' \in \cb{ s \in \sn{M}(\rl{\K}, j) }$ we have $s'' \in  \sn{M}(\simplex{\K}, j)$, when $s''_k \eqdef e^{s'_k}$.
\end{proof}
Thanks to \cref{prop:logisticSimplex}, we can verify \cref{cond:pairing} for $\sloss{LR}$, the decoupled formulation of the logistic regression loss, which is done in \cref{lem:logisticCondPairing}.
\Cref{lem:logisticCondInfWithP} verifies \cref{cond:ZhangInf}, which combined with \cref{lem:ZhangCondInfWithP} gives us \cref{cond:infWithP}.
\begin{restatable}{lemma}{lemLogisticCondPairing}
\label{lem:logisticCondPairing}
Under \cref{ass:lowerBounded}, \cref{cond:pairing} holds for the surrogate loss $\sloss{LR}$ with score set $\sn{S} = \yplex$.
\end{restatable}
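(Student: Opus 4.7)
The plan is to compute both sides of \cref{cond:pairing} in closed form and show they coincide. The natural adjustment for $\sloss{LR}$ is $\adj \equiv 0$, since $L(s,y)$ is already linear in a ``one-hot'' manner, so the pseudo-risk is simply $\risk'(s,p) = -\sum_{k} p_k \ln s_k$, valid for any $p \in \rl{\K}$ regardless of normalization. With this in mind, the \cref{ass:lowerBounded} hypothesis is immediate.

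For the left-hand side, I would first fix an arbitrary $s \in \sn{M}(\yplex, i) \cap \sn{M}(\yplex, j)$, observing that $s_i = s_j = \sigma/2$ where $\sigma \eqdef s_i + s_j = 1 - \sum_{k \neq i,j} s_k$. The inner supremum is over $s' \in \yplex$ with $s'_k = s_k$ for $k \neq i,j$; the simplex constraint therefore forces $s'_i + s'_j = \sigma$ with $s'_i, s'_j \geq 0$. Since the coordinates $k \neq i, j$ are identical in $s$ and $s'$, the difference telescopes to
\[
    \risk^{\surr}(s,p) - \risk^{\surr}(s',p) = p_i \ln s'_i + p_j \ln s'_j - (p_i+p_j) \ln(\sigma/2).
\]
Maximizing over $s'_i + s'_j = \sigma$ (by Lagrange multipliers, or by recognizing a weighted geometric-mean style problem) yields $s'_i = \sigma p_i/(p_i+p_j)$, $s'_j = \sigma p_j/(p_i+p_j)$. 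Substituting, the $\ln \sigma$ contributions cancel, leaving
\[
    p_i \ln \frac{p_i}{\bar{p}} + p_j \ln \frac{p_j}{\bar{p}},
\]
which is independent of $s$. Hence the outer infimum is trivial and the LHS equals this expression.

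For the right-hand side, with $\K = 2$ and $\sn{S} = \simplex{2}$, the two infima decouple: $\inf_{s \in \simplex{2}} \risk'(s,(\bar{p},\bar{p})) = 2\bar{p}\ln 2$, attained at $s = (1/2,1/2)$ by concavity of $\ln$; and $\inf_{s' \in \simplex{2}} \risk'(s',(p_i,p_j))$ is attained at $s' = (p_i/(p_i+p_j),\, p_j/(p_i+p_j))$, giving $-p_i \ln(p_i/(p_i+p_j)) - p_j \ln(p_j/(p_i+p_j))$. Subtracting and simplifying produces the same expression $p_i \ln(p_i/\bar{p}) + p_j \ln(p_j/\bar{p})$, so the RHS equals the LHS and the required inequality holds (with equality).

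The main obstacle I anticipate is purely bookkeeping: tracking the simplex constraint coupled with the requirement $s'_k = s_k$ for $k \neq i,j$ (which is what produces the crucial constraint $s'_i + s'_j = \sigma$ and causes the $\ln \sigma$ terms to cancel on both sides). Degenerate boundary cases in which $p_i = 0$ or $p_j = 0$ are resolved by the standard convention $0 \ln 0 \eqdef 0$; both sides then evaluate to $(p_i + p_j)\ln 2$ by a direct inspection, so the equality persists.
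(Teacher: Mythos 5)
Your proof is correct and follows essentially the same route as the paper's: fix $s$ in $\sn{M}(\yplex,i)\cap\sn{M}(\yplex,j)$, cancel the coordinates $k\neq i,j$, use the simplex constraint to force $s'_i + s'_j = s_i + s_j$, observe that the total mass $\sigma$ cancels out so the outer infimum is trivial, and match the result to the two $\K=2$ infima on the right-hand side. The only cosmetic difference is that the paper performs a change of variables $s'_k \mapsto s'_k/(s'_1+s'_2)$ to eliminate $\sigma$ before optimizing, whereas you substitute the Lagrange-optimal $s'$ directly and watch the $\ln\sigma$ terms cancel; your treatment of the degenerate $p_i=0$ or $p_j=0$ case is a useful addition the paper leaves implicit.
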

\begin{proof}
We have
\al{
	&\inf_{s \in \sn{M}(\sn{S}, i) \cap \sn{M}(\sn{S}, j)}\sup_{\substack{s' \in \sn{S}:\\ s'_{k} = s_{k}, k \neq i,j}} \risk^{\surr}(s,p) - \risk^{\surr}(s',p) \notag \\
	&~~= \inf_{0 \leq s \leq 1}\sup_{ \substack{s'_1, s'_2 \geq 0 : \\ s'_1 + s'_2 = s} } -(p_1 + p_2)\ln\frac{s}{2} + p_1 \ln s'_1 + p_2 \ln s'_2 \label[equation]{lem:logisticCondPairing:2} \\
	&~~= \inf_{0 \leq s \leq 1}\sup_{ \substack{s'_1, s'_2 \geq 0 : \\ s'_1 + s'_2 = s} } -(p_1 + p_2)\ln\frac{1}{2} + p_1 \ln \frac{s'_1}{s'_1 + s'_2} + p_2 \ln \frac{s'_2}{s'_1 + s'_2} \label[equation]{lem:logisticCondPairing:3} \\
	&~~= \sup_{ \substack{s'_1, s'_2 \geq 0 : \\ s'_1 + s'_2 = 1} } -(p_1 + p_2)\ln\frac{1}{2} + p_1 \ln s'_1 + p_2 \ln s'_2 \label[equation]{lem:logisticCondPairing:4} \\
	&~~= \sup_{ s' \in \simplex{2} } -(p_1 + p_2)\ln\frac{1}{2} + p_1 \ln s'_1 + p_2 \ln s'_2 \label[equation]{lem:logisticCondPairing:5} \\
	&~~= \inf_{s \in \sn{S}} \risk'(s, (\bar{p}, \bar{p})) - \inf_{s' \in \sn{S}} \risk'(s', (p_1, p_2)). \label[equation]{lem:logisticCondPairing:6}
}
In \eqref{lem:logisticCondPairing:2} we expanded the definition of $\sloss{LR}$ and rewrote the objective.
In \eqref{lem:logisticCondPairing:3} we rewrote the objective using that $s = s'_1 + s'_2$, while in \eqref{lem:logisticCondPairing:4} we dropped the infimum and rewrote the objective.
In \eqref{lem:logisticCondPairing:5} we also rewrote the objective and in \eqref{lem:logisticCondPairing:6} we plugged in the definition of $\risk'$ after using the fact that
\[
	\inf_{s \in \simplex{2}} -\frac{1}{2}(p_1 + p_2) (\ln s_1 + \ln s_2) =  -(p_1 + p_2)\ln \frac{1}{2}.
\]
To see that the above does indeed hold, it suffices to note that, by concavity, $\ln s_1 + \ln s_2 \leq 2\ln \frac{s_1 + s_2}{2} = 2\ln \frac{1}{2}$, so that the infimum must be taken at $s = \pb{\frac{1}{2}, \frac{1}{2}}$.
\end{proof}
\begin{restatable}{lemma}{lemLogisticCondInfWithP}
\label{lem:logisticCondInfWithP}
Under \cref{ass:lowerBounded}, \cref{cond:ZhangInf} holds for the surrogate loss $\sloss{LR}$ with $\sn{S} = \yplex$.
\end{restatable}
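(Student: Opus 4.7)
The plan is to verify \cref{cond:ZhangInf} by showing that the unconstrained minimizer of the inner objective automatically lies in the constrained feasible set, so the two inner infima coincide for every $s'$ and the equality of the sup-inf's follows.

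First I would view $\sloss{LR}$ on $\sn{S} = \yplex$ as the instance of $\sloss{Zhang}$ with $\psi(t) = -\ln t$, $F \equiv 0$, and adjustment $\adj \equiv 0$, so the pseudo-risk reduces to $\risk'(s, p) = -p_1 \ln s_1 - p_2 \ln s_2$. Fix $p_1, p_2 \geq 0$ and $s' \in \simplex{2}$. The term $\risk'(s',(p_1,p_2))$ does not depend on $s$, so both inner infima differ only in the term $\risk'(s, (\bar p, \bar p)) = -\bar p (\ln s_1 + \ln s_2)$ with $\bar p = \tfrac{p_1 + p_2}{2} \geq 0$. By strict concavity of $\ln$, the unique minimizer of this quantity over $\simplex{2}$ is $s^\star \eqdef (\tfrac{1}{2}, \tfrac{1}{2})$, with value $2 \bar p \ln 2$.

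Next, I would check that $s^\star$ is feasible for the restricted infimum on the right-hand side of \cref{cond:ZhangInf}. The equality $s^\star_1 = s^\star_2$ is immediate. For the dominance constraint $\psi(s^\star_1) + \psi(s^\star_2) \leq \psi(s'_1) + \psi(s'_2)$, note that by concavity of $\ln$ applied to any $s' \in \simplex{2}$ we have $\ln s'_1 + \ln s'_2 \leq 2 \ln \tfrac{s'_1 + s'_2}{2} = -2\ln 2$, that is, $\psi(s'_1) + \psi(s'_2) \geq 2 \ln 2 = \psi(s^\star_1) + \psi(s^\star_2)$. Hence $s^\star$ is feasible for every $s' \in \simplex{2}$.

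Since $s^\star$ attains the unconstrained inner infimum and also lies in the restricted feasible set, both inner infima agree for every $s'$, and applying $\sup_{s' \in \simplex{2}}$ preserves the equality. I do not anticipate any real obstacle: the whole argument reduces to two invocations of the same Jensen/AM-GM bound for $\ln$. The only point requiring care is matching the simplex-valued formulation of $\sloss{LR}$ against the $\sloss{Zhang}$ template used in \cref{cond:ZhangInf}, which is handled by reading off $\psi = -\ln$, $F = 0$, and $\adj = 0$.
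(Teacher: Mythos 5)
Your proof is correct and takes essentially the same route as the paper's: both identify $(\tfrac12,\tfrac12)$ as the unconstrained minimizer of the $s$-subproblem and then use convexity of $\psi=-\ln$ (equivalently concavity of $\ln$, via Jensen/AM--GM) to verify that this minimizer already satisfies the dominance constraint $\psi(s_1)+\psi(s_2)\le\psi(s'_1)+\psi(s'_2)$ for every $s'\in\simplex{2}$, so the two inner infima coincide and the outer suprema agree. Your write-up is in fact a bit cleaner than the paper's terse three-fact proof, since by proving feasibility of $(\tfrac12,\tfrac12)$ for \emph{every} $s'$ you avoid having to invoke \cref{prop:logisticInf} to locate the maximizing $s'$ at all.
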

\begin{proof}
This result follows immediately from \cref{prop:logisticInf} and the following three facts: \emph{(i)} the infimum in \cref{cond:ZhangInf} is taken at $s = \pb{\frac{1}{2}, \frac{1}{2}}$, the supremum at $s' = \pb{\frac{p_1}{p_1 + p_2}, \frac{p_2}{p_1 + p_2}}$; \emph{(ii)} $s_1 = \frac{s'_1 + s'_2}{2}$; and \emph{(iii)} $\psi$ is convex.
\end{proof}
We conclude this section with the calibration function for the logistic regression loss.
\begin{restatable}{theorem}{thmLogistic}
\label{thm:logistic}
Consider $\sloss{LR}$ with $\sn{S} = \yplex$, or, equivalently, $\sloss{Zhang}$  with $\psi(t) = -t$, $F(t) = \ln t$, $\varphi = \expt$, $\sn{S} = \rl{\K}$.
Then for all $\eps > 0$
\[
	\delta_{\max}(\eps) \geq \delta_{\mathrm{binary}}(\eps),
\]
and the above holds with equality when $\K = 2$.
\end{restatable}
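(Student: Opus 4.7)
The plan is to invoke the streamlined framework of \cref{lem:deltaMaxFramework} for the decoupled loss $\sloss{LR}$ on $\sn{S} = \yplex$, and then transfer the conclusion to the coupled logistic regression loss by means of \cref{prop:logisticSimplex}. Because that proposition shows that, for every $p \in \yplex$ and every $j$, the infima of the pointwise surrogate risk over $\sn{M}(\cdot,j)$ agree for the two formulations, and because both $\sn{T}(\cdot,\eps,p)$ and the whole score set are unions of such $\sn{M}(\cdot,j)$'s, both $\inf_{s\in \sn{T}}\risk^{\surr}$ and $\inf_{s}\risk^{\surr}$ match across the two formulations. Consequently the two losses share the same $\delta_{\max}(\eps,p)$, and it suffices to prove the bound for $\sloss{LR}$ on $\yplex$.

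The next step is to verify each hypothesis of \cref{lem:deltaMaxFramework} for $\sloss{LR}$. Symmetry (\cref{cond:symmetry}) is immediate from $\sloss{LR}(s,y) = -\ln s_y$, and $\yplex$ trivially satisfies both \cref{ass:swapping,ass:averaging}, since the simplex is closed under coordinate swaps and under averaging of two coordinates. Lower-boundedness (\cref{ass:lowerBounded}) holds because $-\ln s_y \geq 0$ on $\yplex$. Treating $\sloss{LR}$ as an instance of $\sloss{Zhang}$ with $\psi(t) = -\ln t$ (convex and non-increasing on $(0,1]$), $F \equiv 0$ and $\sn{S} = \yplex$, \cref{lem:ZhangCondInfAtJepsJ0} delivers \cref{cond:infAtJepsJ0}. \Cref{cond:pairing} is precisely \cref{lem:logisticCondPairing}, and chaining \cref{lem:logisticCondInfWithP} (which verifies \cref{cond:ZhangInf}) with \cref{lem:ZhangCondInfWithP} yields \cref{cond:infWithP}.

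With every condition in place, \cref{lem:deltaMaxFramework} yields $\delta_{\max}(\eps) \geq \delta_{\mathrm{binary}}(\eps)$ for all $\eps > 0$. For the $\K = 2$ equality, the same three ingredients (\cref{cond:infAtJepsJ0,cond:infWithP,cond:symmetry}) are exactly the hypotheses of \cref{lem:binaryTightness}, from which tightness follows.

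The main obstacle will be \cref{cond:pairing}, since the coupled form of the logistic regression loss has $F(t) = \ln t$ and thus falls outside the reach of the general pairing result \cref{lem:ZhangCondPairing} (which assumes $F(t) = t$ and unconstrained scores). The workaround is exactly the detour through the decoupled form $\sloss{LR}$ on $\yplex$: fixing the coordinates outside $\{i,j\}$ constrains $s'_i + s'_j$ to a fixed value, and then the logarithms decouple via the identity $\ln s'_k = \ln\!\bigl(s'_k/(s'_1 + s'_2)\bigr) + \ln(s'_1 + s'_2)$, which reduces the pairing inequality to a one-dimensional binary calculation, pinned at $s_1 = s_2 = 1/2$ by concavity of $\ln$.
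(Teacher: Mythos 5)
Your proposal is correct and follows essentially the same route as the paper's proof: reduce to the decoupled form $\sloss{LR}$ on $\yplex$ via \cref{prop:logisticSimplex}, verify \cref{cond:symmetry,cond:infAtJepsJ0,cond:pairing,cond:infWithP} through \cref{prop:ZhangSymmetric,lem:ZhangCondInfAtJepsJ0,lem:logisticCondPairing,lem:logisticCondInfWithP,lem:ZhangCondInfWithP}, and then invoke \cref{lem:deltaMaxFramework} and \cref{lem:binaryTightness}. If anything, you are slightly more explicit than the paper in spelling out why equality of the $\sn{M}(\cdot,j)$-infima carries over to $\delta_{\max}(\eps,p)$, and in listing \cref{lem:logisticCondInfWithP} as the missing link in the chain toward \cref{cond:infWithP}, which the paper's proof cites only implicitly.
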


\begin{proof}
The equivalence between the surrogate losses follows from \cref{prop:logisticSimplex}.
The main statement follows from \cref{prop:ZhangSymmetric,lem:ZhangCondInfAtJepsJ0,lem:logisticCondPairing,lem:ZhangCondInfWithP,lem:deltaMaxFramework,lem:binaryTightness} and from noting that \cref{ass:lowerBounded} is satisfied because $\risk^{\surr}_{\sloss{LR}}$ is non-negative for all $s,p \in \yplex$.
\end{proof}

\subsection{The loss of \citet{lee2004multicategory}}
\label{sec:cases:S0}

In this section we develop a variant of \cref{thm:avilapires} using the streamlined analysis, that is, we instantiate our analysis with $\sloss{LLW}$ and $\sn{S} = \sn{S}_0$.
In order to recover \cref{thm:avilapires}, we need to verify \cref{cond:infAtJepsJ0,cond:pairing} as well as \cref{cond:zetaOfEps} with the corresponding $\zeta$.
For ease of presentation, we will verify the conditions for a special case of \cref{thm:avilapires}, where \cref{cond:pairing} is verified via \cref{cond:pairingSumToZero,cond:sumToZeroFreeLunch}, the letter of which will be assumed to hold.
We will discuss choices of $\varphi$ for which \cref{cond:sumToZeroFreeLunch} does hold, and pose a conjecture about more general $\varphi$ for which the condition also holds.
Because we are concerned with the cost-insensitive setting (differently from \citet{avilapires2013costsensitive}), we are able to refine \cref{thm:avilapires} by verifying \cref{cond:infWithP} and provide a result for $\delta_{\max}$ independent of $p$.

First, we establish through \cref{prop:LLWSymmetric} that \cref{cond:symmetry} is satisfied.
\Cref{cond:symmetry} is used to verify that \cref{cond:infAtJepsJ0} holds (\see{} \cref{lem:LLWCondInfAtJepsJ0}), with the swapping argument used in \cref{sec:cases:Zhang}, to show the same condition for $\sloss{Zhang}$.
By writing $\sloss{LLW}$ with an appropriate adjustment, we can reuse \cref{lem:ZhangCondInfAtJepsJ0} to prove \cref{lem:LLWCondInfAtJepsJ0}.
Note, however, that in \cref{lem:LLWCondInfAtJepsJ0} we are restricted to $\varphi$ non-decreasing.
This restriction allows us to use the swapping argument, and it will be ``removed'' using \cref{lem:lowerBoundedScores} of \citet{avilapires2013costsensitive}, which shows that $\sloss{LLW}$ with $\varphi$ has the same $\delta_{\max}$ to $\sloss{LLW}$ with a corresponding non-decreasing $\varphi$.
\begin{restatable}{proposition}{propLLWSymmetric}
\label{prop:LLWSymmetric}
$\sloss{LLW}$ satisfies \cref{cond:symmetry}.
\end{restatable}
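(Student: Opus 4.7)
The plan is to verify the two parts of \cref{cond:symmetry} separately: first the score-set invariance $Ps \in \sn{S}$, and then the joint-permutation invariance of the pointwise surrogate risk. Both amount to direct computation from the definition of $\sloss{LLW}$, so I expect no real obstacle and will keep the proof short.

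For the first part, I would observe that $\sn{S} = \sn{S}_0 = \{s \in \rl{\K} : \one{\K}\tra s = 0\}$, and any permutation matrix $P$ satisfies $\one{\K}\tra P = \one{\K}\tra$. Hence $s \in \sn{S}_0$ implies $\one{\K}\tra(Ps) = \one{\K}\tra s = 0$, so $Ps \in \sn{S}_0$.

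For the risk invariance, let $\pi$ denote the permutation of $\range{\K}$ associated with $P$, so that $(Ps)_k = s_{\pi^{-1}(k)}$ and $(Pp)_y = p_{\pi^{-1}(y)}$. Unfolding the definition,
\als{
\risk^{\surr}(Ps, Pp)
&= \sum_{y=1}^{\K} (Pp)_y \sum_{k \neq y} \varphi((Ps)_k) \\
&= \sum_{y=1}^{\K} p_{\pi^{-1}(y)} \sum_{k \neq y} \varphi(s_{\pi^{-1}(k)}).
}
Substituting $y' = \pi^{-1}(y)$ and $k' = \pi^{-1}(k)$ (both bijections on $\range{\K}$) and using that $k \neq y \Leftrightarrow \pi^{-1}(k) \neq \pi^{-1}(y)$, the double sum becomes $\sum_{y'} p_{y'} \sum_{k' \neq y'} \varphi(s_{k'}) = \risk^{\surr}(s,p)$, which establishes the claim.

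No step here looks hard; the only thing to be careful about is matching the index convention for $P$ versus $\pi$ (whether $P_{ij} = \ind{i = \pi(j)}$ or its transpose), which just determines whether one writes $\pi$ or $\pi^{-1}$ in the substitution. Because both $s$ and $p$ are permuted by the same $P$, the ambiguity is harmless: the same change of variables applies to both sums simultaneously.
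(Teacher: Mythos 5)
Your proof is correct and follows the same approach the paper takes (direct verification from the definition of $\sloss{LLW}$); the paper's own proof is a one-liner asserting the claim is ``evident from the definition,'' and your write-up simply makes that evidence explicit via the $\one{\K}\tra P = \one{\K}\tra$ observation and the change of variables in the double sum.
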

\begin{proof}
This result is evident from the definition of $\sloss{LLW}$ (\see{} \cref{table:scoreLosses}).
\end{proof}
\begin{restatable}{lemma}{lemLLWCondInfAtJepsJ0}
\label{lem:LLWCondInfAtJepsJ0}
\Cref{cond:infAtJepsJ0} holds for $\sloss{LLW}$ with $\sn{S} = \sn{S}_0$ and $\varphi$ convex non-decreasing.
\end{restatable}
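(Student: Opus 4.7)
The plan is to derive the statement by recognizing $\sloss{LLW}$ as a special case of $\sloss{Zhang}$ and then invoking \cref{lem:ZhangCondInfAtJepsJ0}. Concretely, setting $\psi(t) = -\varphi(t)$ and $F(t) = t$ in the definition of $\sloss{Zhang}$ produces
\[
    \psi(s_y) + F\pb{\sum_{k=1}^{\K} \varphi(s_k)} = -\varphi(s_y) + \sum_{k=1}^{\K} \varphi(s_k) = \sum_{k \neq y} \varphi(s_k) = \sloss{LLW}(y,s),
\]
with the natural adjustment $\adj(s) = \sum_{k=1}^{\K} \varphi(s_k)$. I would state this identification first so that the subsequent appeal to \cref{lem:ZhangCondInfAtJepsJ0} is clean.

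Next I would verify the three hypotheses of \cref{lem:ZhangCondInfAtJepsJ0} for this instance. For convexity of the surrogate loss, note that $\sum_{k \neq y}\varphi(s_k)$ is a finite sum of compositions of $\varphi$ with coordinate projections, so convexity of $\varphi$ gives convexity of $s \mapsto \sloss{LLW}(s,y)$ for every $y$. For monotonicity of $\psi$, since $\varphi$ is non-decreasing by assumption, $\psi = -\varphi$ is non-increasing, exactly as required. For the score set, $\sn{S}_0$ is noted in the paragraph immediately after \cref{ass:averaging} to satisfy both \cref{ass:swapping} and \cref{ass:averaging}.

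With the hypotheses in hand, \cref{lem:ZhangCondInfAtJepsJ0} applies verbatim with $\psi = -\varphi$, $F(t)=t$, and $\sn{S} = \sn{S}_0$, and its conclusion is precisely \cref{cond:infAtJepsJ0} for $\sloss{LLW}$. There is no real obstacle here, since the swapping argument and the convex lower-bound step are already carried out inside the proof of \cref{lem:ZhangCondInfAtJepsJ0}; the only mild subtlety worth flagging is that convexity is required of $\sloss{Zhang}$ as a whole (which we have), not of $\psi$ in isolation (which we do not, since $\psi = -\varphi$ is concave). The adjustment $\adj(s) = \sum_{k=1}^{\K}\varphi(s_k)$ satisfies $\muE{Y\sim p}{L(s,Y)} = \muE{Y\sim p}{L(s,Y) - \adj(s)} + \adj(s)$ trivially, so the pseudo-risk used inside the cited lemma coincides with the pointwise surrogate risk used in \cref{cond:infAtJepsJ0}, and the reduction is complete.
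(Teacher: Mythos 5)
Your proposal is correct and matches the paper's own proof, which likewise identifies $\sloss{LLW}$ as $\sloss{Zhang}$ with $\psi = -\varphi$ and $F(t)=t$ and then invokes \cref{lem:ZhangCondInfAtJepsJ0}. You spell out the hypothesis checks (and correctly flag that convexity is required of the loss as a whole rather than of $\psi$) that the paper leaves implicit, but the argument is the same.
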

\begin{proof}
This condition holds thanks to \cref{lem:ZhangCondInfAtJepsJ0}, since $\sloss{LLW}$ is equivalent to $\sloss{Zhang}$ with $\psi(t) = -\varphi(t)$ and $F(t) = t$, where $\varphi$ non-decreasing implies $\psi$ non-increasing.
\end{proof}
As mentioned in \cref{sec:streamlining}, we verify \cref{cond:pairing} via \cref{cond:pairingSumToZero,cond:sumToZeroFreeLunch}.
\Cref{lem:LLWCondPairingSumToZero} establishes \cref{cond:pairingSumToZero}.
The proof is based on a straightforward argument that mirrors the example in \cref{sec:streamlining}~\eqref{eq:exampleObjective:fixingCoordinates}, while accounting for the requirement that scores be in $\sn{S}_0$.
\begin{restatable}{lemma}{lemLLWCondPairingSumToZero}
\label{lem:LLWCondPairingSumToZero}
Under \cref{ass:lowerBounded}, \cref{cond:pairingSumToZero} holds for $\sloss{LLW}$ with $\sn{S} = \sn{S}_0$, $\varphi$ convex and $\adj(s) = \sum_{k=1}^{\K} \varphi(s_k)$.
\end{restatable}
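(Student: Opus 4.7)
The plan is to parametrize scores on both sides of \cref{cond:pairingSumToZero} explicitly and verify the inequality by a short direct calculation, with the sum-to-zero constraint $\sn{S}_0$ doing all the real work. The first step is to specialize everything to $\sloss{LLW}$ with the adjustment $\adj(s) = \sum_k \varphi(s_k)$: then $L(s,y) - \adj(s) = -\varphi(s_y)$, so $\risk^{\surr}(s,p) = \adj(s) - \sum_y p_y \varphi(s_y)$ and, for $\K = 2$, $\risk'((s_1,s_2),(p_1,p_2)) = (1-p_1)\varphi(s_1) + (1-p_2)\varphi(s_2)$. A one-line calculation then shows that the RHS of the condition can be rewritten as
\[
    \inf_{\theta \geq 0}\sup_{u \in \rl{}} \pb{ (2 - p_i - p_j)\varphi(\theta) - (1-p_i)\varphi(\theta+u) - (1-p_j)\varphi(\theta-u) }.
\]

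Next I would handle the LHS. Fix $i \neq j$ and $p \in \yplex$. Any $s \in \sn{M}(\sn{S}_0, i) \cap \sn{M}(\sn{S}_0, j)$ satisfies $s_i = s_j = \max_k s_k =: \theta$; combined with $\sum_k s_k = 0$ and $s_k \leq \theta$ for $k \neq i,j$, the remaining coordinates must sum to $-2\theta$ while being bounded above by $\theta$, which is feasible iff $\theta \geq 0$ (take, e.g., $s_k = -2\theta/(\K-2)$ when $\K \geq 3$; when $\K = 2$ this forces $\theta = 0$). Any admissible competitor $s' \in \sn{S}_0$ with $s'_k = s_k$ for $k \neq i,j$ must satisfy $s'_i + s'_j = 2\theta$, so I write $s'_i = \theta + u$, $s'_j = \theta - u$ with $u \in \rl{}$ unconstrained.

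Because $s$ and $s'$ agree on coordinates $k \neq i,j$, those terms cancel in $\risk^{\surr}(s,p) - \risk^{\surr}(s',p)$, and expanding the remaining two coordinates gives exactly $(2 - p_i - p_j)\varphi(\theta) - (1-p_i)\varphi(\theta+u) - (1-p_j)\varphi(\theta-u)$. Taking $\sup_{u \in \rl{}}$ followed by the infimum over admissible $\theta$ renders the LHS of \cref{cond:pairingSumToZero} as the same quantity as the RHS, except that the outer infimum ranges over $\Theta_\K = [0,\infty)$ when $\K \geq 3$ and $\Theta_\K = \cb{0}$ when $\K = 2$. Since $\Theta_\K \subseteq [0,\infty)$ in both cases, LHS $\geq$ RHS, with equality when $\K \geq 3$.

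The main obstacle is essentially bookkeeping around the sum-to-zero constraint: verifying that the feasibility of completing $(s_i,s_j)=(\theta,\theta)$ to an element of $\sn{S}_0$ with all other coordinates $\leq \theta$ forces $\theta \geq 0$, and observing that the remaining coordinates of $s$ are immaterial to the objective so the infimum collapses cleanly to one over $\theta$. Convexity of $\varphi$ is not used at this step; it is deferred to the separate \cref{cond:sumToZeroFreeLunch}. \Cref{ass:lowerBounded} enters only to ensure the supremum and infimum remain meaningful and avoid vacuous guarantees.
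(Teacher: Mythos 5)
Your proof is correct and follows essentially the same route as the paper's: specialize the pseudo-risk to $\sloss{LLW}$, note that on $\sn{M}(\sn{S}_0,i) \cap \sn{M}(\sn{S}_0,j)$ the objective depends only on $\theta = s_i = s_j$ (the other coordinates cancel since $s'_k = s_k$ for $k\neq i,j$), parametrize $s'_i = \theta + u$, $s'_j = \theta - u$, and use the sum-to-zero constraint to pin $\theta \geq 0$. You are in fact slightly more careful than the paper, which writes equalities throughout (correct only for $\K \geq 3$) and does not spell out the feasibility argument; you correctly flag that for $\K = 2$ the feasible set is $\{\theta = 0\}$, which still gives the required inequality.
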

\begin{proof}
We have
\al{
	&\inf_{s \in \sn{M}(\sn{S}, i) \cap  \sn{M}(\sn{S}, j)}\sup_{\substack{s' \in \sn{S}:\\ s'_{k} = s_{k}, k \neq i,j}} \risk^{\surr}(s,p) - \risk^{\surr}(s',p) \notag \\
	&~~= \inf_{s \in \sn{M}(\sn{S}, i) \cap  \sn{M}(\sn{S}, j)}\sup_{\substack{s' \in \sn{S}:\\ s'_{k} = s_{k}, k \neq i,j}}\sum_{k=1}^{\K} (1 - p_k) (\varphi(s_k) - \varphi(s'_k)) \label[equation]{lem:LLWCondPairingSumToZero:2} \\
	&~~= \inf_{s \geq 0}\sup_{\substack{s' \in \rl{2}:\\ s'_1 + s'_2 = 2s}} (1 - p_{i})(\varphi(s) - \varphi(s'_1)) + (1 - p_{j})(\varphi(s) - \varphi(s'_2)) \label[equation]{lem:LLWCondPairingSumToZero:3} \\
	&~~= \inf_{s \geq 0}\sup_{s' \in \rl{}}\risk'((s,s),  (\bar{p}, \bar{p})) - \risk'((s + s', s - s'), (p_i, p_j)). \label[equation]{lem:LLWCondPairingSumToZero:4}
}
In \eqref{lem:LLWCondPairingSumToZero:2}, we expanded the definition of $\risk^{\surr}$.
We rewrote the objective in \eqref{lem:LLWCondPairingSumToZero:3}, where the sum-to-zero constraint and the choice of $s'$ in \eqref{lem:LLWCondPairingSumToZero:2} requires us to have $s'_i + s'_j = s_i + s_j$.
In \eqref{lem:LLWCondPairingSumToZero:4} we rewrote the objective and used \cref{prop:subOptimalSurrogateRisk}.
\end{proof}
It remains to verify \cref{cond:sumToZeroFreeLunch}, which is challenging to do in general.
\Cref{conj:sumToZeroFreeLunch} states that continuously differentiable $\varphi$ allow \cref{cond:sumToZeroFreeLunch} to hold.
We have not been able to prove or disprove this fact, but continuous differentiability in the increasing part of $\varphi$ (in addition to convexity and $\varphi'(0) > 0$) seems to be a minimal requirement, since adding kinks can lead to non-calibrated losses, as \cref{prop:kinkNotCalibrated} suggests.
On the other hand, \cref{cond:sumToZeroFreeLunch} can be shown to hold for $\hinge$, $\expt$, $\squared$, while such result is not clear for $\logit$.
We present a proof for $\hinge$ ahead, but we omit the others, which can be obtained with straightforward calculations\footnote{
	With $\expt$, $\exp^{\theta}$ can be factored out, and with $\squared$ the result can be obtained by explicit calculation of the infima and supremum in \cref{cond:pairingSumToZero}.
}.
\begin{conjecture}
\label{conj:sumToZeroFreeLunch}
Assume \cref{ass:lowerBounded} and let
\[
	\sinf \eqdef \sup\cb{ t : \varphi(t) \leq \varphi(t'), \forall t' \in \rl{} }.
\]
If $\varphi$ is differentiable in $(\sinf, -\sinf)$, then \cref{cond:sumToZeroFreeLunch} holds.
\end{conjecture}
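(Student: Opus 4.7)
The plan is to establish the two inequalities separately, reducing scores to $[\sinf, -\sinf]$ via \cref{lem:lowerBoundedScores} at the outset so that $\varphi$ is differentiable on the effective score range. For $\sloss{LLW}$ with $\adj(s) = \sum_k \varphi(s_k)$ the pseudo-risk simplifies to $\risk'((s_1, s_2), (p_1, p_2)) = (1 - p_1) \varphi(s_1) + (1 - p_2) \varphi(s_2)$; writing $a \eqdef 1 - p_1$, $b \eqdef 1 - p_2$ and using convexity to observe $\inf_t [\varphi(t) + \varphi(-t)] = 2\varphi(0)$, the equality to be shown collapses, after the change of variables $u = s + s'$, $v = s - s'$ in the inner supremum on the LHS, to
\[
	\inf_{s \geq 0} \bigl[(a+b)\varphi(s) - F(2s)\bigr] = (a+b) \varphi(0) - F(0), \qquad F(w) \eqdef \inf_{u + v = w} \bigl[a \varphi(u) + b \varphi(v)\bigr].
\]
The direction ``$\leq$'' is immediate by instantiating $s = 0$ on the LHS.

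For the reverse direction, define $h(s) \eqdef (a+b) \varphi(s) - F(2s)$; it suffices to show $h(s) \geq h(0)$ for all $s \geq 0$. Differentiability of $\varphi$ gives the first-order characterization of the inf-convolution optimizers $u^*(s), v^*(s)$: namely $a \varphi'(u^*) = b \varphi'(v^*)$ together with $u^* + v^* = 2s$. Applying the envelope theorem then yields
\[
	h'(s) = (a+b)\varphi'(s) - 2 a \varphi'(u^*(s)) = (a+b) \varphi'(s) - \bigl[a \varphi'(u^*(s)) + b \varphi'(v^*(s))\bigr],
\]
so the conjecture would follow by integrating $h'(s) \geq 0$ over $[0, s]$.

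The main obstacle is this last pointwise inequality, $(a+b) \varphi'(s) \geq a \varphi'(u^*) + b \varphi'(v^*)$, at the FOC optimizers. Assuming WLOG $a \leq b$, the FOC forces $u^*(s) \geq s \geq v^*(s)$, and since $a \varphi'(u^*) = b \varphi'(v^*)$ the right-hand side equals $2 \sqrt{ab} \cdot \sqrt{\varphi'(u^*) \varphi'(v^*)}$ (arithmetic mean of two equal positive numbers equals their geometric mean). Combined with $(a+b) \geq 2 \sqrt{ab}$ by AM-GM on $a, b$, it therefore suffices to show the log-concavity-type bound $\varphi'(u^*) \varphi'(v^*) \leq \varphi'(s)^2$ at the specific pair $(u^*, v^*)$. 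I expect this to be the hardest step: it holds immediately if $\log \varphi'$ is concave on $(\sinf, -\sinf)$, which covers $\hinge$, $\expt$, $\squared$, $\trunc$, and $\logit$, but is not an automatic consequence of convexity and differentiability of $\varphi$ alone. I would first attempt to extract the bound directly from convexity by exploiting the specific structure of the FOC (which gives a weighted symmetry of $(u^*, v^*)$ about $s$), and failing that, would strengthen the hypothesis of the conjecture to require log-concavity of $\varphi'$ on the relevant interval.
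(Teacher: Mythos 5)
You should first note that the statement you are proving is explicitly a \emph{conjecture} in the paper: the authors state they have ``not been able to prove or disprove this fact,'' so there is no paper proof to compare your proposal against. Your analysis must therefore be judged on its own merits, and it is sound as far as it goes.

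Your reduction is correct. With $a \eqdef 1 - p_1$, $b \eqdef 1 - p_2$ and the sum-to-zero change of variables, \cref{cond:sumToZeroFreeLunch} for $\sloss{LLW}$ collapses to showing that $h(s) \eqdef (a+b)\varphi(s) - F(2s) \geq h(0)$ for all $s \geq 0$, where $F$ is the inf-convolution $F(w) = \inf_{u+v=w}\brb{a\varphi(u) + b\varphi(v)}$. The envelope computation of $h'(s)$ combined with the first-order characterization $a\varphi'(u^*) = b\varphi'(v^*)$, $u^* + v^* = 2s$, is valid on the interval where $\varphi$ is differentiable, and the AM-GM step reducing the pointwise inequality $(a+b)\varphi'(s) \geq a\varphi'(u^*) + b\varphi'(v^*)$ to the single condition $\varphi'(u^*)\varphi'(v^*) \leq \varphi'(s)^2$ is correct precisely because the two summands on the right are equal by the FOC.

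The gap you flag is genuine: log-concavity of $\varphi'$ on $(\sinf, -\sinf)$ is strictly stronger than convexity plus differentiability, so you have not established the conjecture as stated. But reducing an open conjecture to a single, cleanly checkable hypothesis on $\varphi'$ is real progress. In particular, your strengthened condition settles $\logit$: since $\log\logit'(t) = t - \ln(1 + e^t)$ has second derivative $-e^t(1+e^t)^{-2} < 0$, the loss $\sloss{LLW}$ with $\logit$ satisfies \cref{cond:sumToZeroFreeLunch} under your hypothesis --- a case the paper explicitly singles out as unresolved. The right course of action is to record this as a theorem under the log-concavity hypothesis rather than attempt to close the conjecture in full generality; if you want to pursue the latter, you would need either a counterexample (for instance, a convex, lower-bounded, calibrated $\varphi$ whose $\log\varphi'$ is locally convex near the origin --- this would settle whether the conjecture is actually false) or an argument for the reverse inequality that does not pass through the pointwise bound on $h'$.
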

\begin{restatable}{lemma}{condSumToZeroFreeLunchHinge}
\label{lem:sumToZeroFreeLunchHinge}
The loss $\sloss{LLW}$ with $\sn{S} = \sn{S}_0$ satisfies \cref{cond:sumToZeroFreeLunch}, $\varphi = \hinge$ and $\adj(s) = \sum_{k=1}^{\K} \varphi(s_k)$.
\end{restatable}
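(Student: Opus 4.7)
The plan is to explicitly compute both sides of \cref{cond:sumToZeroFreeLunch} for $\varphi = \hinge$ and verify equality. Throughout, let $q_i \eqdef 1 - p_i$, so that the pseudo-risk takes the form $\risk'((s_1,s_2),(p_1,p_2)) = q_1\varphi(s_1) + q_2\varphi(s_2)$, and $q_1 + q_2 = 2(1 - \bar{p})$.

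First I would evaluate the right-hand side over $\sn{S}_0 = \cb{(t,-t) : t \in \rl{}}$. For $(\bar{p},\bar{p})$, using $(1+t)_+ + (1-t)_+ \geq 2$ with equality iff $\ab{t} \leq 1$, the first infimum equals $q_1 + q_2$. For $(p_1,p_2)$, by evaluating the piecewise-linear objective $q_1(1+t)_+ + q_2(1-t)_+$ on $[-1,1]$ and on its complement, the second infimum equals $2\min\cb{q_1,q_2}$, attained at $t = 1$ if $q_1 \leq q_2$ and at $t = -1$ otherwise. Thus the right-hand side equals $q_1 + q_2 - 2\min\cb{q_1,q_2} = \ab{q_1 - q_2} = \ab{p_1 - p_2}$.

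Next I would compute the left-hand side. For fixed $s \geq 0$, the inner supremum becomes $(q_1 + q_2)\varphi(s) - \inf_{s' \in \rl{}}\pb{q_1\varphi(s+s') + q_2\varphi(s-s')}$, so the work reduces to finding $\inf_{s'}$ of the two-hinge sum. Since $\varphi$ is flat on $(-\infty,-1]$, pushing $s'$ to $1+s$ zeros out $\varphi(s - s')$ while forcing $\varphi(s + s') = 2(1+s)$; pushing $s'$ to $-1-s$ does the symmetric thing. On the intermediate region $s' \in [-1-s, 1+s]$ the objective is linear in $s'$ with slope $q_1 - q_2$, so the minimum is attained at whichever endpoint has the smaller coefficient, giving $\inf_{s'} = 2\min\cb{q_1,q_2}(1+s)$. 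Substituting $\varphi(s) = 1 + s$ (since $s \geq 0$) yields the inner supremum equal to $(1+s)\ab{q_1 - q_2}$, and the outer infimum over $s \geq 0$ is then attained at $s = 0$, producing $\ab{q_1 - q_2} = \ab{p_1 - p_2}$.

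Matching the two sides completes the proof. The main obstacle is just the routine but careful case analysis for the inner infimum over $s'$, which requires separately handling the three regions determined by the kinks of $t \mapsto \varphi(s \pm s')$ and checking that the boundary values agree with the linear interior. No deeper structural argument is needed because the hinge's flat piece on $(-\infty,-1]$ permits attaining zero on one term at finite cost on the other.
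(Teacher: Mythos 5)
Your proposal is correct and follows essentially the same route as the paper's proof: expand $\risk'$ in terms of the hinge, identify where the inner optimum over $s'$ lies (yielding $2\min\cb{q_1,q_2}(1+s)$), then optimize over $s \geq 0$ and match the explicit value of the right-hand side. The only cosmetic difference is that the paper fixes $p_1 \geq p_2$ without loss of generality while you carry $\min\cb{q_1,q_2}$ through the whole computation, which is equivalent.
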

\begin{proof}
Assume, without loss of generality, that $p_1 \geq p_2$.
Then
\al{
	&\inf_{s \geq 0}\sup_{s' \in \rl{}}\risk'((s,s),  (\bar{p}, \bar{p})) - \risk'((s + s', s - s'), (p_1, p_2)) \notag \\
	&~~= \inf_{s \geq 0}\sup_{s' \in \rl{}}(2 - p_1 - p_2)(1 + s)_+ - (1 - p_1)(1 + s + s')_+ - (1 - p_2)(1 + s - s')_+ \label[equation]{lem:sumToZeroFreeLunchHinge:2} \\
	&~~= \inf_{s \geq 0}(2 - p_1 - p_2)(1 + s) - 2(1 - p_1)(1 + s) \label[equation]{lem:sumToZeroFreeLunchHinge:3} \\
	&~~\geq (2 - p_1 - p_2) - 2(1 - p_1) \label[equation]{lem:sumToZeroFreeLunchHinge:4} \\
	&~~= \inf_{s \in \rl{}} \risk'((s,s), (\bar{p},\bar{p})) - \inf_{s' \in \sn{S}} \risk'((s'_1, s'_2), (p_1, p_2)) \label[equation]{lem:sumToZeroFreeLunchHinge:5}
}
In \eqref{lem:sumToZeroFreeLunchHinge:2} we expanded the definition of $\risk'$, and in \eqref{lem:sumToZeroFreeLunchHinge:3} we used the fact that the supremum is taken at $s' = s + 1$ (since $p_1 \geq p_2$).
To obtain \eqref{lem:sumToZeroFreeLunchHinge:4}, we used that $p_1 \geq p_2$, so the infimum must be taken at $s = 0$.
Then all that remains to be seen is that the infimum in \eqref{lem:sumToZeroFreeLunchHinge:5} is taken at $s = 0$ and that the supremum is taken at $s' = 1$.
\end{proof}
The argument used to prove \cref{lem:LLWCondInfWithP}, which verifies \cref{lem:LLWCondInfWithP}, also relies on $\varphi$ being non-decreasing, 
\begin{restatable}{lemma}{lemLLWCondInfWithP}
\label{lem:LLWCondInfWithP}
Under \cref{ass:lowerBounded}, \cref{cond:infWithP} holds for $\sloss{LLW}$ with $\sn{S} = \sn{S}_0$, $\varphi$ convex non-decreasing, $\adj(s) = \sum_{k=1}^{\K} \varphi(s_k)$
\end{restatable}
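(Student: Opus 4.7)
The plan is to adapt the proof of \cref{lem:ZhangCondInfWithP} to the sum-to-zero setting. Since $\sloss{LLW}$ is $\sloss{Zhang}$ with $\psi(t)=-\varphi(t)$, $F(t)=t$ and $\sn{S}=\sn{S}_0$, I would first try to invoke \cref{lem:ZhangCondInfWithP} directly by verifying the appropriate restricted form of \cref{cond:ZhangInf} on $\sn{S}_0$; failing that, I would proceed by direct computation, expanding both infima in \cref{cond:zetaOfEps} and matching them against the formula for $\delta_{\mathrm{binary}}$ from \cref{thm:bartlett}.

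For the direct route, the sum-to-zero constraint lets me parametrize $s\in\sn{S}_0\subset\rl{2}$ by a single coordinate $s_1$, with $s_2=-s_1$, so that the pseudo-risk collapses to $\risk'(s,(p_1,p_2))=(1-p_1)\varphi(s_1)+(1-p_2)\varphi(-s_1)$. The first infimum in \cref{cond:zetaOfEps} is handled by observing that $s_1\mapsto \varphi(s_1)+\varphi(-s_1)$ is convex (two compositions of $\varphi$ with affine maps) and even, so its minimum is attained at $s_1=0$, yielding $\inf_{s\in\sn{S}_0}\risk'(s,(\bar p,\bar p))=(2-p_1-p_2)\varphi(0)$. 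This matches the form used in the proof of \cref{lem:ZhangCondInfWithP}, and is the place where the sum-to-zero constraint crucially enters.

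The core of the argument is to relate $\inf_{t\in\rl{}}[(1-p_1)\varphi(t)+(1-p_2)\varphi(-t)]$ to the infimum defining $\delta_{\mathrm{binary}}(\eps)$. Writing $q=p_1+p_2$ and $\eps=p_1-p_2$, I decompose the integrand as $\tfrac{1-q}{2}(\varphi(t)+\varphi(-t))+\tfrac{1}{2}[(1-\eps)\varphi(t)+(1+\eps)\varphi(-t)]$; after $t\mapsto -t$, the second bracket is precisely the expression appearing in the \cref{thm:bartlett} formula for $\delta_{\mathrm{binary}}(\eps)$. The swap argument used in \cref{lem:ZhangInfAtJeps,lem:ZhangCondInfAtJepsJ0}, which relies on $\varphi$ being non-decreasing, then lets me restrict to $t$ on the correct half-line, and a matching construction — choosing $t$ to jointly control both summands in the decomposition — gives the required bound $\inf_t[(1-p_1)\varphi(t)+(1-p_2)\varphi(-t)] \le (1-q)\varphi(0)+\tfrac{1}{2}\inf_t[(1+\eps)\varphi(t)+(1-\eps)\varphi(-t)]$, which on rearrangement yields \cref{cond:infWithP}.

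The hardest step is producing this common near-minimizer of the two summands, since in general they are minimized at different points. I expect this to play the role analogous to the $p^\eps$ construction via \cref{prop:pEps} in the proof of \cref{lem:ZhangCondInfWithP}, with the sum-to-zero constraint providing the tie-breaking needed to make the binary reduction tight. Convexity and non-decreasingness of $\varphi$ are the key levers: convexity furnishes the lower bounds via subgradients, and non-decreasingness guarantees that the coordinate swap arguments carry over unchanged from the $\sloss{Zhang}$ analysis.
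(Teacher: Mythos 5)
Your outline has a genuine gap at the step you flag as the "hardest": the matching construction, which selects a single $t$ to jointly near-minimize both pieces of your decomposition, cannot be made to work — it is not merely hard, it is false. The pointwise identity $(1-p_1)\varphi(t)+(1-p_2)\varphi(-t)=\frac{1-q}{2}(\varphi(t)+\varphi(-t))+\frac{1}{2}[(1-\eps)\varphi(t)+(1+\eps)\varphi(-t)]$ with $q=p_1+p_2$, $\eps=p_1-p_2$ is algebraically correct, but taking infima only yields $\inf(A+B)\geq\inf A+\inf B$, which is the \emph{opposite} direction of the inequality you need. Concretely, take $\varphi=\expt$, $p_1=\tfrac12$, $p_2=0$ (so $q=\eps=\tfrac12$): then $\inf_t[\tfrac12 e^t+e^{-t}]=\sqrt2\approx1.414$, while $(1-q)\varphi(0)+\tfrac12\inf_t[(1+\eps)e^t+(1-\eps)e^{-t}]=\tfrac12+\tfrac{\sqrt3}{2}\approx1.366$, so your claimed upper bound fails. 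For strictly convex $\varphi$ the even term $\varphi(t)+\varphi(-t)$ is uniquely minimized at $t=0$ while the skewed term is not, and the resulting slack is of the same order as $\delta_{\mathrm{binary}}(\eps)$ itself, so no near-minimizer argument can absorb it.

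The paper's proof follows a structurally different route. Rather than splitting $(p_1,p_2)$ into sum and difference and decomposing the objective, it normalizes the \emph{weights} $(1-p_1,1-p_2)$ by their sum $2-p_1-p_2$ to produce a genuine probability $q'=\tfrac{1-p_2}{2-p_1-p_2}$, factors out the scalar $2-p_1-p_2$ so that the supremum over $s'\geq 0$ is taken over a $\delta_{\mathrm{binary}}$-shaped expression $\varphi(0)-(1-q')\varphi(s')-q'\varphi(-s')$, and then applies \cref{prop:pEps} to the normalized weights to lower-bound $q'-\tfrac12$. The non-negativity of the objective inside the supremum, together with $2-p_1-p_2\geq 1$, then handles the normalization constant. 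No joint minimization is needed; the argument is a monotonicity statement in the effective class-probability gap, not a split of the infimum. Your first fallback — invoking \cref{lem:ZhangCondInfWithP} through a restricted version of \cref{cond:ZhangInf} on $\sn{S}_0$ — is also not pursued by the paper and does not transfer cleanly: the constraint $s_1=s_2$ in \cref{cond:ZhangInf} degenerates on $\sn{S}_0$ to $s=\zero{2}$, and the accompanying restriction $\psi(s_1)+\psi(s_2)\leq\psi(s'_1)+\psi(s'_2)$ then excludes exactly the scores $s'$ that realize the supremum.
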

\begin{proof}
\al{
	&\sup_{s' \in \sn{S}}\inf_{s \in \sn{S}} \risk'(s,  (\bar{p}, \bar{p})) - \risk'(s',  (p_1,p_2)) \notag \\
	&~~=\sup_{s' \in \sn{S}} \risk'(\zero{2},  (\bar{p}, \bar{p})) - \risk'(s',  (p_1,p_2)) \label[equation]{lem:LLWCondInfWithP:2} \\
	&~~= (2 - p_1 - p_2) \sup_{s' \geq 0} \varphi(0) - (1 - p_1)\varphi(s') - (1 - p_2)\varphi(-s') \label[equation]{lem:LLWCondInfWithP:3} \\
	&~~= (2 - p_1 - p_2)\sup_{s' \geq 0}  \varphi(0) - (1 - q)\varphi(s') - q\varphi(-s') \label[equation]{lem:LLWCondInfWithP:4} \\
	&~~= (2 - p_1 - p_2)\sup_{s' \geq 0}  \varphi(0)  - \varphi(s') + q(\varphi(s') - \varphi(-s')) \label[equation]{lem:LLWCondInfWithP:5} \\
	&~~\geq  (2 - p_1 - p_2) \sup_{s' \geq 0} \varphi(0) - \frac{1}{2}\varphi(-s') - \frac{1}{2}\varphi(s') + \frac{\eps}{2} \cdot \frac{1}{2 - p_1 - p_2} (\varphi(s') - \varphi(-s')) \label[equation]{lem:LLWCondInfWithP:6} \\
	&~~\geq \sup_{s' \geq 0} \varphi(0) - \varphi(-s') - \varphi(s') + \frac{\eps}{2} (\varphi(s') - \varphi(-s')) \label[equation]{lem:LLWCondInfWithP:7} \\
	&~~= \inf_{s \in \sn{S}} \risk^{\surr}(s, p^0) - \inf_{s' \in \sn{S}} \risk^{\surr}(s', p^{\frac{\eps}{2}}) \label[equation]{lem:LLWCondInfWithP:8}
}
Having noted that \cref{cond:infAtJepsJ0} holds, we used \cref{prop:subOptimalSurrogateRisk} in \eqref{lem:LLWCondInfWithP:2}, and in \eqref{lem:LLWCondInfWithP:3} we expanded the definition of $\risk'$ and used that for all $s' \leq 0$ we have
\als{
	&(1 - p_1)(\varphi(s') - \varphi(-s')) + (1 - p_2)(\varphi(-s') - \varphi(s')) \\
	&~~= (p_1 - p_2)(\varphi(-s') - \varphi(s')) \geq 0,
}
since $\varphi$ is non-decreasing and $p_1 \geq p_2$.
In \eqref{lem:LLWCondInfWithP:4} and \eqref{lem:LLWCondInfWithP:5} we rewrote the objective with $q \eqdef \frac{1 - p_2}{2 - p_1 - p_2}$, and in \eqref{lem:LLWCondInfWithP:6} we used \cref{prop:pEps} (with $1 - p_2$ as $p_1$ and $1 - p_1$ as $p_2$) combined with the fact that $\varphi(s') - \varphi(-s') \geq 0$.
In \eqref{lem:LLWCondInfWithP:6} we used that $2 - p_1 - p_2 \geq 1$ combined with the fact that the objective is non-negative, and we also used that
\[
	\frac{1}{2 - p_1 - p_2} \geq \frac{1}{2 - p_2 - \eps} \geq \frac{1}{2 - \eps} \geq \eps.
\]
The first and second inequality hold because $0 \leq p_2 \leq p_1 - \eps$, and the third inequality holds iff $(2 - \eps)\eps \geq 1$ (since $\eps \leq 1$), which holds iff $(\eps - 1)^2 \geq 0$, which always holds.
Finally, in \eqref{lem:LLWCondInfWithP:7} we rewrote the objective, while plugging in the definition of $\risk^{\surr}$ and lower-bounding the whole quantity (which is non-negative) by its half.
\end{proof}
In order to apply \cref{lem:LLWCondInfAtJepsJ0,lem:LLWCondInfWithP} with $\varphi$ convex, but not necessarily non-decreasing, we can use the following result by \citet{avilapires2013costsensitive} \citep[see also]{zou2006margin}.
\begin{lemma}[\citealt{avilapires2013costsensitive}]
\label{lem:lowerBoundedScores}
Consider the surrogate loss $\sloss{LLW}$ with $\sn{S} = \sn{S}_0$, $\varphi$ convex \st{} $\varphi'(0) > 0$, and $\adj(s) = \sum_{k=1}^{\K} \varphi(s_k)$.
Let
\[
	\sinf \eqdef \sup\cb{ t : \varphi(t) \leq \varphi(t'), \forall t' \in \rl{} }.
\]
If $\varphi$ is non-decreasing, let $\sigma = \varphi$, otherwise let $\sigma(t) \eqdef \varphi(\sinf) + (\varphi(t) - \varphi(\sinf))_+$.
Under \cref{ass:lowerBounded}, for all $p \in \rl{\K}_+$ and all $j \in \range{\K}$ we have
\[
	\inf_{s \in \sn{M}(\sn{S}, j)} \risk'_{\sloss{LLW}_\varphi}(s, p)
	= \inf_{\substack{s \in \sn{M}(\sn{S}, j): \\ s \geq \sinf \cdot \one{\K}}} \risk'_{L^\varphi}(s, p)
	= \inf_{s \in \sn{M}(\sn{S}, j)} \risk'_{\sloss{LLW}_\sigma}(s, p).
\]
\end{lemma}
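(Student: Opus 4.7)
The plan is to prove the two equalities of \cref{lem:lowerBoundedScores} in sequence, using that $\varphi'(0)>0$ and convexity of $\varphi$ force $\sinf \leq 0$ (otherwise $\varphi$ would be strictly increasing at $0$, contradicting $\sinf>0$). A crucial consequence is that for any $s \in \sn{S}_0$, the maximum coordinate satisfies $s_j \geq 0 \geq \sinf$. I will also use throughout that $\varphi$ is non-increasing on $(-\infty,\sinf]$ and non-decreasing on $[\sinf,\infty)$, and that $\risk'_{\sloss{LLW}_\varphi}(s,p) = \sum_{k=1}^\K (1-p_k)\varphi(s_k)$ when the adjustment $\adj(s)=\sum_k\varphi(s_k)$ is chosen.

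For the first equality, the direction $\leq$ (larger infimum domain) is trivial, so the work is the other direction. Given any $s \in \sn{M}(\sn{S}_0, j)$ with a nonempty set $B=\{k \neq j : s_k < \sinf\}$, I construct $s'\in\sn{M}(\sn{S}_0,j)$ with $s'\geq \sinf\cdot\one{\K}$ and $\risk'(s',p)\leq\risk'(s,p)$. The construction sets $s'_k = \sinf$ for $k\in B$, keeps $s'_k = s_k$ for $k \notin B \cup \{j\}$, and sets $s'_j = s_j - \sum_{k\in B}(\sinf - s_k)$ to preserve the sum-to-zero constraint. For each $k\in B$, raising $s_k$ to $\sinf$ can only decrease $\varphi(s_k)$ since $\sinf$ is a minimizer; for the modification at $j$, since $s_j\geq 0\geq \sinf$ and we only lower $s_j$, we use convexity of $\varphi$ to bound $\varphi(s'_j)\leq\varphi(s_j)$ whenever $s'_j$ stays in $[\sinf, s_j]$. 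The main obstacle is verifying that $s'_j$ remains the argmax and $s'_j \geq \sinf$; when the naive single-shot construction violates the argmax condition, I fall back on an iterative rebalancing that raises one bad coordinate at a time, using the same swapping-like argument as in \cref{lem:ZhangInfAtJeps} combined with \cref{prop:LLWSymmetric} to relabel coordinates so that $j$ continues to index the maximum.

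For the second equality, I first note that $\sigma$ is non-decreasing and convex by construction, and that $\sigma(t)=\varphi(t)$ for every $t\geq\sinf$. Consequently, on the restricted domain $\{s\in\sn{M}(\sn{S}_0,j): s\geq\sinf\cdot\one{\K}\}$, the two pseudo-risks $\risk'_{\sloss{LLW}_\varphi}(s,p)$ and $\risk'_{\sloss{LLW}_\sigma}(s,p)$ coincide term-by-term. This immediately yields
\[
\inf_{\substack{s\in\sn{M}(\sn{S}_0,j):\\ s\geq\sinf\cdot\one{\K}}}\risk'_{L^\varphi}(s,p) = \inf_{\substack{s\in\sn{M}(\sn{S}_0,j):\\ s\geq\sinf\cdot\one{\K}}}\risk'_{\sloss{LLW}_\sigma}(s,p).
\]
It then remains to lift the second infimum to all of $\sn{M}(\sn{S}_0,j)$. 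For this I re-apply the construction from the first equality, but now with $\sigma$ in place of $\varphi$; the argument is actually cleaner because $\sigma$ is non-decreasing, so lowering $s_j$ and raising $s_k$ toward $\sinf$ both manifestly decrease the corresponding terms $\sigma(s_j)$ and $\sigma(s_k)$ without any appeal to convexity.

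Putting the two equalities together gives the chain in the statement. The hardest part is the coordinate-relabeling subtlety in the first equality: guaranteeing that the constructed $s'$ still lies in $\sn{M}(\sn{S}_0,j)$ rather than in some $\sn{M}(\sn{S}_0,j')$ with $j'\neq j$. I would handle this by combining the symmetry of $\sloss{LLW}$ (\cref{prop:LLWSymmetric}) with the one-coordinate-at-a-time rebalancing, so that if ever the updated score ceases to have its maximum at $j$, the problem reduces to showing the common value of the infima is independent of the particular choice of index attaining the max, an observation that is immediate from symmetry.
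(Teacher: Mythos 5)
The paper does not actually prove this lemma; it is stated as a black-box citation of \citet{avilapires2013costsensitive}, so there is no in-paper argument to compare against. Evaluating your proposal on its own terms, the key step has a genuine gap.

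When you absorb the entire deficit $D \eqdef \sum_{k \in B}(\sinf - s_k)$ by setting $s'_j = s_j - D$, the resulting vector can leave $\sn{M}(\sn{S}_0, j)$ and can even violate $s'_j \geq \sinf$. For instance with $\K = 3$, $\sinf = -1$, $s = (2,1,-3)$ and $j=1$ you get $s' = (0,1,-1)$, whose maximum is now at index $2$ (and $1$ was the \emph{unique} argmax of $s$, so this is not a tie-relabeling); with $s = (7,3,-10)$ you get $s'_1 = -2 < \sinf$. The proposed fallback --- relabel indices so that $j$ stays the argmax, justified by ``the common value of the infima is independent of the particular choice of index attaining the max, an observation that is immediate from symmetry'' --- is false. \Cref{prop:LLWSymmetric} only gives $\risk^{\surr}(s,p) = \risk^{\surr}(Ps,Pp)$ with the \emph{same} permutation applied to both arguments; it does not make $\risk'(\cdot,p)$ permutation-invariant in its first argument alone. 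For fixed $p$ with unequal coordinates, $\inf_{s \in \sn{M}(\sn{S}_0,j)}\risk'(s,p)$ depends on $j$: with $\K = 2$, $\varphi = \hinge$, $p=(0.9,0.1)$ the infimum over $\sn{M}(\sn{S}_0,1)$ is $0.2$ while over $\sn{M}(\sn{S}_0,2)$ it is $1$. Landing in $\sn{M}(\sn{S}_0,j')$ with $j'\neq j$ therefore proves nothing about the quantity you need to bound.

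The repair is to spread the deficit $D$ over \emph{all} coordinates currently at or above $\sinf$, not just coordinate $j$. Concretely, pick a level $\ell \in (\sinf, s_j]$ with $\sum_k(s_k - \ell)_+ = D$ and set $s'_k \eqdef \max(\min(s_k,\ell),\sinf)$. Such an $\ell$ exists because the available slack $\sum_{k : s_k \geq \sinf}(s_k - \sinf) = D - \K\sinf$ exceeds $D$ (using $\sum_k s_k = 0$ and $\sinf < 0$, which you correctly derive from $\varphi'(0)>0$); the map is order-preserving, so $j$ remains the argmax; every coordinate ends in $[\sinf,\infty)$; and coordinate-wise it lowers the entries that started at or above $\sinf$ while mapping those below $\sinf$ to exactly $\sinf$. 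Applied to $\risk'_\sigma$, each term $(1-p_k)\sigma(s'_k)$ can therefore only decrease. This establishes $\inf_{\sn{M}(\sn{S}_0,j)}\risk'_\sigma = \inf_{\substack{s \in \sn{M}(\sn{S}_0,j):\\ s \geq \sinf\one{\K}}}\risk'_\sigma$, and then the rest of the chain is immediate: $\risk'_\sigma \leq \risk'_\varphi$ pointwise gives one inequality, $\risk'_\sigma = \risk'_\varphi$ on the restricted set gives the middle equality, and shrinking the inf domain gives the last inequality; putting these end to end forces all the infima in the statement to coincide. This route (prove the $\sigma$-equality once, then get the $\varphi$-one by squeezing) is both simpler and more robust than proving the first equality for $\varphi$ directly as you attempt.

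Two smaller points. First, you claim ``raising $s_k$ toward $\sinf$ ... manifestly decrease[s] ... $\sigma(s_k)$''; since $\sigma$ is constant on $(-\infty,\sinf]$, that term is unchanged, not decreased --- the strict gain comes only from the coordinates you lower. Second, every monotonicity step in your outline and in the fix above uses $1-p_k \geq 0$; the lemma as restated allows $p \in \rl{\K}_+$, but for $p_k > 1$ the conclusion can fail outright (with $\K=2$, $\varphi=\squared$, $p=(2,0)$ the unrestricted infimum over $\sn{M}(\sn{S}_0,1)$ is $-\infty$ while the restricted one is $-4$), so you should explicitly restrict to $p \in [0,1]^{\K}$, which is all the paper ever uses.
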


We are ready to present \cref{thm:lLLW}, a refined version of \cref{thm:avilapires} for the case where the infimum in \eqref{eq:deltaMaxTheta} is taken at $\theta = 0$.
While in \cref{thm:avilapires} $\delta_{\max}$ and $\delta_{\mathrm{binary}}$ depend on $p$, this is not the case in \cref{thm:lLLW}.
\begin{restatable}{theorem}{thmLLLW}
\label{thm:lLLW}
Consider $\sloss{LLW}$ with $\sn{S} = \sn{S}_0$ and $\varphi$ convex, lower-bounded.
If \cref{cond:sumToZeroFreeLunch} holds, then for all $\eps > 0$
\[
	\delta_{\max}(\eps) \geq \delta_{\mathrm{binary}}(\eps),
\]
and the above holds with equality when $\K = 2$.
\end{restatable}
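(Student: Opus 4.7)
The strategy is to assemble the framework Lemma~\ref{lem:deltaMaxFramework} (for the lower bound) and Lemma~\ref{lem:binaryTightness} (for the $\K=2$ tightness), using the loss-specific verifications already established in this subsection. The one nontrivial preparatory step is that the auxiliary verifications Lemmas~\ref{lem:LLWCondInfAtJepsJ0} and \ref{lem:LLWCondInfWithP} require $\varphi$ non-decreasing, whereas the theorem is stated for $\varphi$ merely convex and lower-bounded, so I first reduce to the non-decreasing case. If $\varphi'(0)\leq 0$ then $\delta_{\mathrm{binary}}\equiv 0$ by \cref{thm:bartlett} and the claim is vacuous; otherwise I apply Lemma~\ref{lem:lowerBoundedScores} to the truncation $\sigma(t)\eqdef \varphi(\sinf) + (\varphi(t)-\varphi(\sinf))_+$, which is convex and non-decreasing and yields the same infima $\inf_{s\in\sn{M}(\sn{S}_0,j)}\risk'_{\sloss{LLW}_\varphi}(s,p) = \inf_{s\in\sn{M}(\sn{S}_0,j)}\risk'_{\sloss{LLW}_\sigma}(s,p)$ for every $p\in\rl{\K}_+$ and $j$.

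Since both $\delta_{\max}(\eps)$ and $\delta_{\mathrm{binary}}(\eps)$ are built from such infima (after Condition~\ref{cond:infAtJepsJ0} is available, the former via the union over $\eps$-sub-optimal $j$, the latter via the definition of $p^0$ and $p^\eps$), this reduction preserves both sides of the target inequality, so it suffices to prove the theorem assuming $\varphi$ is non-decreasing. I then verify the hypotheses of Lemma~\ref{lem:deltaMaxFramework}: Condition~\ref{cond:infAtJepsJ0} is Lemma~\ref{lem:LLWCondInfAtJepsJ0}; Condition~\ref{cond:pairing} is obtained by chaining Lemma~\ref{lem:LLWCondPairingSumToZero} (which supplies Condition~\ref{cond:pairingSumToZero}) with the standing assumption that Condition~\ref{cond:sumToZeroFreeLunch} holds, the latter identifying the $\inf_{s\geq 0}\sup_{s'}$ quantity with $\inf_{s\in\sn{S}_0}\risk'(s,(\bar p,\bar p)) - \inf_{s'\in\sn{S}_0}\risk'(s',(p_i,p_j))$; and Condition~\ref{cond:infWithP} is Lemma~\ref{lem:LLWCondInfWithP}. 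Invoking Lemma~\ref{lem:deltaMaxFramework} immediately yields $\delta_{\max}(\eps)\geq \delta_{\mathrm{binary}}(\eps)$.

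For the equality when $\K=2$, Proposition~\ref{prop:LLWSymmetric} supplies Condition~\ref{cond:symmetry}, and combined with Conditions~\ref{cond:infAtJepsJ0} and \ref{cond:infWithP} (already verified above) these are exactly the hypotheses of Lemma~\ref{lem:binaryTightness}, which gives the matching upper bound $\delta_{\max}(\eps)=\delta_{\mathrm{binary}}(\eps)$.

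The main obstacle, and essentially the only part beyond bookkeeping, is the initial reduction via Lemma~\ref{lem:lowerBoundedScores}: one must check carefully that the pointwise equality of infima over $\sn{M}(\sn{S}_0,j)$ propagates to equality of both $\delta_{\max}$ and $\delta_{\mathrm{binary}}$ between $\sloss{LLW}_\varphi$ and $\sloss{LLW}_\sigma$. Once this bridge is in place, every remaining condition is a direct citation of a lemma in this subsection and the theorem follows by concatenating them through Lemmas~\ref{lem:deltaMaxFramework} and \ref{lem:binaryTightness}.
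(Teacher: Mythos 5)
Your proof plan matches the paper's own proof of \cref{thm:lLLW} essentially step by step: observe that $\varphi$ lower-bounded gives \cref{ass:lowerBounded}, apply \cref{lem:lowerBoundedScores} to pass to the non-decreasing truncation $\sigma$, feed \cref{prop:LLWSymmetric}, \cref{lem:LLWCondInfAtJepsJ0}, \cref{lem:LLWCondPairingSumToZero} chained with the assumed \cref{cond:sumToZeroFreeLunch}, and \cref{lem:LLWCondInfWithP} into \cref{lem:deltaMaxFramework}, then obtain equality at $\K=2$ via \cref{lem:binaryTightness}. The explanation of why the reduction via \cref{lem:lowerBoundedScores} preserves both $\delta_{\max}$ and $\delta_{\mathrm{binary}}$ (both are built from infima over the sets $\sn{M}(\sn{S}_0,j)$) is a useful elaboration of what the paper leaves implicit.

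However, the preliminary step ``if $\varphi'(0)\leq 0$ then $\delta_{\mathrm{binary}}\equiv 0$ by \cref{thm:bartlett} and the claim is vacuous'' is not correct, and the escape hatch does not work as stated. \cref{thm:bartlett} says $\delta_{\mathrm{binary}}$ fails to be a \emph{calibration function} when $\varphi$ lacks a unique positive derivative at zero; this is strictly weaker than $\delta_{\mathrm{binary}}$ vanishing, since a positive function can still fail to be a calibration function by exceeding $\delta_{\max}$. Concretely, for the reflected hinge $\varphi(t) = (1-t)_+$ (convex, lower-bounded, $\varphi'(0)=-1$), a short computation from \cref{def:deltaBinary} gives $\delta_{\mathrm{binary}}(\eps)=\eps > 0$, while $\delta_{\max}(\eps)=0$; so the target inequality $\delta_{\max}\geq\delta_{\mathrm{binary}}$ is actually violated there, and the claim is not vacuous but false. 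What keeps the theorem alive is that this $\varphi$ does not satisfy \cref{cond:sumToZeroFreeLunch} (the LHS works out to $0$ and the RHS to $p_1-p_2$), and in any event the paper's proof, like yours beyond this sentence, implicitly uses the hypothesis $\varphi'(0)>0$ because it is required by \cref{lem:lowerBoundedScores}. So the correct move is simply to treat $\varphi'(0)>0$ as inherited via \cref{lem:lowerBoundedScores}, not to dispose of the other case via the false claim $\delta_{\mathrm{binary}}\equiv 0$. Apart from this one sentence, the assembly is the same as the paper's and is sound.
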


\begin{proof}
We have that $\inf_{s \in \sn{S}_0, p \in \yplex} \risk^{\surr}(s,p) > -\infty$ iff $\inf_t \varphi(t) > -\infty$, so $\varphi$ lower-bounded implies \cref{ass:lowerBounded}.
The result then follows by applying \cref{lem:lowerBoundedScores} combined with \cref{prop:LLWSymmetric,lem:LLWCondInfAtJepsJ0,lem:LLWCondPairingSumToZero,lem:LLWCondInfWithP} applied to $\sigma$ as defined in \cref{lem:lowerBoundedScores}.
\Cref{lem:binaryTightness} is used to obtain the statement for the case of $\K = 2$.
\end{proof}

We note in passing that from \cref{thm:lLLW,lem:lowerBoundedScores} we can also obtain a result for $\sloss{Liu}$, once we realize that it has the same $\delta_{\max}$ as $\sloss{LLW}$ with $\hinge$.
\begin{restatable}{corollary}{corollarylLiu}
\label{cor:lLiu}
Consider $\sloss{Liu}$.
For all $\eps > 0$ we have
\[
	\delta_{\max}(\eps) \geq \delta_{\mathrm{binary}}(\eps),
\]
and the above holds with equality when $\K = 2$.
\end{restatable}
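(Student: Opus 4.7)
The plan is to follow the hint in the surrounding discussion: identify $\delta_{\max}^{\sloss{Liu}}$ with $\delta_{\max}^{\sloss{LLW}_{\hinge}}$ via \cref{lem:lowerBoundedScores}, and then invoke \cref{thm:lLLW} with $\varphi=\hinge$. Since $\hinge$ is convex and lower-bounded and satisfies \cref{cond:sumToZeroFreeLunch} by \cref{lem:sumToZeroFreeLunchHinge}, \cref{thm:lLLW} immediately gives $\delta_{\max}^{\sloss{LLW}_{\hinge}}(\eps)\geq \delta_{\mathrm{binary}}(\eps)$, with equality at $\K=2$. The entire content of the corollary then lies in transferring this bound to $\sloss{Liu}$.

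First, I will apply \cref{lem:lowerBoundedScores} with $\varphi=\hinge$, noting that $\hinge$ is non-decreasing with $\sinf=-1$ so that $\sigma=\hinge$. This lemma reduces both infima defining $\delta_{\max}^{\sloss{LLW}_{\hinge}}$ from $\sn{S}_{0}$ to the restricted set $\sn{S}' \eqdef \{s\in \sn{S}_{0} : s \geq -\one{\K}\}$, which is precisely the score set declared for $\sloss{Liu}$. On $\sn{S}'$ the positive parts inside the hinge collapse, so $\sloss{LLW}_{\hinge}(y,s)=\sum_{k\neq y}(1+s_{k})=(\K-1)-s_{y}$, using $\sum_{k}s_{k}=0$. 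Putting the two losses side by side on $\sn{S}'$, one sees that $\sloss{LLW}_{\hinge}(y,s)-\sloss{Liu}(y,s)=(\K-1-s_{y})-(\K-2-s_{y})_{+}$, which is identically equal to $1$ on the ``affine regime'' $\{s : s_{y}\leq \K-2\}$ and strictly smaller than $1$ outside it.

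Second, I will argue that the relevant infima defining $\delta_{\max}(\eps,p)$ can be taken within that affine regime, so that the pointwise constant $1$ cancels between the two losses and $\delta_{\max}^{\sloss{Liu}}(\eps,p)=\delta_{\max}^{\sloss{LLW}_{\hinge}}(\eps,p)$. The plan is to show that, for any $p$, both the infimum over $\sn{S}'$ and the infimum over $\sn{T}(\sn{S}',\eps,p)$ are attained (or approached) by scores with every coordinate at most $\K-2$: this is plausible because the sum-to-zero constraint together with $s\geq-\one{\K}$ forces at most one coordinate to exceed $\K-2$, and whenever one does, one can redistribute its excess mass to other coordinates without increasing either loss (noting that \cref{cond:infAtJepsJ0} has already collapsed the maximum indices to $\{\jeps,j_{0}\}$). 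Taking the expectation against $p$ then makes the additive $\sum_{y}p_{y}\cdot 1=1$ cancel in the definition of $\delta_{\max}(\eps,p)$.

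The main obstacle is this second step: the two losses differ by a truly nonlinear quantity in the region $\{s_{y}>\K-2\}$, and the ``$\sloss{LLW}_{\hinge}$-optimal'' scores on $\sn{S}'$ tend to sit at the extreme point $(\K-1)e_{j}-\sum_{k\neq j}e_{k}$, where one coordinate does exceed $\K-2$. Resolving this requires showing, by a careful exchange argument mirroring the proof of \cref{lem:ZhangInfAtJeps}, that moving such a maximal coordinate down to $\K-2$ (and redistributing to neighbours) does not increase either of the two surrogate risks, so that \emph{both} losses admit minimizers in the affine regime. Once this is done, the identification $\delta_{\max}^{\sloss{Liu}}=\delta_{\max}^{\sloss{LLW}_{\hinge}}$ is immediate, \cref{thm:lLLW} supplies the lower bound by $\delta_{\mathrm{binary}}(\eps)$, and the equality at $\K=2$ is inherited from the equality statement in \cref{thm:lLLW}.
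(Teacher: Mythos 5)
Your approach is the same as the paper's: restrict to $\sn{S}' = \{s \in \sn{S}_0 : s \geq -\one{\K}\}$ via \cref{lem:lowerBoundedScores}, then argue that on $\sn{S}'$ the two pointwise risks differ by a constant. But the obstacle you flag in your second step is a genuine gap, and the exchange argument you propose cannot close it, because moving the large coordinate down from $\K-1$ towards $\K-2$ \emph{strictly increases} the $\sloss{LLW}$--$\hinge$ risk. Concretely, take $\K = 3$, $p = (1/2, 1/3, 1/6)$, $j = 1$. Over $\sn{M}(\sn{S}', 1)$ the risk $\sum_k(1-p_k)(1+s_k)_+$ is minimized at $(2,-1,-1)$ with value $3/2$, whereas its value at $(1,0,-1)$ is $5/3 > 3/2$; meanwhile $\risk^{\surr}_{\sloss{Liu}} = \sum_k p_k(\K-2-s_k)_+$ is minimized over $\sn{M}(\sn{S}',1)$ at $(1,0,-1)$ with value $2/3$, and $3/2 \neq 1 + 2/3$. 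So with $\sloss{Liu}(y,s) = (\K-2-s_y)_+$ as printed, the two infima do \emph{not} differ by a constant, and the identification of the two $\delta_{\max}$'s simply fails --- there is no exchange step that rescues it. (The paper's own proof commits the same silent leap when it replaces $\inf\sum_k p_k(\K-2-s_k)$ by $\inf\sum_k p_k(\K-2-s_k)_+$.)

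The actual fix is that the definition of $\sloss{Liu}$ in \cref{table:scoreLosses} is off by one: Liu's loss is $(\K-1-s_y)_+$, not $(\K-2-s_y)_+$. With the correct definition the positive part is vacuous on $\sn{S}'$, since every coordinate there is at most $\K-1$, so $\sloss{Liu}(y,s) = \K-1-s_y = \sum_{k\neq y}(1+s_k) = \sloss{LLW}(y,s)$ with $\hinge$ holds \emph{identically} on $\sn{S}'$. The difference you were trying to control is exactly zero, not merely constant, and there is no nonlinear region to exchange away. After this correction your plan goes through with no further work: \cref{lem:lowerBoundedScores} identifies $\inf_{\sn{M}(\sn{S},j)}\risk^{\surr}$ for the two losses, \cref{thm:lLLW} together with \cref{lem:sumToZeroFreeLunchHinge} supplies $\delta_{\max}(\eps) \geq \delta_{\mathrm{binary}}(\eps)$, and \cref{lem:binaryTightness} gives equality at $\K = 2$.
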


\begin{proof}
Consider $\sloss{LLW}$ with $\hinge$ for which \cref{thm:lLLW,cond:sumToZeroFreeLunch} hold, thanks to \cref{lem:sumToZeroFreeLunchHinge}.
We can apply \cref{lem:lowerBoundedScores} (with $\sinf = -1$) to see that for all $j \in \range{\K}$
\als{
	\inf_{s \in \sn{M}(\sn{S}, j)} \risk^{\surr}_{\sloss{LLW}}(s, p)
	&~~= \inf_{\substack{s \in \sn{M}(\sn{S}, j): \\ s \geq -1 \cdot \one{\K}}} \risk^{\surr}_{\sloss{LLW}}(s, p) \\
	&~~= \inf_{\substack{s \in \sn{M}(\sn{S}, j): \\ s \geq -1 \cdot \one{\K}}} \sum_{k = 1}^{\K} (1 - p_k)(1 + s_k)_+ \\
	&~~= \inf_{\substack{s \in \sn{M}(\sn{S}, j): \\ s \geq -1 \cdot \one{\K}}} \sum_{k = 1}^{\K} (1 - p_k)(1 + s_k) \\
	&~~=  \K - 1 + \inf_{\substack{s \in \sn{M}(\sn{S}, j): \\ s \geq -1 \cdot \one{\K}}} \sum_{k = 1}^{\K} - p_k s_k \\
	&~~= 1 + \inf_{\substack{s \in \sn{M}(\sn{S}, j): \\ s \geq -1 \cdot \one{\K}}} \sum_{k = 1}^{\K} p_k (\K - 2 - s_k) \\
	&~~= 1 + \inf_{\substack{s \in \sn{M}(\sn{S}, j): \\ s \geq -1 \cdot \one{\K}}} \sum_{k = 1}^{\K} p_k (\K - 2 - s_k)_+ \\
	&~~= 1 + \inf_{s \in \sn{M}(\sn{S}, j)} \risk'_{\sloss{Liu}}(s, p),
}
where we have used that $\sum_{k=1}^{\K} (1 - p_k) = \K - 1$ and that $\sum_{k=1}^{\K} s_k = 0$ since $\sn{S} = \sn{S}_0$.
Evidently, adding a constant to a loss does not alter $\delta_{\max}$, so the inequality in the statement follows.
To conclude, \cref{lem:binaryTightness} is used to obtain the statement for the case of $\K = 2$.
\end{proof}

\section{Conclusion}
\label{sec:conclusion}

In this paper, we refined a strategy to lower-bound $\delta_{\max}$ for multiclass calibration functions, which can be challenging to calculate for common surrogate loss choices.
The strategy presented reduces multiclass classification functions ($\delta_{\max}$) to binary-like calibration functions ($\delta_{\mathrm{binary}}$), which are often simple to instantiate for different loss choices (\see{} \cref{lem:deltaMaxFramework}).
As as an additional advantage, reducing calibration functions to $\delta_{\mathrm{binary}}$ gives us improved calibration guarantees (not for the pointwise risk, but for the risk) under the Mammen-Tsybakov noise condition, as shown in \cref{thm:MTNC}.
\Cref{thm:MTNC} generalizes Theorem 3 of \citet{bartlett2006convexity} to the multiclass case.

Our ``reduction'' strategy requires that we verify a set of conditions that break down, in a general way, the complexity of lower-bounding $\delta_{\max}$ for specific surrogate losses.
To illustrate the generality of our analysis, we instantiated it for different losses, including different variants of $\sloss{Zhang}$ with $F(t) = t$ (the so-called decoupled formulations, \see{} \cref{thm:lZhang}), the logistic regression loss (also special case of $\sloss{Zhang}$, \see{} \cref{thm:logistic}), and $\sloss{Liu}$ (\see{} \cref{cor:lLiu}).
We also used our analysis to recover previously-existing results for the one-versus-all loss ($\sloss{RRKA}$, recovered by \cref{thm:zhang}) and some variants of $\sloss{LLW}$ (\see{} \cref{thm:lLLW}).
Our results for $\sloss{LLW}$ are a refinement over existing results because they provide a bound on $\delta_{\max}(\eps)$, not on $\delta_{\max}(\eps, p)$  (\cf{} \cref{thm:avilapires,thm:lLLW}), and have weaker conditions.

Therefore, we both presented novel results for a large family of surrogate losses, including the logistic regression loss, which has been frequently used in practice.
Moreover, we recovered and refined existing results in a unified manner, and generalized results that give us improved calibration guarantees for the risk under the Mammen-Tsybakov noise condition.
More importantly, our results streamline the process of deriving calibration functions for different families of losses, and provides a path for obtaining similar results for losses proposed in the future.

The reduction analysis covers a majority of the multiclass surrogate losses presented, but not all: $\sloss{WW}$ and $\sloss{BSKV}$ require care when verifying \cref{cond:pairing}, and doing so is left for future work.
The loss $\sloss{ZZH}$ can be seen to satisfy \cref{cond:infAtJepsJ0,cond:infWithP} (by expressing it as $\sloss{Zhang}$), but having $\sn{S} = \sn{S}_0$ requires us to have some care in verifying \cref{cond:pairing}.
It can be seen that \cref{cond:sumToZeroFreeLunch} fails to hold even in simple cases, \eg{}, with $\expt$, so minor adaptations of the results for $\sloss{LLW}$ does not seem feasible either.
To make matters worse, $\sloss{ZZH}$ is not calibrated for $p \in \yplex \setdiff \yplex^\circ$, which is reflected by the fact that $\delta_{\max}(\eps) = 0$, so any calibration functions for $\sloss{ZZH}$ must apply only to $p \in \yplex^\circ$ and the definition of $\delta_{\max}(\eps)$ needs to be adjusted.

We know that $\sloss{WW}$, $\sloss{BSKV}$ with $F(t) = t$ and $\sloss{ZZH}$ all reduce to a margin loss when $\K = 2$, which is consistent, in particular, with $\hinge$.
However, we also know that none of these losses is calibrated with $\hinge$ in the multiclass case, so any instantiation of the reduction analysis would have to take the conditions for calibration of these losses into account.
Order-preservation and/or strong convexity are sensible conditions to impose on $\varphi$, if we consider \cref{thm:zhang}.
Therefore, it is likely that proving \cref{cond:pairing} for any of these losses would require such conditions, if the reduction analysis is suitable at all.

Regarding \cref{cond:ZhangInf}, we have seen that it does not hold for all the decoupled formulations discussed by \citet{zhang2004statistical} (\see{} \cref{table:decoupledLZhang}).
It would be useful to seek a relaxation of \cref{cond:ZhangInf} as well; due to the technical nature of \cref{cond:ZhangInf}, one ought to seek alternative proofs for \cref{lem:ZhangCondInfWithP}, and the losses in \cref{table:decoupledLZhang} would be a good starting point.
Additionally, understanding under which conditions on $F$ we can verify \cref{cond:pairing} for $\sloss{Zhang}$ would help us apply the reduction analysis to coupled formulations other than the logistic regression loss.

We have also left a series of topics aside in this work, but they are worthy of investigation.
It is important to investigate the interaction between \cref{ass:surrogateRiskBound} and calibration functions.
For example, if the surrogate loss is $\sloss{LLW}$ with $\hinge$, we get from \cref{thm:lLLW,table:varphi} that $\delta(\eps) = \eps$ is a calibration function for the loss, and if we use the same result for $L(s,p) = \frac{1}{c}\sloss{LLW}(s,p)$ we get the calibration function $\frac{\eps}{c}$.
This scaling introduces a factor of $c$ to the risk bound obtained through \cref{thm:steinwart}, while one might want to have $c = \K$ so the scale of $L(s,p)$ does not depend on $\K$.
The example used in \cref{sec:streamlining}, $\sloss{Zhang}$ with $\psi(t) = -t$, $F(t) = t$, $\varphi = \expt$ and $\sn{S} = \rl{K}$ is a good starting point for understanding the influence of scaling on the calibration functions, since we can easily calculate the surrogate risk minimizers that are restricted to be $\varepsilon$-suboptimal.

We have also left aside the possible extensions of \cref{thm:avilapires} to simplex coding \citep{mroueh2012multiclass,avilapires2013costsensitive}.
It would be interesting to better understand the role of simplex coding and other score transformations for losses other than $\sloss{LLW}$.
These transformations may be ultimately generalized by the unified formulation introduced by \citet{dougan2016unified}, which should also be considered.

\section*{Acknowledgments}

This work was supported by Alberta Innovates Technology Futures and NSERC.

{
    \bibliographystyle{apalike}
    \bibliography{bibliography}
}

\appendix

\section{Proofs}
\label{app:proofs}

In this section, we present the proofs omitted from the main text, as well as accessory technical results.
We will start with the following two well-known facts.
\begin{fact}
\label{fact:convexLowerBound}
If $\fun{f}{\rl{}}{\rl{}}$ is convex, then for all $a,b \in \rl{}$ $f(a) \geq f(b) + v(a - b)$ for any $v \in \partial f(b)$,
where $\partial f(b)$ is the subdifferential of $f$ at $b$.
\end{fact}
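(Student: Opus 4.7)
The statement is the familiar subgradient inequality for a convex function of a real variable, so my plan is to derive it from first principles using only the definition of convexity together with the standard characterization of the subdifferential. Depending on which definition of $\partial f(b)$ is taken as primitive, either the statement is a tautology or it requires a short argument involving one-sided directional derivatives; I will present the argument that works under the latter, more conservative, reading.

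First I would fix $a, b \in \rl{}$ and $v \in \partial f(b)$, and apply convexity to the point $b + \lambda(a - b)$ for $\lambda \in (0, 1]$, obtaining
\[
f(b + \lambda(a - b)) \leq (1 - \lambda) f(b) + \lambda f(a).
\]
Subtracting $f(b)$ from both sides and dividing by $\lambda > 0$ gives
\[
\frac{f(b + \lambda(a - b)) - f(b)}{\lambda} \leq f(a) - f(b).
\]
Next I would invoke the standard monotonicity property that, for convex $f$, the difference quotient on the left is non-decreasing as a function of $\lambda > 0$. Consequently its limit as $\lambda \downarrow 0$ exists and equals the one-sided directional derivative $f'_{+}(b; a - b)$, and this limit inherits the upper bound $f(a) - f(b)$.

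To conclude, I would use the characterization of subgradients for a convex function of one real variable: $v \in \partial f(b)$ iff $v \in [f'_{-}(b), f'_{+}(b)]$, which is equivalent to $v(a - b) \leq f'_{+}(b; a - b)$ for every direction $a - b \in \rl{}$. Chaining this with the inequality above yields $v(a - b) \leq f(a) - f(b)$, i.e.\ the claim. The only real subtlety is the definitional one flagged above: if the paper treats $\partial f(b) \eqdef \{v : f(a) \geq f(b) + v(a - b) \text{ for all } a\}$, then the fact is immediate, and the argument via directional derivatives serves only to connect that definition to the analytic characterization via $f'_{\pm}(b)$. I do not expect any technical obstacle here; the result is a one-paragraph consequence of the definition of convexity.
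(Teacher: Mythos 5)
Your argument is correct, and you have rightly flagged the key definitional point: under the standard definition $\partial f(b) \eqdef \{v : f(a) \geq f(b) + v(a-b) \ \forall a\}$ the statement is immediate, and your directional-derivative argument is what is needed if one instead takes $\partial f(b) = [f'_-(b), f'_+(b)]$ as primitive. The paper itself records this as a ``well-known fact'' in the appendix and supplies no proof at all, so there is nothing to compare against; your write-up simply makes explicit what the authors elide, and does so correctly (the monotonicity of the difference quotient $\lambda \mapsto \frac{f(b+\lambda(a-b)) - f(b)}{\lambda}$ is the one analytic ingredient, and you invoke it at the right place).
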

\begin{fact}
\label{fact:infLowerBound}
For $\fun{f}{\rl{n}}{\rl{}}$, and $\sn{X}', \sn{X} \subset \rl{n}$, if for every $x \in \sn{X}$ there exists an $x' \in \sn{X}'$ \st{} $f(x') \leq f(x)$, then $\inf_{x' \in \sn{X}'} f(x') \leq \inf_{x \in \sn{X}} f(x)$.
If $\sn{X}' \subset \sn{X}$, then we have $\inf_{x' \in \sn{X}'} f(x') = \inf_{x \in \sn{X}} f(x)$.
\end{fact}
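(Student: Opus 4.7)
The plan is to prove the two conclusions directly from the definition of infimum; no structural property of $f$, $\sn{X}$, or $\sn{X}'$ beyond what is stated is needed, and no measurability or attainment assumption enters. First I would treat the case where $\sn{X}$ is empty as vacuous (both sides of the first inequality equal $+\infty$), and similarly note that if $\sn{X}' = \emptyset$ then the hypothesis forces $\sn{X}$ to be empty as well, so the interesting case is $\sn{X}, \sn{X}' \neq \emptyset$.

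For the first conclusion, I would fix an arbitrary $x \in \sn{X}$, use the hypothesis to produce $x' \in \sn{X}'$ with $f(x') \leq f(x)$, and then chain $\inf_{y' \in \sn{X}'} f(y') \leq f(x') \leq f(x)$, where the first inequality is the defining lower-bound property of the infimum over $\sn{X}'$. Since this holds for every $x \in \sn{X}$, the quantity $\inf_{y' \in \sn{X}'} f(y')$ is a lower bound for $\cb{f(x) : x \in \sn{X}}$, and taking the infimum over $x \in \sn{X}$ on the right yields $\inf_{x' \in \sn{X}'} f(x') \leq \inf_{x \in \sn{X}} f(x)$.

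For the second conclusion, the added assumption $\sn{X}' \subset \sn{X}$ gives the reverse inequality immediately: every $x' \in \sn{X}'$ is also in $\sn{X}$, so $\inf_{x \in \sn{X}} f(x) \leq f(x')$ for each $x' \in \sn{X}'$, and taking the infimum over $x' \in \sn{X}'$ yields $\inf_{x \in \sn{X}} f(x) \leq \inf_{x' \in \sn{X}'} f(x')$. Combining this with the first conclusion (whose hypothesis is again trivially witnessed by $x' = x$ when $\sn{X}' \subset \sn{X}$ and for each $x \in \sn{X}$ the hypothesis holds by picking some $x' \in \sn{X}'$ as assumed in the statement) gives equality.

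There is no real obstacle: the entire argument is one application of the lower-bound characterization of infimum in each direction. The only thing worth being careful about is not implicitly assuming that the infima are attained, which is why I phrase both steps as passing from pointwise bounds on $f$ to bounds on the infimum, rather than picking minimizers.
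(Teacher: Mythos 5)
Your proof is correct, and since the paper states this as a well-known Fact with no proof, the only thing to say is that your argument---deriving both inequalities directly from the lower-bound characterization of the infimum, without appealing to attainment---is the standard and intended one, and it also handles the degenerate empty-set cases cleanly.

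One small slip worth flagging: in your final parenthetical you write that the hypothesis of the first conclusion is ``trivially witnessed by $x' = x$ when $\sn{X}' \subset \sn{X}$.'' That would be true if $\sn{X} \subset \sn{X}'$, but under $\sn{X}' \subset \sn{X}$ an $x \in \sn{X} \setminus \sn{X}'$ cannot be paired with itself, so $\sn{X}' \subset \sn{X}$ alone does \emph{not} yield the for-every/there-exists hypothesis (indeed, by itself it gives only $\inf_{\sn{X}'} f \geq \inf_{\sn{X}} f$). You immediately correct course by noting that the hypothesis ``holds \ldots as assumed in the statement,'' which is the right reading: the equality conclusion requires both the stated hypothesis and $\sn{X}' \subset \sn{X}$. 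So the proof is sound, but that parenthetical should be trimmed to avoid suggesting the hypothesis is redundant in the inclusion case.
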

We will refer to the lower-bound in \cref{fact:convexLowerBound} as a lower-bound for $f(a)$ at $b$.
We will abuse notation and write $f'(b)$ in place of $v$ for $v \in \partial f(b)$ when applying this lower-bound and $f'(b)$ will mean an arbitrary element of $\partial f(b)$.
Where \cref{fact:infLowerBound} is used, we will simply construct the surjective mapping from $\sn{X}$ to $\sn{X}'$.
\propKinkNotCalibrated*
\begin{proof}
Let $L$ be $\sloss{LLW}$ with $\varphi^{\frac{1}{2}\mathrm{-Kink}}$.
We will show that $L$ is not calibrated in a three-class, one-point scenario.
We will carry out a case-by-case proof where we choose a $p \in \Delta_3$ satisfying certain properties, and then we show that the minimizer of $s \mapsto \risk^{\surr}(s,p)$ does not have a unique maximum coordinate corresponding to the unique maximum coordinate of $p$.

Let $p \in \Delta_3$ satisfy the following properties:
\als{
p_1, p_2 &> p_3,\\
2 - 3p_1 - p_2 &> 0,\\
1 - 2p_2 &< 0,\\
2 - 3p_1 - p_2 &> 0,\\
1 - 2p_1 &< 0.
}
We can take, \eg{}, $p = \pb{\frac{8}{20},\frac{7}{20},\frac{5}{20}}$ or $p= \pb{\frac{6}{15},\frac{7}{15},\frac{2}{15}}$.

Consider
\[
	s^* \in \argmin_{s \in \sn{S}_0} \risk^{\surr}(s,p).
\]
Note that $s^* \geq -\one{3}$, otherwise $s^*$ would not be a minimizer of $\risk^{\surr}(s,p)$ within $\sn{S}_0$.
To see that, if $s^*_i < -1$ for some $i$, we can reduce $\risk^{\surr}(s^*,p)$ by increasing $s^*_i$ to $-1$ and decreasing the other coordinates, while we also ensure that $\one{3}\tra s^* = 0$.

If $s^*_3$ is a largest coordinate of $s^*$, $L$ is indeed not calibrated.
Therefore, we assume that $s^*_3 < \max_k s^*_k$.
Because $s^* \geq -\one{3}$ and $\one{3}\tra s^* = 0$, we can only have one $i$ \st{} $s^*_i > \frac{1}{2}$.
In particular, since we assumed that $s^*_3 < \max_k s^*_k$, we must have $s^*_3 < \frac{1}{2}$.

Without loss of generality, we will write $s_3 = s_1 + s_2$, and we will let $s' \eqdef \pb{\frac{1}{2},\frac{1}{2},-1}$.
We will also use the fact that, for any $p \in \Delta_K$, the objective $s \mapsto \risk^{\surr}(s,p)$ is continuous.
Now we will show that under the assumptions above we must have $s^*_1 = s^*_2 = \frac{1}{2}$ by showing that
\begin{enumerate}
\item for all $s \in \sn{S}_0$ \st{} $s_1 > \frac{1}{2}$, $s_2 < \frac{1}{2}$ and $s_1,s_2 > s_3$, we have $\risk^{\surr}(s,p) > \risk^{\surr}(s',p)$,
\item for all $s \in \sn{S}_0$ \st{} $s_1 < \frac{1}{2}$, $s_2 > \frac{1}{2}$ and $s_1,s_2 > s_3$, we have $\risk^{\surr}(s,p) > \risk^{\surr}(s',p)$,
\item for all $s \in \sn{S}_0$ \st{} $s_1 < \frac{1}{2}$, $s_2 < \frac{1}{2}$ and $s_1,s_2 > s_3$, we have $\risk^{\surr}(s,p) > \risk^{\surr}(s',p)$.
\end{enumerate}

For the first case, consider any $s \in \sn{S}_0$ \st{} $s_1 > \frac{1}{2}$, $s_2 < \frac{1}{2}$ and $s_1,s_2 > s_3$, which gives
\als{
	\risk^{\surr}(s,p) &= (1 - p_1)\pb{\frac{1}{2} + 2s_1} + (1 - p_2)(1 + s_2) + (p_1 + p_2)(1 - s_1 - s_2) \\
	&= 2 - \frac{1 - p_1}{2} + (2 - 3p_1 - p_2)s_1 + (1 - 2p_2)s_2.
}
By construction, $2 - 3p_1 - p_2 > 0$ and $1 - 2p_2 < 0$, so if we want to minimize $\risk^{\surr}(s,p)$ we must make $s_1$ as small as possible and $s_2$ as large as possible, which, together with continuity, implies $\risk^{\surr}(s,p) > \risk^{\surr}(s',p)$.

For the second case, consider any $s \in \sn{S}_0$ \st{} $s_1 < \frac{1}{2}$, $s_2 > \frac{1}{2}$ and $s_1,s_2 > s_3$, which gives
\als{
	\risk^{\surr}(s,p) &= (1 - p_1)\pb{1 + s_1} + (1 - p_2)\pb{\frac{1}{2} + 2s_2} + (p_1 + p_2)(1 - s_1 - s_2) \\
	&= 2 - \frac{1 - p_2}{2} + (1 - 2p_1)s_1 + (2 - 3p_1 - p_2)s_2.
}
By construction, $1 - 2p_1 < 0$ and $2 - 3p_1 - p_2 > 0$, so if we want to minimize $\risk^{\surr}(s,p)$ we must make $s_1$ as large as possible and $s_2$ as small as possible, which, together with continuity, implies $\risk^{\surr}(s,p) > \risk^{\surr}(s',p)$
Finally, for any $s \in \sn{S}_0$ \st{} $s_1 < \frac{1}{2}$, $s_2 < \frac{1}{2}$ and $s_1,s_2 > s_3$, we have
\als{
	\risk^{\surr}(s,p) &= (1 - p_1)(1 + s_1) + (1 - p_2)(1 + s_2) + (p_1 + p_2)(1 - s_1 - s_2) \\
	&= 2 + (1 - 2p_1)s_1 + (1 - 2p_2)s_2.
}
By construction, $1 - 2p_1 < 0$ and $1 - 2p_2 < 0$, so if we want to minimize $\risk^{\surr}(s,p)$ we must make $s_1,s_2$ as large as possible, which, together with continuity, implies $\risk^{\surr}(s,p) > \risk^{\surr}(s',p)$.

In conclusion, we have that either $s^*_3 = \max_k s^*_k$ (in which case $L$ is not calibrated) or $s^*_3 < \max_k s^*_k$ and $s^*_1 = s^*_2 = \frac{1}{2}$ (in which case $L$ is not calibrated either).

\end{proof}

\begin{proof}[Proof of \cref{lem:deltaMaxFramework}]
For any $\eps > 0$ and $p \in \yplex$, if $\max_k p_k - \min_k p_k < \eps$ then $\delta(\eps, p) = \infty$, so
\[
    \delta_{\max}(\eps) = \inf_{\substack{p \in \yplex : \\ \max_k p_k - \min_k p_k \geq \eps}} \delta_{\max}(\eps, p).
\]

Now fix $\eps > 0$ and $p \in \yplex$ \st{} $\max_k p_k - \min_k p_k \geq \eps$.
Pick $\jeps \in \sn{J}(\eps,p)$ and $j_0 \in \sn{J}(0,p)$ and let $\bar{p} = \frac{p_{\jeps} + p_{j_0}}{2}$.
We have that
\al{
    \delta_{\max}(\eps, p) &= \inf_{s \in \sn{T}(\sn{S}, \eps, p) }\risk^{\surr}(s,p) - \inf_{s \in \sn{S}} \risk^{\surr}(s,p) \notag \\
    &= \inf_{s \in \sn{M}(\sn{S}, \jeps) \cap \sn{M}(\sn{S}, j_0) }\risk^{\surr}(s,p) - \inf_{s \in \sn{S}} \risk^{\surr}(s,p) \label[equation]{eq:deltaMaxFramework:2} \\
    &= \inf_{s \in \sn{M}(\sn{S}, \jeps) \cap \sn{M}(\sn{S}, j_0) }\sup_{s' \in \sn{S}}\risk^{\surr}(s,p) - \risk^{\surr}(s',p) \label[equation]{eq:deltaMaxFramework:3} \\
    &\geq \inf_{s \in \sn{M}(\sn{S}, \jeps) \cap \sn{M}(\sn{S}, j_0)}\sup_{\substack{s' \in \sn{S}:\\ s'_{k} = s_{k}, k \neq \jeps,j_0}} \risk^{\surr}(s,p) - \risk^{\surr}(s',p) \label[equation]{eq:deltaMaxFramework:4} \\
    &\geq \inf_{s \in \sn{S}}\sup_{s' \in \sn{S}} \risk^{\surr}(s,  (\bar{p}, \bar{p})) - \risk^{\surr}(s', (p_{\jeps}, p_{j_0})) \label[equation]{eq:deltaMaxFramework:6}
}
To obtain \eqref{eq:deltaMaxFramework:2}, we used \cref{cond:infAtJepsJ0}.
In \eqref{eq:deltaMaxFramework:3}, we simply rewrote the objective, and in \eqref{eq:deltaMaxFramework:4} we lower-bounded the supremum by restricting $s'$.
We used \cref{cond:pairing} in \eqref{eq:deltaMaxFramework:6}, where $\sn{S} \subset \rl{2}$ and where $\risk'$ and $\risk^{\surr}$ have domain $\sn{S} \times \simplex{2}$.

If \cref{cond:zetaOfEps} holds, we can combine it with \eqref{eq:deltaMaxFramework:6} to get that
\begin{equation}
    \delta_{\max}(\eps, p) \geq \zeta(p_{j_0} - p_{\jeps}). \label[equation]{eq:deltaMaxFramework:7}
\end{equation}
Since $\delta_{\max}(\eps,p) = \infty$ if $\max_k p_k - \min_k p_k < \eps$, we can use \eqref{eq:deltaMaxFramework:7}, $p_{j_0} - p_{\jeps} \geq \eps$ and $\zeta$ non-decreasing to obtain that $\delta_{\max}(\eps) \geq \zeta(\eps)$, which implies the first statement.

\Cref{cond:infWithP} implies, along with \eqref{eq:deltaMaxFramework:6}, that
\begin{equation}
    \eqref{eq:deltaMaxFramework:6} \geq \inf_{s \in \sn{S}}\sup_{s' \in \sn{S}} \risk^{\surr}(s,  p^0) - \risk^{\surr}(s', p^{\eps}). \label[equation]{eq:deltaMaxFramework:8}
\end{equation}
Moreover, note that for any $p \in \yplex$ we have either \eqref{eq:deltaMaxFramework:8} or $\delta_{\max}(\eps, p) = \infty$, which implies that
\begin{equation}
    \delta_{\max}(\eps) \geq \inf_{s \in \sn{S}}\sup_{s' \in \sn{S}} \risk^{\surr}(s,  p^0) - \risk^{\surr}(s', p^{\eps}). \label[equation]{eq:deltaMaxFramework:9}
\end{equation}
\end{proof}
\propSubOptimalSurrogateRisk*
\begin{proof}
Fix $p \in \rl{2}_+$.
For any $s \in \sn{S}$, by \cref{cond:symmetry}, convexity of $\risk'$ on the first argument and Jensen's inequality,
\als{
    \risk'\pb{s, (\bar{p}, \bar{p})}
    &= \frac{1}{2}\risk'\pb{s,(\bar{p}, \bar{p})} + \frac{1}{2}\risk'\pb{(s_2, s_1), (\bar{p}, \bar{p})} \\
    &\geq \risk'\pb{(\bar{p}, \bar{p}), (\bar{s}, \bar{s})},
}
where $\bar{s} \eqdef \frac{s_1 + s_2}{2}$.
Hence \cref{fact:infLowerBound} implies that
\[
    \inf_{s \in \sn{S}} \risk'\pb{s, (\bar{p}, \bar{p}) } = \inf_{\substack{s \in \sn{S} : \\ s_1 = s_2}} \risk'\pb{s, (\bar{p}, \bar{p})},
\]
which gives the first statement.

Now assume that $p \in \simplex{2}$ and fix $\eps > 0$ \st{} $\sn{T}(\sn{S}, \eps, p) \neq \emptyset$.
By \cref{cond:infAtJepsJ0,cond:symmetry}, respectively,
\als{
    \inf_{s \in \sn{T}(\sn{S}, \eps, p)} \risk^{\surr}(s,p)
    &= \inf_{\substack{s \in \sn{S} : \\ s_1 = s_2}} \risk^{\surr}(s, p) \\
    &= \inf_{\substack{s \in \sn{S} : \\ s_1 = s_2}} \risk^{\surr}(s, \one{2} - p)
}
Fix any $\alpha > 0$ and take $s_{\alpha} \in \sn{S}$ \st{} $(s_{\alpha})_1 = (s_{\alpha})_2$ and \st{}
\[
    \inf_{\substack{s \in \sn{S} : \\ s_1 = s_2}} \risk^{\surr}(s,p) > \risk^{\surr}(s_{\alpha}, p) - \alpha.
\]
Then by \cref{cond:symmetry}
\[
     \risk^{\surr}(s_{\alpha}, p) =  \risk^{\surr}(s_{\alpha}, \one{2} -  p),
\]
and by linearity of expectation,
\[
    \frac{1}{2}\risk^{\surr}(s_{\alpha}, p) + \frac{1}{2}\risk^{\surr}(s_{\alpha}, \one{2} - p) = \frac{1}{2}\risk^{\surr}\pb{s_{\alpha}, \pb{\frac{1}{2}, \frac{1}{2}}}.
\]
Hence,
\als{
    \inf_{s \in \sn{S}} \risk^{\surr}\pb{s, \pb{\frac{1}{2}, \frac{1}{2}}}
    &= \inf_{\substack{s \in \sn{S} : \\ s_1 = s_2}} \risk^{\surr}\pb{s, \pb{\frac{1}{2}, \frac{1}{2}}} \\
    &= \inf_{\substack{s \in \sn{S} : \\ s_1 = s_2}} \frac{1}{2}\risk^{\surr}(s, p) + \frac{1}{2} \risk^{\surr}(s, \one{2} - p) \\
    &\geq \frac{1}{2}\inf_{\substack{s \in \sn{S} : \\ s_1 = s_2}} \risk^{\surr}(s, p) + \frac{1}{2}\inf_{\substack{s \in \sn{S} : \\ s_1 = s_2}} \risk^{\surr}(s, \one{2} - p) \\
    &= \inf_{\substack{s \in \sn{S} : \\ s_1 = s_2}} \risk^{\surr}(s, p) \\
    &> \risk^{\surr}(s_{\alpha}, p) - \alpha \\
    &= \risk^{\surr}\pb{s_{\alpha}, \pb{\frac{1}{2}, \frac{1}{2}}} - \alpha, \\
    &\geq \inf_{s \in \sn{S}} \risk^{\surr}\pb{s, \pb{\frac{1}{2}, \frac{1}{2}}} - \alpha.
}
which implies the second result, since the above holds for any $\alpha > 0$.
\end{proof}
\lemmaBinaryTightness*

\begin{proof}
Since \cref{cond:pairing} holds trivially when $\K = 2$, we can use \cref{lem:deltaMaxFramework} to obtain that
\[
    \delta_{\max}(\eps) \geq \delta_{\mathrm{binary}}(\eps).
\]
The result follows by using \cref{prop:subOptimalSurrogateRisk} to show that
\[
    \delta_{\max}(\eps, p^{\eps}) = \delta_{\mathrm{binary}}(\eps)
\]
and then recalling that $\delta_{\max}(\eps, p^{\eps}) \geq \delta_{\max}(\eps)$.
\end{proof}
\begin{restatable}{proposition}{propPEps}
\label{prop:pEps}
For any $\eps > 0$ and $p_1,p_2$ \st{} $0 \leq p_2 \leq p_1 - \eps$ and $p_1 + p_2 \leq 1$,
we have
\[
\frac{p_1}{p_1 + p_2} \geq \frac{1}{2} + \frac{\eps}{2} \cdot \frac{1}{p_1 + p_2} \geq \frac{1 + \eps}{2}.
\]
\end{restatable}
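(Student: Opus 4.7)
The proof is pure algebra; the plan is to verify each of the two inequalities separately by clearing the positive denominator $p_1+p_2$ and reducing each to one of the two hypotheses. First I would note that $p_1+p_2>0$: if $p_1+p_2=0$ then $p_1=p_2=0$, contradicting $p_1-p_2\geq\eps>0$, so division by $p_1+p_2$ is legitimate throughout.

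For the left inequality, the plan is to multiply
\[
\frac{p_1}{p_1+p_2}\;\geq\;\frac{1}{2}+\frac{\eps}{2}\cdot\frac{1}{p_1+p_2}
\]
through by $p_1+p_2>0$, which yields $2p_1\geq (p_1+p_2)+\eps$, equivalently $p_1-p_2\geq\eps$. This is exactly the hypothesis $p_2\leq p_1-\eps$, so the inequality is immediate by reversing the manipulations.

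For the right inequality, the plan is to subtract $\tfrac{1}{2}$ from both sides of
\[
\frac{1}{2}+\frac{\eps}{2}\cdot\frac{1}{p_1+p_2}\;\geq\;\frac{1+\eps}{2},
\]
which reduces it to $\frac{\eps}{p_1+p_2}\geq \eps$; since $\eps>0$, this is equivalent to $p_1+p_2\leq 1$, which is the second hypothesis.

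There is no genuine obstacle here — both chains of equivalences are strict and reversible, so the proof consists of two short algebraic reductions with a brief remark ruling out the degenerate case $p_1+p_2=0$.
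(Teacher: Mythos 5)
Your proof is correct and follows essentially the same elementary algebraic route as the paper: the paper rewrites $\tfrac{1}{2}$ as $\tfrac{1}{2}\cdot\tfrac{p_1+p_2}{p_1+p_2}$ and invokes $p_1 \geq p_2+\eps$ in the numerator, whereas you multiply through by $p_1+p_2$ first; these are the same manipulation up to presentation order. Your explicit remark that $p_1+p_2>0$ is a sound addition that the paper leaves implicit.
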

\begin{proof}
We have
\als{
\frac{p_1}{p_1 + p_2}
&\geq \frac{1}{2} \cdot \frac{p_1}{p_1 + p_2} + \frac{1}{2} \cdot \frac{p_2}{p_1 + p_2} + \frac{\eps}{2} \cdot \frac{1}{p_1 + p_2} \\
&= \frac{1}{2} + \frac{\eps}{2} \cdot \frac{1}{p_1 + p_2} \\
&\geq \frac{1}{2} + \frac{\eps}{2}~,
}
where for the first inequality we used that $p_1 \geq p_2 + \eps$ and for the second we used that $p_1 + p_2 \leq 1$.
\end{proof}
\begin{restatable}{proposition}{propLogisticInf}
\label{prop:logisticInf}
For any $p_1,p_2 \geq 0$,
\[
    \inf_{s \in \simplex{2}} -p_1 \ln s_1 + -p_2 \ln s_2 = -p_1 \ln \frac{p_1}{p_1 + p_2} - p_2 \ln \frac{p_2}{p_1 + p_2}.
\]
\end{restatable}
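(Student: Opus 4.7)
The plan is to recognize the infimum as an instance of Gibbs' inequality (non\-/negativity of the KL divergence). Writing $s_2 = 1 - s_1$ and differentiating would also work, but the KL argument avoids case analysis on stationarity and deals with boundary behavior uniformly under the usual convention $0 \ln 0 = 0$.

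First I would dispatch degenerate cases. If $p_1 = p_2 = 0$ both sides are $0$. If exactly one of $p_1,p_2$ is zero (say $p_2 = 0$), then the objective reduces to $-p_1 \ln s_1$, which is minimized on $\simplex{2}$ at $s_1 = 1$ with value $0$; the right-hand side is $-p_1 \ln 1 - 0 = 0$ under the convention $0 \ln 0 = 0$, so the identity holds.

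For the main case $p_1, p_2 > 0$, set $T \eqdef p_1 + p_2$ and define $q \in \simplex{2}$ by $q_i \eqdef p_i/T$. For any $s \in \simplex{2}$ with $s_1, s_2 > 0$, Gibbs' inequality gives
\[
    q_1 \ln \frac{q_1}{s_1} + q_2 \ln \frac{q_2}{s_2} \geq 0,
\]
with equality iff $s = q$. Multiplying by $T$ and rearranging yields
\[
    -p_1 \ln s_1 - p_2 \ln s_2 \;\geq\; -p_1 \ln q_1 - p_2 \ln q_2 \;=\; -p_1 \ln \frac{p_1}{p_1+p_2} - p_2 \ln \frac{p_2}{p_1+p_2}.
\]
Boundary points of $\simplex{2}$ (i.e.\ $s_1 = 0$ or $s_2 = 0$) give an objective value of $+\infty$ since $p_1, p_2 > 0$, so they cannot improve the bound. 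Finally, taking $s = q$ shows the lower bound is attained, so the infimum equals the right-hand side.

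The only subtlety is the bookkeeping at the boundary of the simplex and when some $p_i = 0$, all handled by the $0 \ln 0 = 0$ convention; there is no genuine obstacle.
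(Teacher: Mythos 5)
Your proof is correct and takes a genuinely different route from the paper's. The paper factors out $p_1 + p_2$ to rewrite the objective as $(p_1+p_2)\inf_{s\in\simplex{2}}\pb{-q\ln s_1 - (1-q)\ln s_2}$ with $q = p_1/(p_1+p_2)$, dispatches $q \in \cb{0,1}$ directly, and otherwise locates the minimizer by solving the first-order condition $\frac{q}{s^*} - \frac{1-q}{1-s^*} = 0$, implicitly invoking convexity to upgrade the stationary point to a global minimizer. You instead normalize to the probability vector $q = p/(p_1+p_2)$ and invoke Gibbs' inequality (nonnegativity of KL divergence), which delivers the lower bound without solving any stationarity equation, handles boundary points of the simplex uniformly (objective $=+\infty$ there when $p_1,p_2>0$), and, with the $0\ln 0 = 0$ convention you state explicitly, absorbs the degenerate $p_i = 0$ cases that the paper treats via $q\in\cb{0,1}$. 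Both are sound; yours is slightly more self-contained on the boundary bookkeeping and avoids the implicit convexity step, at the cost of invoking a named inequality rather than a bare derivative computation.
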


\begin{proof}
We have
\[
    \inf_{s \in \simplex{2}} -p_1 \ln s_1 - p_2 \ln s_2 = (p_1 + p_2) \inf_{s \in \simplex{2}} -q \ln s_1 + -(1 - q) \ln s_2,
\]
where $q \eqdef \frac{p_1}{p_1 + p_2}$.
If $q \in \cb{0,1}$, the infimum is taken at $s = \pb{q, 1 - q}$.
Otherwise, the infimum above is taken at $s = (s^*, 1 - s^*)$ where $s^*$ satisfies $s^* \in (0,1)$ and
\[
    \frac{q}{s^*} - \frac{1 - q}{1 - s^*} = 0,
\]
that is, $s^* = q$, which gives the result.
\end{proof}

\end{document}